\definecolor{commentcolor}{rgb}{0,0,0}
\newtheorem{theorem}{Theorem}
\newtheorem{lemma}{Lemma}
\newtheorem{proposition}{Proposition}
\newenvironment{mytheorem}[1]{%
	
	\theorem
}{\endtheorem}
\def\a{\mathbf{a}}
\def\b{\mathbf{b}}
\def\c{\mathbf{c}}
\def\A{\mathbf{A}}
\def\d{\mathbf{d}}
\def\E{\mathbf{E}}
\def\K{\mathbf{K}}
\def\R{\mathbf{R}}
\def\M{\mathbf{M}}
\def\n{\mathbf{n}}
\def\p{\mathbf{p}}
\def\Q{\mathbf{Q}}
\def\x{\mathbf{x}}
\def\y{\mathbf{y}}
\def\Q{\mathbf{Q}}
\DeclareMathOperator*{\trace}{trace}
\DeclareMathOperator*{\quot}{quot}
\DeclareMathOperator*{\rank}{rank}
\begin{document}
%
\title{On Relative Pose Recovery for Multi-Camera Systems}
%
%
%
%

\author{Ji Zhao \ and \ Banglei Guan
\IEEEcompsocitemizethanks{\IEEEcompsocthanksitem J. Zhao is in Beijing, China.
	\protect\\
E-mail: zhaoji84@gmail.com
\IEEEcompsocthanksitem B. Guan is with College of Aerospace Science and Engineering, National University of Defense Technology, Changsha 410073, China.\protect\\
E-mail: guanbanglei12@nudt.edu.cn
}
\thanks{(Corresponding author: Ji Zhao)}
}

%
%

\markboth{}
{Shell \MakeLowercase{\textit{et al.}}: Bare Demo of IEEEtran.cls for Computer Society Journals}
%



\IEEEtitleabstractindextext{%
\begin{abstract}
The point correspondence (PC) and affine correspondence (AC) are widely used for relative pose estimation. An AC consists of a PC across two views and an affine transformation between the small patches around this PC. Previous work demonstrates that one AC generally provides three independent constraints for relative pose estimation. For multi-camera systems, there is still not any AC-based minimal solver for general relative pose estimation. To deal with this problem, we propose a complete solution to relative pose estimation from two ACs for multi-camera systems, consisting of a series of minimal solvers. The solver generation in our solution is based on Cayley or quaternion parameterization for rotation and hidden variable technique to eliminate translation. This solver generation method is also naturally applied to relative pose estimation from PCs, resulting in a new six-point method for multi-camera systems. A few extensions are made, including relative pose estimation with known rotation angle and/or with unknown focal lengths. Extensive experiments demonstrate that the proposed AC-based solvers and PC-based solvers are effective and efficient on synthetic and real-world datasets.
\end{abstract}

\begin{IEEEkeywords}
Relative pose estimation, minimal solver, multi-camera system, affine correspondence, point correspondence, two view geometry, epipolar geometry
\end{IEEEkeywords}}

\maketitle

\IEEEdisplaynontitleabstractindextext

%
\IEEEpeerreviewmaketitle

\IEEEraisesectionheading{\section{Introduction}}

\IEEEPARstart{E}{stimating} the relative pose of images given feature correspondences is one foundational task in geometric vision. It is crucial for many popular tasks, such as structure-from-motion, simultaneous localization and mapping, autonomous driving, augmented reality, etc. 
Despite its long history, relative pose estimation is still an active research area. A lot of methods have been developed to improve its accuracy, efficiency, numerical stability, and robustness.

Camera models play an important role in relative pose estimation. While a single camera can be modeled by a pinhole or perspective camera model~\cite{hartley2003multiple}, more complicated cameras such as multi-camera systems should be modeled
by the generalized camera model~\cite{pless2003using}. A generalized camera is formed by abstracting landmark observations into spatial rays that are no longer required to originate from a common point (i.e., the focal point). This paper deals with both a single camera and a multi-camera rig of rigidly attached cameras. Multi-camera systems are popular in autonomous driving~\cite{geiger2013vision} and many robotic applications.
It is well-known that the standard epipolar geometry exists an unobservability in the scale of the translation~\cite{hartley2003multiple}. In contrast, the scale of translation estimated for a multi-camera system is generally unique. The down-side of this scale observability is that the minimal solution of the relative pose requires at least $6$ instead of only $5$ point correspondences (PCs) across the two views.

Since feature correspondences inevitably contain outliers, a robust estimator should be applied to recover the correct pose and remove outliers. In computer vision and robotics communities, the most popular framework is random sample consensus (RANSAC)~\cite{fischler1981random} and its variants. The core component in RANSAC is a minimal solver. 
Minimal solvers for relative pose estimation usually rely on PCs. According to the well-known epipolar geometry, each PC provides one constraint for a relative pose.
For single cameras, the representative methods for relative pose estimation are called five-point methods~\cite{nister2004efficient,stewenius2006recent,li2006five,kneip2012finding,Kukelova12polynomial,fathian2018quest}.
For multi-camera systems, the development of minimal solvers for relative pose estimation ranges back to the six-point method~\cite{stewenius2005solutions}. Later, many other works have been subsequently proposed, such as the 17-point linear solvers~\cite{li2008linear}, an efficient solver based on iterative optimization~\cite{kneip2014efficient}, and a solver based on global optimization~\cite{zhao2020certifiably}.

The efficiency of RANSAC depends on the runtime of the minimal solver and iteration number. 
Using fewer feature correspondences in minimal solvers will cause fewer iterations. 
Thus much effort has been made to reduce the required number of feature correspondences in minimal solvers while keeping their efficiency. 
In recent years, there are a series of work exploiting affine correspondences (ACs) to estimate relative pose~\cite{bentolila2014conic,raposo2016theory,barath2017minimal,barath2018efficient,eichhardt2018affine,barath2020making}. Since one AC provides more independent constraints than one PC, fewer ACs than PCs are needed to solve pose estimation problems. Several minimal solvers using ACs have been developed for different tasks. For single cameras, there are pose estimation methods for general 5~degrees of freedom (DOF) motion~\cite{barath2017minimal}, 3DOF motion with known rotation axis~\cite{guan2019minimal}, and 2DOF motion with planar motion~\cite{guan2019minimal,hajder2020relative}. For multi-camera systems, there are pose estimation methods for motion with known rotation axis and planar motion~\cite{guan2020relative}.

\begin{figure}[tpb]
	\begin{center}
		\subfigure[inter-camera ACs]
		{
			\includegraphics[width=0.47\linewidth]{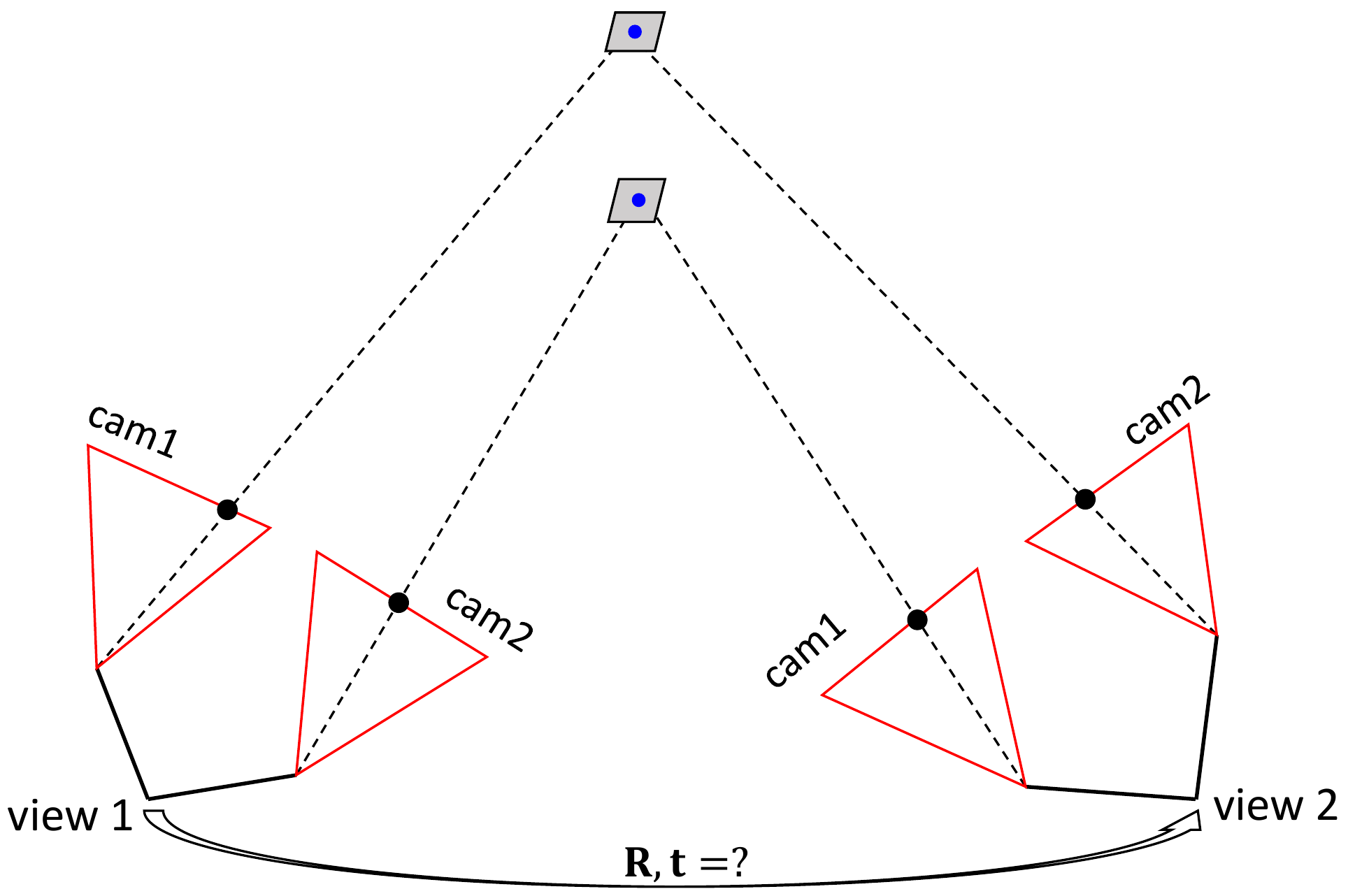}
		}
		\subfigure[intra-camera ACs]
		{
			\includegraphics[width=0.47\linewidth]{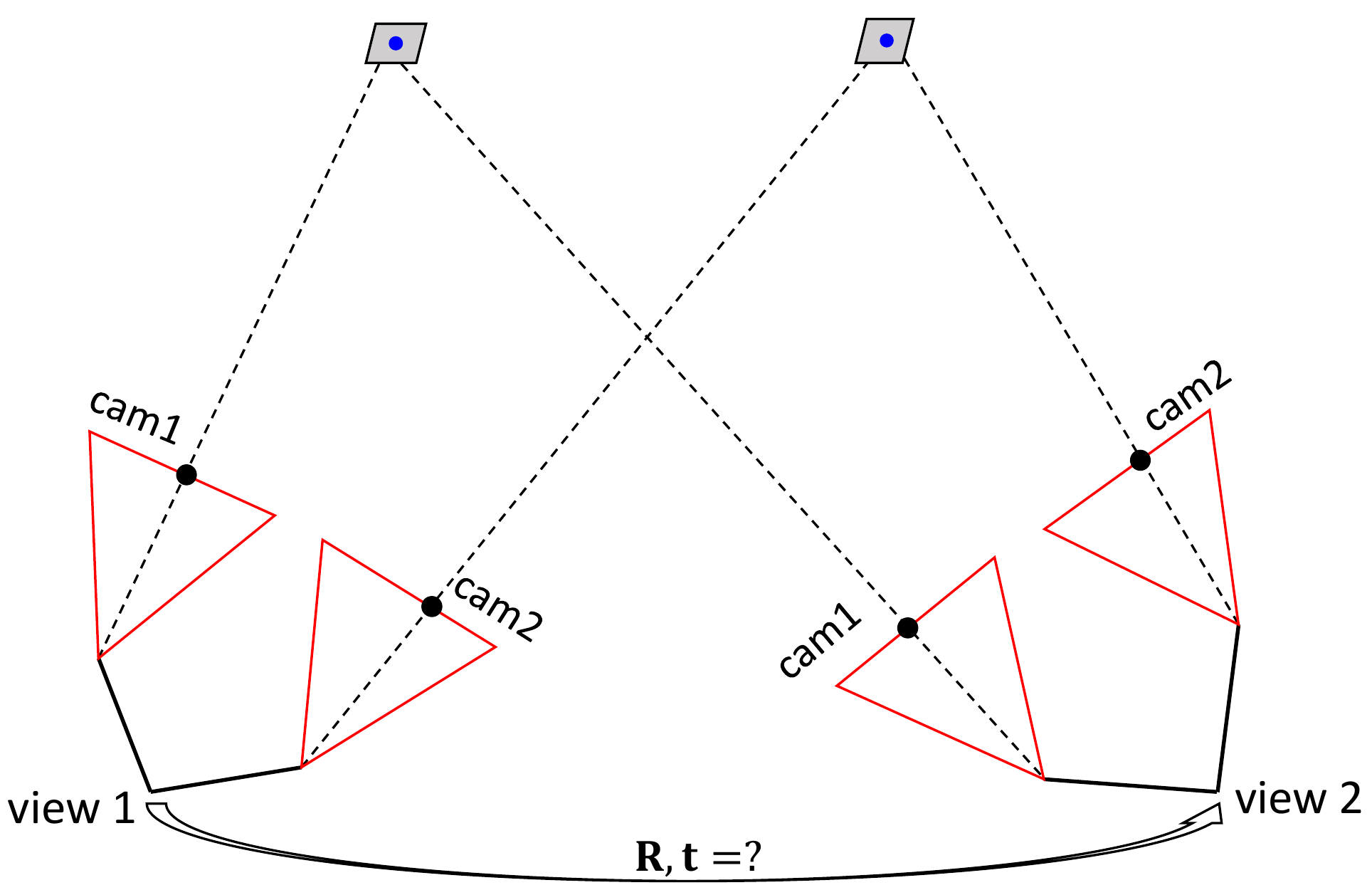}
		}
	\end{center}
	\vspace{-0.1in}
	\caption{Relative pose estimation from ACs for a multi-camera system. Two oriented points (or infinitesimal patches equivalently) are captured by two views of a two-camera rig. We aim to recover the 6DOF relative pose using two ACs.}
	\label{fig:teaser_ac_pose}
\end{figure}

This paper mainly deals with the full DOF relative pose estimation for multi-camera systems, see Fig.~\ref{fig:teaser_ac_pose}. Due to the diversity of camera layout for multi-camera systems, the minimal solvers are complicated. 
Fig.~\ref{fig:teaser_ac_pose} demonstrates the relative pose estimation from inter-camera ACs and intra-camera ACs. The corresponding solvers for these two configurations are different.
In this paper, the proposed AC-based minimal solvers form a complete solution to relative pose estimation for multi-camera systems. 
We also demonstrated that our method is versatile to be applied to PCs and many extensions straightforwardly.
The contributions of this paper are three-fold.
\begin{itemize}
	\item We mainly deal with full DOF relative pose estimation from ACs for multi-camera systems. A series of minimal solvers are proposed, which cover all camera layouts and AC types of this problem. The proposed minimal solvers form a complete solution to this problem.
	To the best of our knowledge, this is the first solution that can provide minimal solvers to this problem.

	\item Using the same equation system construction and solver generation, two minimal solvers are proposed for full DOF relative pose estimation from PCs. 
	They are new six-point methods for popular configurations of two-camera rigs.

	\item Our solver generation is based on a unified and versatile framework, which exploits Cayley or quaternion parameterization for rotation and uses the hidden variable technique to eliminate translation. We proved that a factor should be factored out to avoid false roots and simplify the equation system. This framework has broad applications in relative pose estimation. In the supplementary material, some minimal solvers are generated with a known rotation angle and/or unknown focal lengths. 
\end{itemize}

The paper is organized as follows. Section~\ref{sec:related_work} introduces the related work. 
In Section~\ref{sec:minimal_config},  minimal configurations and degenerate cases for relative pose estimation from ACs are figured out. 
For both single cameras and multi-camera systems, Section~\ref{sec:complete_solution} proposes a complete solution and a series of AC-based minimal solvers for relative pose estimation.
A few PC-based minimal solvers for multi-camera systems are provided in Section~\ref{sec:pt_pose}.  
Section~\ref{sec:experiment} presents the performance of our method in comparison to other methods, followed by a concluding discussion in Section~\ref{sec:conclusion}.

\section{Related Work}
\label{sec:related_work}
Relative pose estimation is an essential topic in computer vision, and there are many methods in this area. We introduce a taxonomy of these methods along four independent axes.

First, relative pose estimation methods can be applied for single cameras~\cite{nister2004efficient,stewenius2006recent,li2006five,kneip2012finding,Kukelova12polynomial,fathian2018quest}, multi-camera systems or generalized cameras~\cite{stewenius2005solutions,li2008linear,kneip2014efficient,zhao2020certifiably}. 
Moreover, the cameras include fully calibrated cameras and partially calibrated cameras with unknown focal length and/or radial distortion.

Second, the exploited geometric primitives can be points~\cite{nister2004efficient,stewenius2005solutions}, oriented points~\cite{bentolila2014conic,raposo2016theory}, hybrid of point and line structures~\cite{duff2019plmp,fabbri2020trplp,zhao2020minimal}, etc.
Recently, oriented points (or an infinitesimal patch equivalently) have drawn much attention. 
The images of an oriented point captured by different views form an AC between affine-covariant features. 
One AC provides a $2\times 2$ affine transformation matrix between the local patches around the corresponding points in two images.  Two-view geometry of AC has been investigated~\cite{bentolila2014conic,raposo2016theory,barath2018efficient,eichhardt2018affine}, and several methods of relative pose from ACs has been proposed~\cite{bentolila2014conic,raposo2016theory,barath2017minimal,barath2018efficient,eichhardt2018affine,barath2020making}. 
In addition, there are relative pose estimation methods~\cite{barath2018five,barath2018recovering,barath2019homography} exploit orientation- and scale-covariant features obtained by e.g. the SIFT detector~\cite{lowe2004distinctive}.

Third, a relative pose estimation method can be classified as a minimal solver~\cite{nister2004efficient,stewenius2005solutions}, a non-minimal solver~\cite{kneip2013direct,kneip2014efficient,zhao2020certifiably,zhao2020efficient} or a linear solver~\cite{hartley1997defence,li2008linear}. 
Minimal solvers use the minimum required number of geometric primitives to estimate relative pose.
Non-minimal solvers use all correct correspondences to estimate an accurate relative pose. The linear solvers also use larger-than-minimal correspondences, but their primary purpose is finding a simple and efficient solution. Linear solvers usually ignore certain implicit constraints for unknowns to gain efficiency. In contrast, minimal solvers and non-minimal solvers consider all the implicit constraints for unknowns. This paper focuses on minimal solvers.

Fourth, existing methods might estimate the pose with or without a prior. One popular scene prior is the planar assumption. If 3D points lie on a plane, the relative pose can be recovered by estimating homography using 4~PCs~\cite{hartley2003multiple} or 1AC+1PC~\cite{barath2017theory}. 
Usually, the pose priors include known rotation axis prior, known rotation angle prior, planar motion prior, or Ackermann motion prior.  
Common ways to obtain the directional direction are given by vanishing point estimation or sensor fusion with an inertial measurement unit (IMU) to measure the direction of gravity. The known rotation axis prior can reduce the DOF of rotation by two~\cite{naroditsky2012two,lee2014relative,sweeney2014solving,saurer2016homography,ding2020homography,ornhag2020minimal}. 
The known rotation angle prior comes from a camera-IMU module with unknown extrinsic parameters, which reduces the DOF of rotation by one~\cite{li2013point,li2020relative,martyushev2020efficient}. 
The planar motion prior can reduce the DOF of rotation and translation by two and one, respectively~\cite{choi2018fast}.
The Ackermann motion prior reduces the DOF to one for a single camera~\cite{scaramuzza20111} and two for a multi-camera system~\cite{lee2013motion}.
In addition, there is a small-rotation prior. In this case, the rotation can be approximated by its first-order approximation~\cite{ventura2015efficient}.

The combination of the four aspects mentioned above produces many subcategories. 
Providing a comprehensive survey is beyond the scope of this paper. In the following text, we focus on four subcategories from the combination of two camera types (single camera or multi-camera system) and two correspondence types (PC or AC).

{\bf Single camera and PCs}: Usually, five PCs are enough to estimate the relative pose of a single camera. In addition to the well-known five-point method~\cite{nister2004efficient}, there are many other five-point methods, such as~\cite{stewenius2006recent,li2006five,kneip2012finding,Kukelova12polynomial,fathian2018quest}.
There are a lot of solvers with motion priors, such as known rotation axis~\cite{naroditsky2012two,sweeney2014solving,saurer2016homography}, known axis angle~\cite{li2013point,li2020relative,martyushev2020efficient}, planar motion~\cite{choi2018fast}, and Ackermann motion~\cite{scaramuzza20111}. 
If the camera is uncalibrated, there are some solvers that simultaneously estimate pose and intrinsic parameters, such as focal length~\cite{stewenius2005minimal,hartley2012an,Kukelova12polynomial,kukelova2017clever} and/or radial distortion~\cite{kukelova2011minimal,kuang2014minimal,kukelova2017clever}.

{\bf Single camera and ACs}: 
One AC usually provides $3$ independent constraints on relative pose~\cite{bentolila2014conic}. As a result, two ACs are enough to determine the relative pose for a single camera~\cite{raposo2016theory,eichhardt2018affine,barath2018efficient} or a single camera with unknown focal length~\cite{barath2017minimal}. 
In~\cite{barath2020making}, guidelines are proposed for effective usage of ACs in the course of a full model estimation pipeline. 
When the relative pose has a motion prior, there are simple and efficient solvers. For example, there are many customized solvers when the rotation axis is known~\cite{guan2019minimal}, the motion is under planar motion~\cite{guan2019minimal,hajder2020relative}, or the depth of feature points is known~\cite{eichhardt2020relative}.

{\bf Multi-camera system and PCs}: The first minimal solver for multi-camera systems was proposed in~\cite{stewenius2005solutions}, which uses $6$ PCs to estimate the relative pose. A linear solver that takes $17$ PCs was proposed in~\cite{li2008linear}. Some solvers are developed in the context of structure-from-motion with special configurations~\cite{zheng2015structure,kasten2019resultant}. 
There are a few non-minimal solvers, which use local optimization~\cite{kneip2014efficient} or global optimization method~\cite{zhao2020certifiably} to find the optimal relative poses. There are also solvers with motion priors, such as known rotation axis~\cite{lee2014relative,sweeney2014solving,liu2017robust}, a known rotation angle~\cite{martyushev2020efficient}, and Ackermann motion~\cite{lee2013motion}. A first-order approximation to relative pose was used in~\cite{ventura2015efficient} to obtain an efficient solver.

{\bf Multi-camera system and ACs}: 
Using ACs to estimate the relative pose of multi-camera systems is a relatively new research area. For non-degenerate cases, 2~ACs are enough for 6DOF pose estimation. A linear solution using $6$~ACs was proposed in~\cite{alyousefi2020multi}, which generalizes the $17$-point solver for PCs~\cite{li2008linear}. 
Two minimal solvers were proposed for motion with known vertical direction or planar motion~\cite{guan2020relative}.

\section{Minimal Configurations for Pose Estimation from Affine Correspondences}
\label{sec:minimal_config}

In this section, the intrinsic and extrinsic parameters of multi-camera systems are assumed to be calibrated. Our purpose is to find all the minimal configurations for the relative pose estimation by using ACs.

\subsection{Two-View Geometry for A Single Camera}
Denote the $k$-th AC as $(\x_k, \x'_k, \A_k)$, where $\x_k$ and $\x'_k$ are the homogeneous coordinates in normalized image plane for the first and the second views, respectively. $\A_k$ is a $2\times 2$ local affine transformation, which relates the infinitesimal patches around $\x_k$ and $\x'_k$~\cite{raposo2016theory}.
Generally speaking, one AC provides three independent constraints: one is derived from the point correspondence $(\x_k, \x'_k)$, and two are derived from the affine transformation $\A_k$. 
Denote the relative rotation and translation from the first view to the second view as $\R$ and $\mathbf{t}$, respectively.

The constraint introduced by the point correspondence $(\x_k, \x'_k)$ is~\cite{hartley2003multiple}
\begin{align}
\x_k'^T \E \x_k = 0,
\label{eq:constraint_epipolar}
\end{align}
where
\begin{align}
\E = [\mathbf{t}]_\times \R.
\label{eq:essential}
\end{align}
This constraint is the well-known epipolar constraint in two-view geometry. $\E$ is known as the essential matrix.

The two constraints introduced by the affine transformation $\A$ can be written as~\cite{barath2018efficient}
\begin{align}
(\E^T \x_k')_{(1:2)} + \A_k^{T} (\E \x_k)_{(1:2)} = \mathbf{0},
\label{eq:constraint_affine}
\end{align}
where the subscript $(1:2)$ represents the first two entries of a vector. 

\subsection{Two-View Geometry for A Multi-Camera System}

\begin{figure}[tpb]
	\begin{center}
		\includegraphics[width=0.8\linewidth]{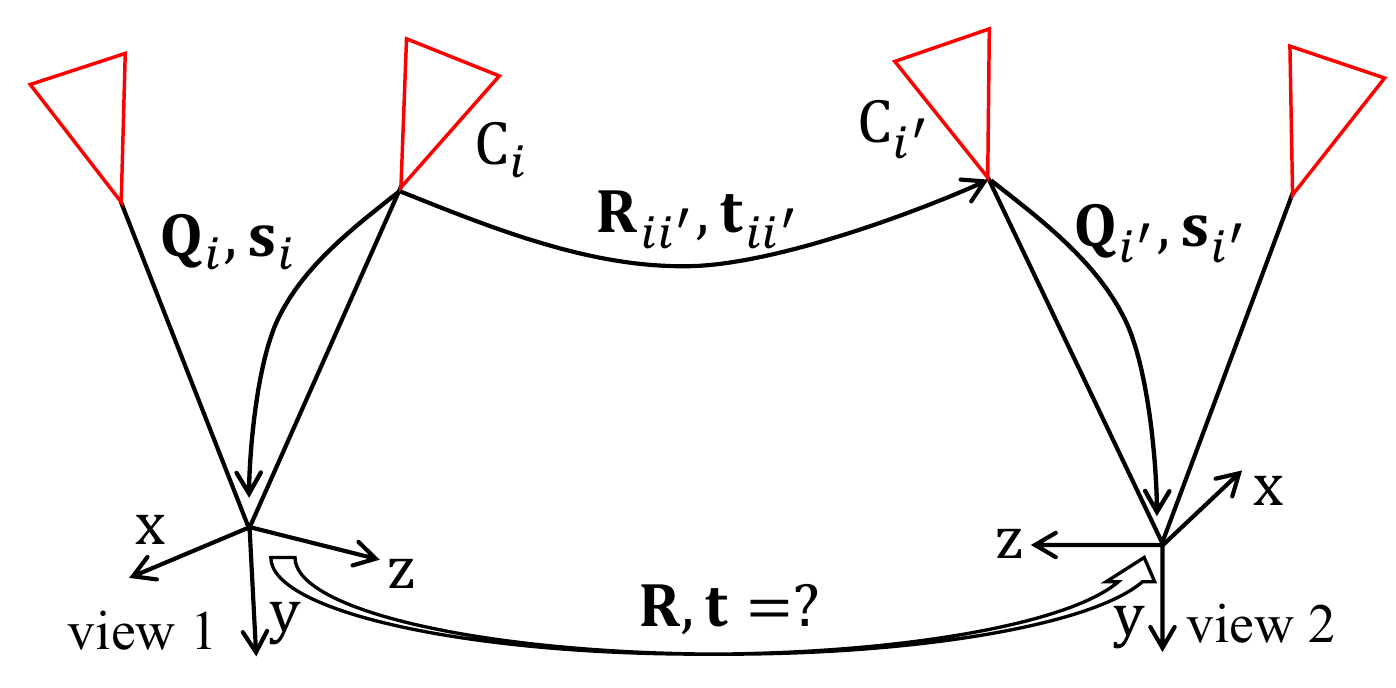}
	\end{center}
	\vspace{-0.1in}
	\caption{Illustration of the relative pose estimation for a multi-camera system.}
	\label{fig:multi_cam_system_ac}
\end{figure}

A multi-camera system is composed of several perspective cameras. Denote the extrinsic parameters of the $i$-th camera as $\{\Q_i, \mathbf{s}_i\}$. Here $\Q_i$ and $\mathbf{s}_i$ represent relative rotation and translation to the reference of the multi-camera system. 
Denote the relative rotation and translation from the first view to the second view of the multi-camera system as $\R$ and $\mathbf{t}$, respectively.

An affine correspondence in a multi-camera system relates two perspective cameras across two views, see Fig.~\ref{fig:multi_cam_system_ac}. Denote the $k$-th AC as $(\x_k, \x'_k, i_k, i'_k, \A_k)$. It means that an oriented point is captured by the $i_k$-th camera in the first view, and its homogeneous coordinate in the normalized image plane is $\x_k$. It is also captured by the $i'_k$-th camera in the second view, and its homogeneous coordinate is $\x'_k$. 
To simplify the notation, we omit the subscript $k$ of camera indices $i$ and $i'$ in the following text.

If we consider cameras $i$ and $i'$ of an AC as one perspective camera across two views, the constraints of Eqs.~\eqref{eq:constraint_epipolar}~and~\eqref{eq:constraint_affine} should still hold. However, the essential matrix $\E$, $\R$, and $\mathbf{t}$ in these two equations should be newly defined. In addition, the essential matrices for different ACs are usually different. 
Specifically, the constraints introduced by the $k$-th AC are 
\begin{subequations}
	\begin{empheq}[left=\empheqlbrace]{align}
	& \x_k'^T \E_k \x_k = 0 \label{equ:sub1} \\
	& (\E_k^T \x_k')_{(1:2)} + \A_k^{T} (\E_k \x_k)_{(1:2)} = \mathbf{0} \label{equ:sub2}
	\end{empheq}
\end{subequations}
where
\begin{align}
\E_k = [\mathbf{t}_{ii'}]_\times \R_{ii'}.
\label{eq:essential_gcam}
\end{align}
$\{\R_{ii'}, \mathbf{t}_{ii'}\}$ is the relative pose from camera $i$ in the first view to camera $i'$ in the second view. It is determined by a composition of three transformations
\begin{align}
\begin{bmatrix}
\R_{ii'} & {\mathbf{t}_{ii'}}\\
{{\mathbf{0}}}&{1}\\
\end{bmatrix} 
= &
\begin{bmatrix}
{\Q_{i'}}&{\mathbf{s}_{i'}}\\
{{\mathbf{0}}}&{1}\\
\end{bmatrix}^{-1}
\begin{bmatrix}
\R&{\mathbf{t}}\\
{{\mathbf{0}}}&{1}\\
\end{bmatrix}
\begin{bmatrix}
{\Q_{i}}&{\mathbf{s}_{i}}\\
{{\mathbf{0}}}&{1}\\
\end{bmatrix} \nonumber \\
= & \begin{bmatrix} {\Q_{i'}^T \R \Q_i} & \Q_{i'}^T (\R \mathbf{s}_{i} + \mathbf{t} - \mathbf{s}_{i'})\\
{{\mathbf{0}}}& \ {1}\\
\end{bmatrix}.
\label{eq:transformation}
\end{align}
By substituting $\R_{ii'}$ and $\mathbf{t}_{ii'}$ into Eq.~\eqref{eq:essential_gcam}, the essential matrix $\E_k$ becomes
\begin{align}
\E_k & = [\Q_{i'}^T (\R \mathbf{s}_{i} + \mathbf{t} - \mathbf{s}_{i'})]_\times \Q_{i'}^T \R \Q_i \nonumber \\
& = \Q_{i'}^T [\R \mathbf{s}_i + \mathbf{t} - \mathbf{s}_{i'}]_\times \R \Q_i \nonumber \\
& = \Q_{i'}^T (\R [\mathbf{s}_i]_\times + [\mathbf{t} - \mathbf{s}_{i'}]_\times \R) \Q_i.
\label{eq:essential_matrix}
\end{align}
To derivate the second and third equalities, a property that $[\R \mathbf{t}]_\times \R = \R [\mathbf{t}]_\times, \forall \R \in \text{SO}(3)$ is exploited. This formula changes $\E_k$ from being quadratic in $\R$ to linear in $\R$. 
It can be seen that Eqs.~\eqref{equ:sub1} and~\eqref{equ:sub2} is bilinear in the unknown $\R$ and $\mathbf{t}$. Notably, these two equations are homogeneous in $\R$ and are not homogeneous in $\mathbf{t}$. This inhomogeneity is the key that the scale of $\mathbf{t}$ can be recovered. 
In Section~\ref{sec:degenerate}, it shows that there are degenerate cases when the inhomogeneity disappears.

\subsection{Minimal Configurations}

\begin{table}[tbp]
	\caption{Excess constraints for different configurations. There are $n$~ACs across $m$~views.}
	\label{tab:configurations}
	\centering
	\begin{tabular}{|c|*{7}{c|}}
		\hline
		\backslashbox[3em]{$m$}{$n$}
		& $1$ & $2$ & $3$ & $4$ & $5$ & $6$ & $7$\\
		\hline
		$1$ & $-3$ & $-6$ & $-9$ & $-12$ & $-15$ & $-18$ & $-21$ \\\hline
		$2$ & $-3$ & $\mathbf{0}$ & $3$ & $6$ & $9$ & $12$ & $15$ \\\hline
		$3$ & $-3$ & $6$ & $15$ & $24$ & $33$ & $42$ & $51$ \\\hline
		$4$ & $-3$ & $12$ & $27$ & $42$ & $57$ & $72$ & $87$ \\\hline
	\end{tabular}
\end{table}

For a single camera, it is demonstrated that two ACs are sufficient to recover the relative pose~\cite{raposo2016theory}. Generally speaking, two ACs provide $6$ independent constraints, and there are 5DOF in the relative pose of a single camera. Thus there is one excess constraint. We can ignore excess constraints during the solver generation procedure.

For multi-camera systems, the minimal configurations are complicated. 
Given $n$ ACs across $m$ views, we will figure out all of the minimal configurations. Assume that one AC is captured by $m$-views of a multi-camera system. In this case, there are two kinds of unknowns. First, there are $5$ unknowns for each oriented point in 3D space corresponding to an AC, including $3$ for point position and $2$ for direction. Second, there are $6(m-1)$ unknowns for the pose of $m$ views considering the world reference can only be determined up to a rigid motion. In summary, there are $5n + 6(m-1)$ unknowns. 
Each AC introduces $2m$ knowns for point position and $4(m-1)$ independent parameters for affine transformations. In summary, there are $2mn+4(m-1)n$ knowns. 

A configuration belongs to a minimal problem if and only if the number of unknowns equals the number of knowns. 
The results of excess constraint, i.e., the number of knowns minus the number of unknowns, are shown in Table~\ref{tab:configurations}. Zero indicates a minimal configuration. Positive numbers and negative numbers indicate over-determined and under-determined configurations, respectively.
Since $m$ and $n$ are positive integers, $m = n = 2$ corresponds to the only minimal configuration. It should be mentioned that Table~\ref{tab:configurations} summarizes the results for general settings. It is not necessarily held for special cases such as the relative pose has a motion prior, or the ACs are under partial visibility in multiple views. 

There are many variants of minimal configurations. For example, 1AC+3PC across two views is also a minimal configuration. Another example is that one oriented point is captured by many perspective cameras in a view. It is redundant to enumerate all these variants. However, the solver generation procedure in this paper can be extended to these configurations in a straightforward way.

\subsection{Categories of Two ACs Across Two Views}
In the previous analysis, we draw a conclusion that two ACs across two views is a minimal configuration for a multi-camera system. 
By excluding one symmetry between individual perspective cameras and one symmetry between the two views, this configuration can be further classified into $9$ cases, see Fig.~\ref{fig:ac_type}. 
In the following, the AC type $(i, i')$ means the AC appears in the $i$-th camera in the first view and the $i'$-th camera in the second view.

Among the $9$ cases, the most common ones in practice are cases $6$ and $7$ for two-camera rigs. As shown in Fig.~\ref{fig:ac_type}, case $6$ uses inter-camera ACs, which is suitable for two-camera rigs with extensive overlapping of views. Case $7$ uses intra-camera ACs, which is suitable for two-camera rigs with non-overlapping or small-overlapping of views. 
Cases $4$ and $5$ are useful for the incremental structure-from-motion based on a novel structure-less camera resection~\cite{zheng2015structure,kasten2019resultant}, in which the collection of already reconstructed cameras are viewed as a generalized camera.
Cases $8$ and $9$ are degenerate since they can be viewed as that two ACs are captured by a single perspective camera from two views. 

\begin{figure}[tbp]
	\begin{center}
		\includegraphics[width=0.8\linewidth]{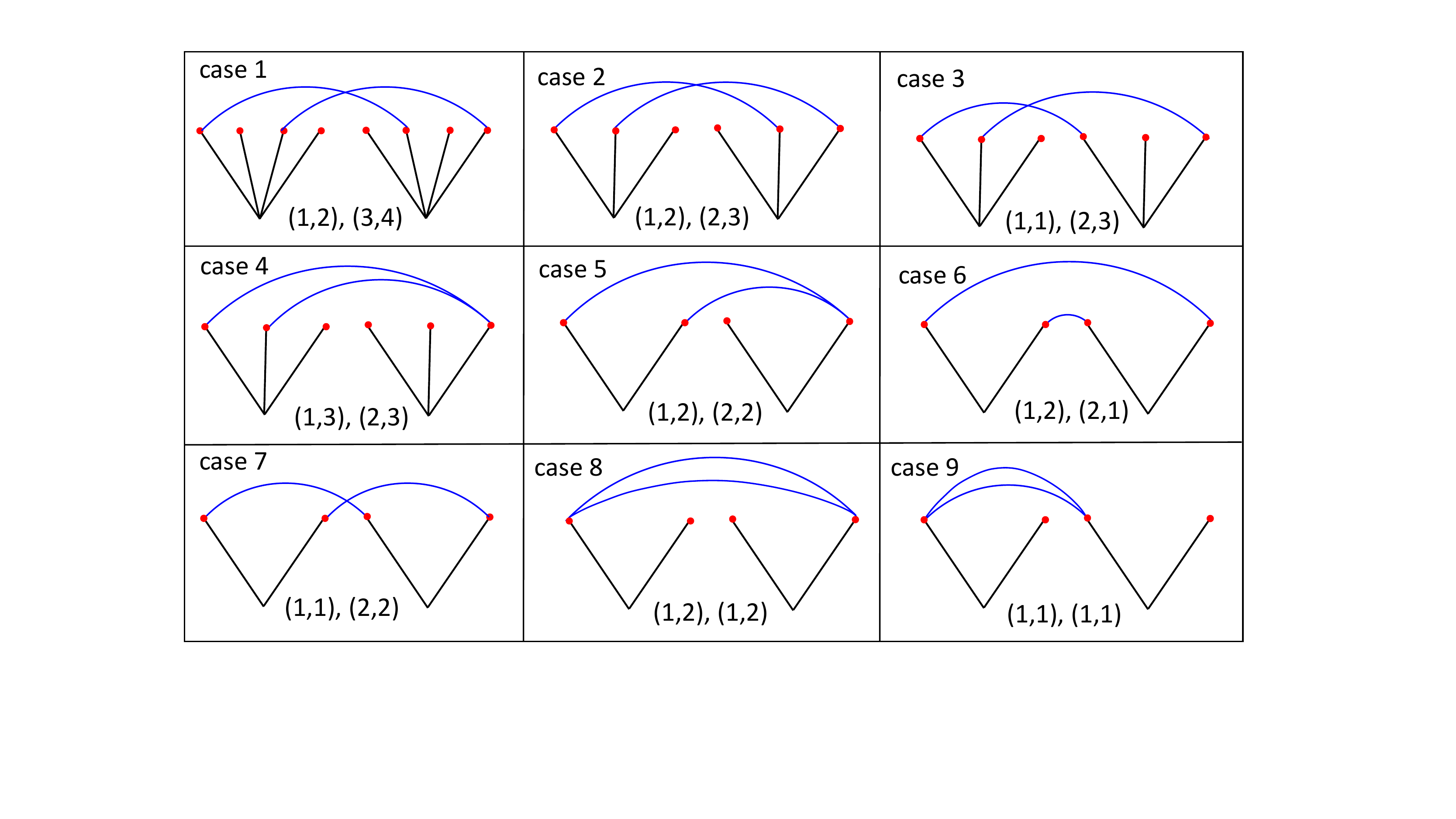}
	\end{center}
	\vspace{-0.1in}
	\caption{Nine types of affine correspondence. Red dots represent perspective cameras; blue arcs represent affine correspondences. A pair $(i, i')$ represents an AC that appears in the $i$-th camera in the first view and the $i'$-th camera in the second view. 
	}
	\label{fig:ac_type}
\end{figure}

\subsection{Degenerate Cases}
\label{sec:degenerate}

For multi-camera systems, the scale is observable due to $\E$ is inhomogeneous in translation $\mathbf{t}$. In some special configurations, $\E$ becomes homogeneous in translation~$\mathbf{t}$, which makes the scale unobservable.
We prove three degenerate cases for multi-camera systems. 

\begin{proposition}
	\label{theorem:inter_cam}
	For case~6 in Fig.~\ref{fig:ac_type}, when a multi-camera system undergoes pure translation and the translation direction is consistent with the baseline of two cameras, this configuration is degenerate. Specifically, the scale of translation cannot be recovered.
\end{proposition}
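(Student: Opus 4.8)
The plan is to show that, under the stated conditions, every essential matrix $\E_k$ collapses to a scalar multiple of a matrix that is independent of the translation magnitude, and then to invoke the fact that the constraints \eqref{equ:sub1} and \eqref{equ:sub2} are homogeneous in $\E_k$. First I would specialize the essential matrix formula \eqref{eq:essential_matrix} to pure translation by setting $\R = \mathbf{I}$. Using $[\mathbf{a}]_\times + [\mathbf{b}]_\times = [\mathbf{a}+\mathbf{b}]_\times$, this reduces to
\begin{align}
\E_k = \Q_{i'}^T [\mathbf{s}_i + \mathbf{t} - \mathbf{s}_{i'}]_\times \Q_i .
\end{align}
For case~6 both ACs are inter-camera, so each has a type $(i,i')$ with $\{i,i'\}=\{1,2\}$ for the two-camera rig. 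Writing the baseline as $\mathbf{b} = \mathbf{s}_1 - \mathbf{s}_2$ and imposing that the translation is aligned with it, $\mathbf{t} = \lambda \mathbf{b}$, the vector inside the skew-symmetric bracket becomes $\mathbf{s}_i + \mathbf{t} - \mathbf{s}_{i'} = (\mathbf{s}_i - \mathbf{s}_{i'}) + \lambda \mathbf{b} = c_{ii'}(\lambda)\,\mathbf{b}$, where $c_{12}(\lambda) = 1+\lambda$ and $c_{21}(\lambda) = \lambda - 1$. Hence
\begin{align}
\E_k = c_{ii'}(\lambda)\,\Q_{i'}^T [\mathbf{b}]_\times \Q_i ,
\end{align}
that is, a scalar absorbing the entire dependence on the translation magnitude multiplying a fixed matrix determined only by the rig geometry.

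Next I would observe that both constraints \eqref{equ:sub1}--\eqref{equ:sub2} are homogeneous of degree one in $\E_k$: the epipolar term $\x_k'^T \E_k \x_k$ and the affine term $(\E_k^T \x_k')_{(1:2)} + \A_k^T (\E_k \x_k)_{(1:2)}$ are both linear in $\E_k$, so replacing $\E_k$ by any nonzero scalar multiple leaves the system unchanged. Combining this with the factorization above, if the observed ACs are consistent with a true pose $\{\mathbf{I}, \lambda_0 \mathbf{b}\}$, then they are equally consistent with $\{\mathbf{I}, \lambda \mathbf{b}\}$ for every $\lambda$ on the open set where $c_{12}(\lambda)$ and $c_{21}(\lambda)$ remain nonzero. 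The magnitude $\lambda$ therefore cannot be determined from the measurements, which is precisely the statement that the scale of translation is unrecoverable, and matches the mechanism identified earlier, namely that the inhomogeneity of $\E_k$ in $\mathbf{t}$ disappears.

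The step I expect to carry the real weight is the reduction $\mathbf{s}_i + \mathbf{t} - \mathbf{s}_{i'} = c_{ii'}(\lambda)\,\mathbf{b}$: it holds only because the ACs are inter-camera (so $\mathbf{s}_i - \mathbf{s}_{i'} = \pm\mathbf{b}$) and because the translation is parallel to the baseline, so that both the rig offset and the motion lie along the same one-dimensional direction. I would also be careful to exclude the finitely many values of $\lambda$ annihilating a factor $c_{ii'}(\lambda)$ (where the corresponding $\E_k$ vanishes and that AC carries no information), and to note that for a genuine two-camera rig $\mathbf{b} \neq \mathbf{0}$, so $[\mathbf{b}]_\times \neq \mathbf{0}$ and the fixed matrices are well defined. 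No separate treatment of the rotation is needed since $\R = \mathbf{I}$ is fixed throughout.
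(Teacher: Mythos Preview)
Your proposal is correct and follows essentially the same approach as the paper: set $\R=\mathbf{I}$, reduce $\E_k$ to a scalar multiple of a fixed matrix by using that $\mathbf{s}_i-\mathbf{s}_{i'}$ and $\mathbf{t}$ are parallel, and conclude from the homogeneity of \eqref{equ:sub1}--\eqref{equ:sub2} in $\E_k$ that the translation scale is unobservable. The only cosmetic difference is that the paper parameterizes the baseline as $\mathbf{s}_i-\mathbf{s}_{i'}=a\,\mathbf{t}$ and writes $\E_k=(a+1)\,\Q_{i'}^T[\mathbf{t}]_\times\Q_i$, whereas you (more naturally, since the baseline is the known quantity) parameterize $\mathbf{t}=\lambda\mathbf{b}$ and obtain $\E_k=c_{ii'}(\lambda)\,\Q_{i'}^T[\mathbf{b}]_\times\Q_i$; your version also makes the homogeneity-in-$\E_k$ step and the handling of both AC orientations $(1,2)$ and $(2,1)$ explicit.
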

\begin{proof}
	Suppose the $k$-th AC appears in the $i$-th camera of view~1 and the $i'$-th camera of view~2. 
	Since the translation direction is parallel with baseline of two camera, translation satisfies $\mathbf{s}_i-\mathbf{s}_{i'} = a \mathbf{t}$, where $a$ is a unknown factor.
	In the case of pure translation,~\emph{i.e.}, ${\mathbf{R}=\mathbf{I}}$, 
	the essential matrix $\E_k$ in Eq.~\eqref{eq:essential_matrix} becomes
	\begin{align}
	\E_k &= \Q_{i'}^T ([\mathbf{t} + \mathbf{s}_i - \mathbf{s}_{i'}]_\times) \Q_i \nonumber \\
	&= (a+1) \Q_{i'}^T [\mathbf{t}]_\times \Q_i.
	\end{align}
	The essential matrix is homogeneous with $\mathbf{t}$. 
	For an arbitrary AC, the previous analysis holds. Thus the scale of translation cannot be recovered.
\end{proof}

\begin{figure}[tpb]
	\begin{center}
		\includegraphics[width=0.9\linewidth]{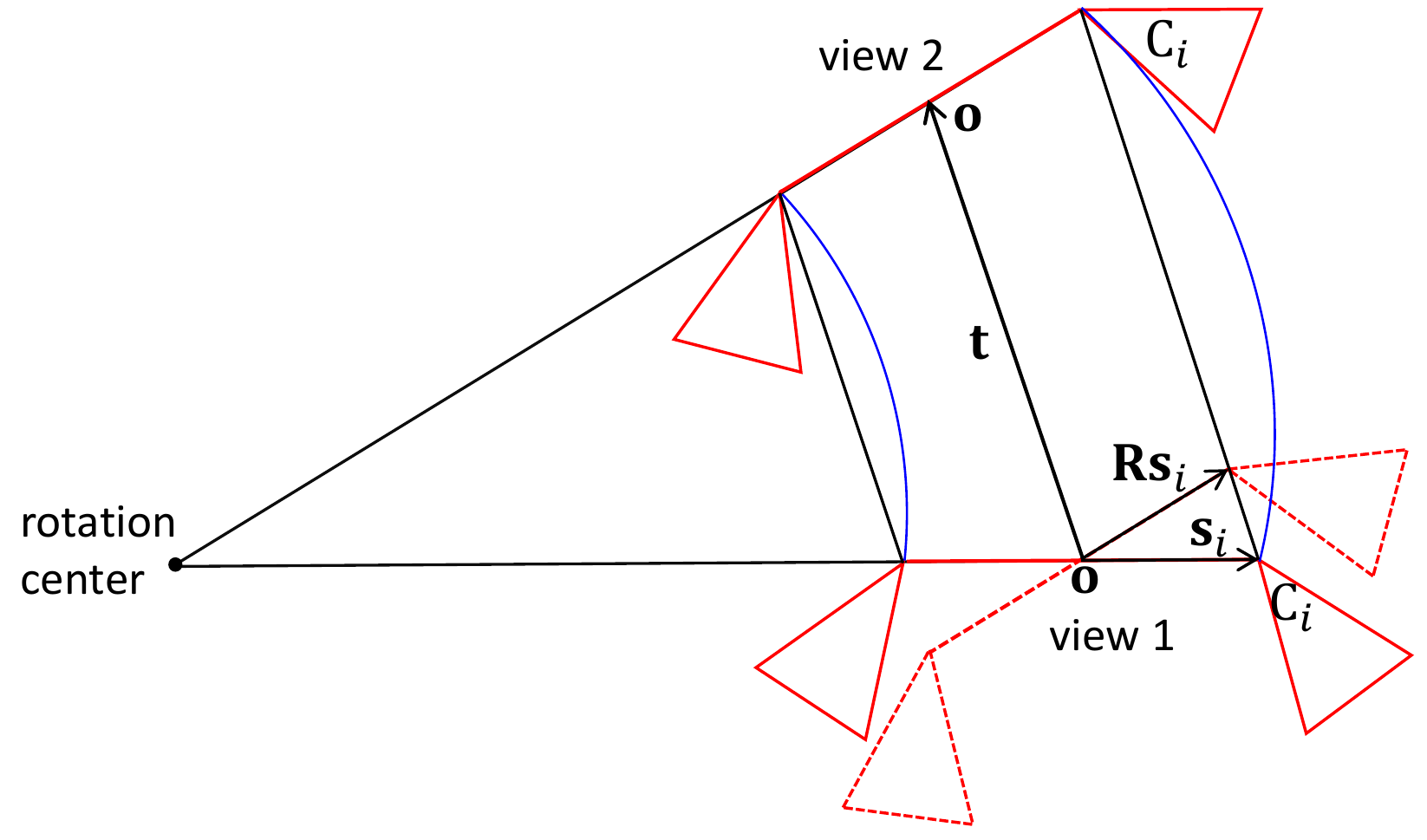}
	\end{center}
	\vspace{-0.1in}
	\caption{A degenerate case due to constant rotation rate for case~7. }
	\label{fig:Criticalmotion}
\end{figure}

\begin{proposition}
	\label{theorem:pure_translation}
	For case~7 in Fig.~\ref{fig:ac_type}, the motion of pure translation or constant rotation rate is degenerate. Specifically, the scale of translation cannot be recovered.
\end{proposition}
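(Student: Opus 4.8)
The plan is to specialize the essential matrix of Eq.~\eqref{eq:essential_matrix} to intra-camera correspondences and then show that, for either motion, changing the translation only rescales every essential matrix; since the two AC constraints in Eqs.~\eqref{equ:sub1} and~\eqref{equ:sub2} are homogeneous in $\E_k$, such a change is invisible to the observations and the scale is lost. In case~7 every AC satisfies $i=i'$, so $\Q_{i'}=\Q_i$ and $\mathbf{s}_{i'}=\mathbf{s}_i$. Substituting this into Eq.~\eqref{eq:essential_matrix} and using $\R[\mathbf{s}_i]_\times=[\R\mathbf{s}_i]_\times\R$ to merge the two translation-bearing terms gives
\begin{align}
\E_k=\Q_i^T\,[(\R-\mathbf{I})\mathbf{s}_i+\mathbf{t}]_\times\,\R\,\Q_i .
\end{align}
The bracketed vector $\mathbf{u}_i=(\R-\mathbf{I})\mathbf{s}_i+\mathbf{t}$ is exactly the displacement of the $i$-th camera center between the two views, and $\E_k$ is homogeneous in $\mathbf{u}_i$.

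For pure translation the claim is immediate and parallels the proof of Proposition~\ref{theorem:inter_cam}: setting $\R=\mathbf{I}$ kills $(\R-\mathbf{I})\mathbf{s}_i$, so $\E_k=\Q_i^T[\mathbf{t}]_\times\Q_i$ for both ACs. Replacing $\mathbf{t}$ by $\lambda\mathbf{t}$ multiplies each $\E_k$ by $\lambda$ and leaves Eqs.~\eqref{equ:sub1} and~\eqref{equ:sub2} satisfied, so the magnitude of $\mathbf{t}$ is undetermined.

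The constant-rotation-rate case is the real obstacle, since now $\R\neq\mathbf{I}$ and $(\R-\mathbf{I})\mathbf{s}_i$ does not vanish, so a single global rescaling no longer works and I must produce a genuine one-parameter family of admissible translations. I would fix $\R$ at its true value and argue as follows. Because $\E_k$ is linear and homogeneous in $\mathbf{u}_i$ through $[\mathbf{u}_i]_\times$, the three (generically rank-two) constraints contributed by the $i$-th AC pin down $\mathbf{u}_i$ only up to a scalar, so every admissible translation has the form $\mathbf{t}=\mu_i\mathbf{u}_i-(\R-\mathbf{I})\mathbf{s}_i$. Equating the two expressions for $\mathbf{t}$ coming from the two cameras and substituting $\mathbf{u}_i=(\R-\mathbf{I})\mathbf{s}_i+\mathbf{t}$ collapses the compatibility condition to $(\mu_1-1)\mathbf{u}_1=(\mu_2-1)\mathbf{u}_2$. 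A nontrivial solution, hence a one-parameter family of poses, exists precisely when $\mathbf{u}_1\parallel\mathbf{u}_2$, i.e.\ when the two camera centers undergo parallel displacements; the family is $\mathbf{t}(\tau)=\mathbf{t}+\tau\,\mathbf{u}_1$, along which both $\mathbf{u}_1$ and $\mathbf{u}_2$ retain their directions and every $\E_k$ merely rescales. Pure translation is the special case $\mathbf{u}_1=\mathbf{u}_2=\mathbf{t}$ of this condition.

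It then remains to check that the motion of Fig.~\ref{fig:Criticalmotion} actually produces $\mathbf{u}_1\parallel\mathbf{u}_2$. I would insert its explicit parameterization — a constant-rate rotation about a fixed axis together with the translation it induces, so that $\mathbf{t}=(\mathbf{I}-\R)\mathbf{c}$ and $\mathbf{u}_i=(\R-\mathbf{I})(\mathbf{s}_i-\mathbf{c})$ for a suitable rotation center $\mathbf{c}$ — into $\mathbf{u}_i$ and verify that the two displacement vectors are collinear, after which the unobservability follows from the family above exactly as in the pure-translation case. I expect this last verification, together with the precise description of which constant-rotation-rate configurations satisfy the collinearity, to be the delicate part of the argument; the specialization of $\E_k$, the pure-translation case, and the reduction to the parallelism condition are routine.
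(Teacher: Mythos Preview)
Your argument for the pure-translation case and your reduction of the constant-rotation-rate case to the parallelism condition $\mathbf{u}_1\parallel\mathbf{u}_2$ are correct and in fact sharper than what the paper writes down; the one-parameter family $\mathbf{t}(\tau)=\mathbf{t}+\tau\mathbf{u}_1$ is exactly the unobservable mode. Where you differ from the paper is in how the verification step is closed. The paper does not attempt to check $\mathbf{u}_1\parallel\mathbf{u}_2$ directly from $\mathbf{u}_i=(\R-\mathbf{I})(\mathbf{s}_i-\mathbf{c})$. Instead it first exercises a freedom you did not use: it moves the rig origin onto the baseline, so that $\mathbf{s}_1$ and $\mathbf{s}_2$ are scalar multiples of a common direction $\hat{\mathbf{b}}$ and hence $(\R-\mathbf{I})\mathbf{s}_1\parallel(\R-\mathbf{I})\mathbf{s}_2$ automatically. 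It then asserts, by inspection of Fig.~\ref{fig:Criticalmotion}, that in the constant-rotation-rate geometry one additionally has $(\R-\mathbf{I})\mathbf{s}_i\parallel\mathbf{t}$, which is the stronger statement $\mathbf{u}_i\parallel\mathbf{t}$ and lets them scale $\mathbf{t}$ itself rather than perturb along $\mathbf{u}_1$. Your algebraic route is more informative because it exposes the actual degeneracy criterion (parallel camera-center displacements, equivalently the rotation axis meeting the baseline once projected along the axis), whereas the paper's route is quicker but leans on the figure for the alignment claim. If you want to finish your version, carry out the computation you outlined: with $\mathbf{t}=(\mathbf{I}-\R)\mathbf{c}$ you get $\mathbf{u}_i=(\R-\mathbf{I})(\mathbf{s}_i-\mathbf{c})$, and the collinearity $\mathbf{u}_1\parallel\mathbf{u}_2$ reduces (modulo the null space of $\R-\mathbf{I}$, i.e.\ the rotation axis) to $\mathbf{s}_1-\mathbf{c}$ and $\mathbf{s}_2-\mathbf{c}$ being parallel, which is precisely the concentric-circle configuration drawn in Fig.~\ref{fig:Criticalmotion}.
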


\begin{proof}
	Suppose $k$-th AC appears in the $i$-th camera of view~1 and the $i'$-th camera of view~2. 
	Since each AC is an intra-camera correspondence for case~7, we have $\mathbf{s}_i = \mathbf{s}_{i'}$ and $\Q_i = \Q_{i'}$. 
	
	For pure translation case, i.e., $\R = \mathbf{I}$, essential matrix in in Eq.~\eqref{eq:essential_matrix} becomes $\E_k = \Q_{i}^T [\mathbf{t}]_\times \Q_i$, which is homogeneous with $\mathbf{t}$. 
	For the constant rotation rate case, the proof is inspired by~\cite{clipp2008robust} for PC-based solvers. 
	In Fig.~\ref{fig:Criticalmotion}, both cameras move along with concentric circles. The two blue arcs are the camera trajectories moving along concentric circles. The motion of the multi-camera system is equivalent to a pure rotation $\R$ at first, then proceed by a pure translation $\mathbf{t}$. The dotted multi-camera system is the result of view~1 under pure rotation. Point $\mathbf{o}$ is the origin of the multi-camera system's reference. Without loss of generality, we assume that point $\mathbf{o}$ lies on the two-camera rig's baseline.
	Take single camera $C_i$ for an example, the pure rotation induced translation ${{\mathbf{R}}{\mathbf{s}_{i}}-{\mathbf{s}_{i}}}$ is aligned with the pure translation ${\mathbf{t}}$. Denote $\lambda{\mathbf{t}} = {{\mathbf{R}}{\mathbf{s}_i}-{\mathbf{s}_i}}$ and substitute it to Eq.~\eqref{eq:essential_matrix}, it can be verified that $\lambda{\mathbf{t}}$ invariably satisfies Eqs.~\eqref{equ:sub1} and~\eqref{equ:sub2}. 
\end{proof}

\begin{proposition}
	\label{theorem:degenerate_case}
	Case~8 and case~9 in Fig.~\ref{fig:ac_type} are degenerate. Specifically, the translation cannot be recovered.
\end{proposition}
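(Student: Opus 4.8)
The plan is to reduce both cases to the single-camera setting of Section~\ref{sec:minimal_config} and then invoke the homogeneity of the single-camera essential matrix in the translation, which is the source of the classical scale ambiguity. The defining feature of cases~8 and~9, as noted in the text, is that both ACs are captured by one and the same physical camera across the two views: there is a single index $j$ with $i_k = i'_k = j$ for $k=1,2$, so $\Q_{i_k}=\Q_{i'_k}=\Q_j$ and $\mathbf{s}_{i_k}=\mathbf{s}_{i'_k}=\mathbf{s}_j$ for both correspondences. Substituting these equalities into the composition~\eqref{eq:transformation} shows that each AC sees the common induced single-camera relative pose
\begin{align}
\R_{jj} = \Q_j^T \R \Q_j, \qquad \mathbf{t}_{jj} = \Q_j^T(\R \mathbf{s}_j + \mathbf{t} - \mathbf{s}_j),
\end{align}
whose essential matrix is $\E_k = [\mathbf{t}_{jj}]_\times \R_{jj}$ by~\eqref{eq:essential_matrix}.

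First I would observe that constraints~\eqref{equ:sub1} and~\eqref{equ:sub2} depend on the unknowns only through $\E_k$, and that $\E_k$ here is the essential matrix of an ordinary perspective camera. Two ACs in a single perspective camera form exactly the minimal configuration of the single-camera problem, so they fix $\R_{jj}$ together with the \emph{direction} of $\mathbf{t}_{jj}$ but not its magnitude: rescaling $\mathbf{t}_{jj}\mapsto\beta\,\mathbf{t}_{jj}$ rescales $\E_k$ and leaves both constraints satisfied. This is precisely the scale ambiguity caused by $\E$ being homogeneous in the translation.

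Next I would transport this ambiguity back to the rig motion. The rotation is harmless: from $\R_{jj}=\Q_j^T\R\Q_j$ one recovers $\R = \Q_j \R_{jj} \Q_j^T$ uniquely. The translation is not. Writing $\mathbf{t}_{jj} = \beta\,\hat{\mathbf{t}}_{jj}$ for the recovered unit direction $\hat{\mathbf{t}}_{jj}$ and a free scalar $\beta$, and solving the second displayed equation for $\mathbf{t}$, yields the one-parameter family
\begin{align}
\mathbf{t} = \beta\, \Q_j \hat{\mathbf{t}}_{jj} + (\mathbf{s}_j - \R \mathbf{s}_j).
\end{align}
Since $\R,\Q_j,\mathbf{s}_j$ and $\hat{\mathbf{t}}_{jj}$ are all determined while $\beta$ ranges freely, $\mathbf{t}$ is constrained only to a line; because of the offset $\mathbf{s}_j-\R\mathbf{s}_j$ this line generically misses the origin, so neither the scale nor even the direction of $\mathbf{t}$ is pinned down and the full translation vector is unrecoverable.

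The step I expect to be the main obstacle is this last one. It is not enough to quote the single-camera scale ambiguity for $\mathbf{t}_{jj}$, since the quantity we actually care about is the rig translation $\mathbf{t}$; the care lies in pushing the ambiguity through the affine offset $\mathbf{s}_j-\R\mathbf{s}_j$ and checking that the solution set is a genuine one-dimensional family. This is what upgrades the conclusion from ``the scale is lost'' to the stronger assertion of the proposition, that the entire translation vector cannot be recovered, and rules out any degenerate subcase in which the family might collapse to a single point.
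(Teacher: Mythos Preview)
Your proposal is correct and follows essentially the same route as the paper's proof: both reduce cases~8 and~9 to the two-view geometry of a single perspective camera, invoke the scale ambiguity of the single-camera essential matrix, and conclude via the composition~\eqref{eq:transformation} that the rig translation $\mathbf{t}$ is undetermined. Your write-up is in fact more explicit than the paper's, since you derive the one-parameter family $\mathbf{t} = \beta\, \Q_j \hat{\mathbf{t}}_{jj} + (\mathbf{s}_j - \R \mathbf{s}_j)$ and observe that the affine offset generically destroys even the direction of $\mathbf{t}$, whereas the paper simply states that the composition of the three transformations inherits the scale ambiguity.
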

\begin{proof}
	These two cases can be viewed as that two ACs are captured by a single camera from two views. The relative rotation and translation can be recovered by a minimal solver for a single camera, such as~\cite{barath2018efficient} or the solver proposed in this paper. However, the recovered translation has scale-ambiguity for a single camera. The relative pose between a multi-camera system is a composition of three transformations, including two extrinsic parameters and the relative pose between two views of the single perspective camera. Due to the scale-ambiguity between two views of the perspective camera, the translation between two views of the multi-camera system cannot be recovered. 
\end{proof}

\section{Relative Pose Recovery from Affine Correspondences}
\label{sec:complete_solution}

In this section, we propose a series of minimal solvers for all the cases in Fig.~\ref{fig:ac_type}. The proposed solvers form a complete solution to relative pose estimation from ACs.

First of all, we need to parametrize the relative pose. Rotation can be parameterized by Cayley, quaternions, Euler angles, direction cosine matrix (DCM), etc. Cayley and quaternion parameterizations have shown superiority in minimal problems~\cite{zhao2020minimal}. 
Rotation $\mathbf{R}$ using Cayley parameterization can be written as 
\begin{align}
&\mathbf{R}_{\text{cayl}} = \frac{1}{q_x^2+q_y^2+q_z^2+1} \ . \nonumber \\ 
&
\begin{bmatrix}
{1+q_x^2-q_y^2-q_z^2} &  2 q_x q_y -2 q_z & 2 q_x q_z + 2 q_y \\
2 q_x q_y+2 q_z & 1-q_x^2+q_y^2-q_z^2 & 2 q_y q_z - 2 q_x \\
2 q_x q_z - 2 q_y & 2 q_y q_z + 2 q_x & {1-q_x^2-q_y^2+q_z^2}
\end{bmatrix},	
\label{eq:R6dof1}
\end{align}
where $[1,q_x,q_y,q_z]^T$ is a homogeneous quaternion vector. Note that $180$-degree rotations are prohibited in Cayley parameterization, but this is a rare case for usual image pairs. In practice, it has been widely used in minimal problems~\cite{stewenius2005minimal,kneip2014efficient,zheng2015structure,zhao2020minimal}.
Rotation $\R$ using quaternion parameterization can be written as
\begin{align}
&\mathbf{R}_{\text{quat}} =  \nonumber \\ 
&
\begin{bmatrix}
q_w^2+q_x^2-q_y^2-q_z^2 &  2 q_x q_y -2 q_w q_z & 2 q_x q_z + 2 q_w q_y \\
2 q_x q_y+2 q_w q_z & q_w^2-q_x^2+q_y^2-q_z^2 & 2 q_y q_z - 2 q_w q_x \\
2 q_x q_z - 2 q_w q_y & 2 q_y q_z + 2 q_w q_x & q_w^2-q_x^2-q_y^2+q_z^2
\end{bmatrix},	
\label{eq:R6dof2}
\end{align}
where $[q_w^2,q_x,q_y,q_z]^T$ is a normalized quaternion vector satisfying
\begin{align}
q_w^2 + q_x^2 + q_y^2 + q_z ^2 = 1.
\end{align}
In the following, we take Cayley parameterization as an example. The solver generation procedure can be applied to quaternion parameterization straightforwardly.

The translation $\mathbf{t}$ can be written as 
\begin{align}
\mathbf{t} = \begin{bmatrix}
{t_x}& \
{t_y}& \
{t_z}
\end{bmatrix}^T.
\label{eq:T6dof1}
\end{align} 

\subsection{A Property of Rotation Matrices with Unknown Variables}

A rotation matrix is an orthogonal matrix. We find a property for this kind of matrices, which is useful for relative pose estimation.

\begin{theorem}
	Suppose a matrix $\Q_{3\times 3}$ satisfies
	\begin{align}
	\Q \Q^T = s^2 \mathbf{I}, \ s > 0,
	\label{equ:condition}
	\end{align}
	where $s$ is a polynomial of some variables.
	Suppose $\a_1^{(i)}$, $\a_2^{(i)}$, $\a_3^{(i)}$, $\b_1^{(i)}$, $\b_2^{(i)}$, and $\b_3^{(i)} \in \mathbb{R}^3$ are arbitrary non-zero vectors. 
	Then the polynomial of determinant
	\begin{align}
	\mathbf{N} = \begin{bmatrix}
	\left(\sum_{i=1}^{m_1} \a_1^{(i)} \times \Q \b_1^{(i)}\right)^T \\ \left(\sum_{i=1}^{m_2} \a_2^{(i)} \times \Q \b_2^{(i)}\right)^T \\ \left(\sum_{i=1}^{m_3} \a_3^{(i)} \times \Q \b_3^{(i)}\right)^T
	\end{bmatrix}
	\label{eq:ggform}
	\end{align}
	has a factor $s$, where $m_1, m_2, m_3$ are positive integers.
	\label{theorm:general_form}
\end{theorem}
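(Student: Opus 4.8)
The plan is to exploit the single structural fact we are given --- that $\Q\Q^T = s^2\mathbf{I}$ forces $\Q$ to be a \emph{scaled} orthogonal matrix --- and to turn it into a statement about cross products. First I would record the consequence $\det\Q = \pm s^3$, which follows from $(\det\Q)^2 = \det(\Q\Q^T) = \det(s^2\mathbf{I}) = s^6$ together with the fact that the polynomial ring is an integral domain, so that exactly one of $\det\Q - s^3$, $\det\Q + s^3$ vanishes identically; fix the resulting sign $\sigma = \pm1$. The crucial lemma I would then prove is the cofactor identity
\[
(\Q\mathbf{u})\times(\Q\mathbf{v}) \;=\; \operatorname{cof}(\Q)\,(\mathbf{u}\times\mathbf{v}) \;=\; \sigma s\,\Q(\mathbf{u}\times\mathbf{v}),
\qquad \mathbf{u},\mathbf{v}\in\mathbb{R}^3 .
\]
The first equality is the standard relation between the cross product and the cofactor matrix $\operatorname{cof}(\Q) = (\det\Q)\Q^{-T}$; the second follows because $\Q\Q^T = s^2\mathbf{I}$ gives $\Q^{-T} = \Q/s^2$, whence $\operatorname{cof}(\Q) = \sigma s^3\cdot\Q/s^2 = \sigma s\,\Q$. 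This computation is a priori valid only where $s\neq 0$, but since both $\operatorname{cof}(\Q)$ and $\sigma s\,\Q$ have polynomial entries and agree as rational functions, they agree as polynomials, so the identity is genuine in the polynomial ring.

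Next I would reduce to the case $m_1=m_2=m_3=1$. Because the determinant is multilinear in the rows of $\mathbf{N}$, expanding each row-sum $\sum_i \a_j^{(i)}\times\Q\b_j^{(i)}$ writes $\det\mathbf{N}$ as a finite sum of determinants
\[
D = \det\!\begin{bmatrix}(\a_1\times\c_1)^T\\ (\a_2\times\c_2)^T\\ (\a_3\times\c_3)^T\end{bmatrix},
\qquad \c_j := \Q\b_j,
\]
one per choice of the suppressed summation indices. It suffices to show each such $D$ is divisible by $s$, since a sum of multiples of $s$ is again a multiple of $s$. I would rewrite $D$ as the scalar triple product $(\a_1\times\c_1)\cdot\big[(\a_2\times\c_2)\times(\a_3\times\c_3)\big]$ and expand the inner double cross product by the quadruple-product identity, obtaining
\[
D = [\a_2,\c_2,\c_3]\,[\a_1,\c_1,\a_3] \;-\; [\a_2,\c_2,\a_3]\,[\a_1,\c_1,\c_3],
\]
where $[\mathbf{x},\mathbf{y},\mathbf{z}]$ denotes the scalar triple product $\mathbf{x}\cdot(\mathbf{y}\times\mathbf{z})$.

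The punchline is that in each of the two terms exactly one factor is a triple product with \emph{two} $\Q$-transformed arguments, i.e.\ of the form $[\mathbf{a},\Q\mathbf{b},\Q\mathbf{b}'] = \mathbf{a}\cdot(\Q\mathbf{b}\times\Q\mathbf{b}')$. Applying the key lemma, $\Q\mathbf{b}\times\Q\mathbf{b}' = \sigma s\,\Q(\mathbf{b}\times\mathbf{b}')$, so $[\mathbf{a},\Q\mathbf{b},\Q\mathbf{b}'] = \sigma s\,\mathbf{a}\cdot\Q(\mathbf{b}\times\mathbf{b}')$ is $s$ times a polynomial. Thus $[\a_2,\c_2,\c_3]$ and $[\a_1,\c_1,\c_3]$ each carry a factor $s$, while the remaining two triple products are ordinary polynomials; consequently both terms of $D$ --- hence $D$ itself, and therefore the whole sum $\det\mathbf{N}$ --- are divisible by $s$.

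The main obstacle is keeping the bookkeeping of $\Q$-transformed versus untransformed arguments exactly right. A tempting shortcut is to write $\a_j = \Q(\Q^{-1}\a_j)$ so that every row becomes $\sigma s\,\Q\mathbf{w}_j$ and $\det\mathbf{N} = s^6\det\mathbf{W}$; but $\Q^{-1}\a_j = \Q^T\a_j/s^2$ reintroduces precisely the factors $s^{-2}$ that cancel the apparent gain, yielding the vacuous identity $\det\mathbf{N} = \det\mathbf{V}$ for a polynomial matrix $\mathbf{V}$ with no visible factor $s$. The correct route is the term-by-term expansion above, in which the $\a_j$ are left untouched and the factor $s$ is extracted \emph{solely} from the triple products carrying two $\Q$-images. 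Verifying that after the quadruple-product expansion every surviving monomial indeed contains such a two-$\Q$ triple product is the one place where care is genuinely required.
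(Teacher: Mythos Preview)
Your argument is correct and follows essentially the same route as the paper: reduce to the single-term case $m_1=m_2=m_3=1$ by multilinearity of the determinant, rewrite that determinant as a scalar triple product, expand via the quadruple-product (Lagrange) identity into two terms each containing one factor of the form $\mathbf{a}\cdot(\Q\mathbf{b}\times\Q\mathbf{b}')$, and then show that $\Q\mathbf{b}\times\Q\mathbf{b}'$ carries the factor $s$. The only real difference is in how that last sublemma is established: the paper computes $\Q_i\times\Q_j = \pm s\,\Q_k$ directly from the row/column orthogonality and then expands $\Q\mathbf{a}\times\Q\mathbf{b}$ bilinearly, whereas you invoke the cofactor identity $(\Q\mathbf{u})\times(\Q\mathbf{v}) = \operatorname{cof}(\Q)(\mathbf{u}\times\mathbf{v})$ together with $\operatorname{cof}(\Q)=\sigma s\,\Q$; your version is cleaner and avoids the column-by-column bookkeeping, and your remark that the identity extends from the locus $s\neq 0$ to the full polynomial ring by density is the right way to handle the apparent division.
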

The proof is provided in the supplementary material. 
$\mathbf{R}_{\text{quat}}$ satisfies the condition~\eqref{equ:condition} with $s = q_w^2 + q_x^2 + q_y^2 + q_z^2$.
When an equation system is homogeneous in $\R_{\text{cayl}}$, we can safely omit the denominator in $\R_{\text{cayl}}$. It can be verified 
that the matrix $(1+q_x^2+q_y^2+q_z^2) \R_{\text{cayl}}$ also satisfies the condition~\eqref{equ:condition} with $s = 1 + q_x^2 + q_y^2 + q_z^2$. 
In the following, we will use this theorem to factor out $s$ to simplify the equation system. It will generate more efficient solvers and sometimes avoid false roots.

In a few papers on relative pose estimation, a scale is factored out from the equation system for several specific parameterizations~\cite{sweeney2014solving,zhao2020minimal}. 
The Theorem~\ref{theorm:general_form} provides a rigorous theory for a general form for the first time. 
We provide a good practice according to this theorem. For $\mathbf{R}_{\text{quat}}$, there is an equivalent form of Eq.~\eqref{eq:R6dof2}: the diagonal elements can be replaced by $[1-2(q_y^2+q_z^2), 1-2(q_x^2+q_z^2), 1-2(q_x^2+q_y^2)]^T$. This form does not satisfy the condition in Theorem~\ref{theorm:general_form}. Thus the factor $(1+q_x^2+q_y^2+q_z^2)$ cannot be factored out. It will usually cause a larger elimination template than using the form of Eq.~\eqref{eq:R6dof2}. Using this practice, we obtain a new five-point solver for single cameras based on quaternion, which has a smaller elimination template than that proposed in~\cite{zhao2020minimal}.

\subsection{Equation System Construction for Single Cameras}

For the $k$-th affine correspondence, we obtain three polynomials for six unknowns $\{q_x, q_y, q_z, t_x, t_y, t_z\}$ from Eqs.~\eqref{eq:constraint_epipolar} and~\eqref{eq:constraint_affine} by substituting the essential matrix~\eqref{eq:essential} into them. After separating $q_x$, $q_y$, $q_z$ from $t_x$, $t_y$, $t_z$, we arrive at an equation system 	
\begin{align} 
	\frac{1}{q_x^2+q_y^2+q_z^2+1}\underbrace {\overline{\M}_k(q_x, q_y, q_z)}_{3\times 3}
	\begin{bmatrix}
	{{t}_x}\\
	{{t}_y}\\
	{{t}_z}
	\end{bmatrix} = \mathbf{0},
	\label{eq:equ_qxqyqz}
\end{align}
where the entries of $\overline{\M}_k$ are quadratic polynomials with three unknowns $q_x,q_y,q_z$. 
Eq.~\eqref{eq:equ_qxqyqz} imposes three independent constraints on six unknowns $\{q_x, q_y, q_z, t_x, t_y, t_z\}$. 
Given two ACs, we get an equation system of $6$ independent constraints in a similar form as Eq.~\eqref{eq:equ_qxqyqz}.	By ignoring the scale factor, these equations are stacked by
\begin{align} 
	\underbrace {\begin{bmatrix}
		\overline{\M}_1(q_x, q_y, q_z)\\
		\overline{\M}_2(q_x, q_y, q_z)
		\end{bmatrix}}_{\overline{\M}_{6\times 3}} \begin{bmatrix}
	{{{t}_x}}\\
	{{{t}_y}}\\
	{{{t}_z}}
	\end{bmatrix} = {\mathbf{0}}.
	\label{eq:scale_M_t}
\end{align}

We use the hidden variable technique in equation system construction. The technique has been widely used in algebraic geometry for the elimination of variables from a multivariate polynomial system~\cite{cox2006using}. 
It can be seen that $\overline{\M}$ has a null vector. Its rank should be two for non-degenerate cases. (The rank cannot be less than two. Otherwise, the translation vector cannot be recovered.) Thus, the determinants of all the $3\times3$ submatrices of $\overline{\M}$ should be zero.
There is a property for these submatrices.
\begin{theorem}
	\label{theorem:factor_mono_cam}
	Suppose $\mathbf{N}$ is an arbitrary $3\times 3$ submatrix of $\overline{\M}$, the polynomial $\det(\mathbf{N})$ has a factor $q_x^2 + q_y^2 + q_z^2 + 1$.
\end{theorem}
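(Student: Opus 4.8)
The plan is to reduce this statement to Theorem~\ref{theorm:general_form}. I set $\Q := (q_x^2+q_y^2+q_z^2+1)\,\R_{\text{cayl}}$, the numerator of the Cayley rotation; as already noted after Theorem~\ref{theorm:general_form}, this $\Q$ satisfies $\Q\Q^T = s^2\mathbf{I}$ with $s = q_x^2+q_y^2+q_z^2+1 > 0$. I will show that, after clearing the common denominator as in Eq.~\eqref{eq:scale_M_t}, every row of $\overline{\M}$ is of the form $\bigl(\sum_i \a^{(i)}\times\Q\b^{(i)}\bigr)^T$ with constant vectors $\a^{(i)},\b^{(i)}\in\mathbb{R}^3$ independent of the unknowns. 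Since an arbitrary $3\times 3$ submatrix $\mathbf{N}$ simply selects three such rows -- and all six rows share the same $\Q$, because both ACs arise from the same relative pose $\{\R,\mathbf{t}\}$ of the single camera -- Theorem~\ref{theorm:general_form} then yields that $\det(\mathbf{N})$ has the factor $s$, which is exactly the claim.

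The key step is therefore to exhibit this cross-product structure for the three rows produced by each AC. For the row coming from the epipolar constraint~\eqref{eq:constraint_epipolar}, I would rewrite $\x_k'^T\E\x_k = \x_k'^T(\mathbf{t}\times\R\x_k)$ and, using the scalar triple product $\mathbf{a}^T(\mathbf{b}\times\mathbf{c}) = \mathbf{b}^T(\mathbf{c}\times\mathbf{a})$, bring it to $\mathbf{t}^T(\R\x_k\times\x_k')$. Reading off the coefficient of $\mathbf{t}$ gives the single term $\R\x_k\times\x_k' = (-\x_k')\times\R\x_k$, i.e. the required form with $\a=-\x_k'$, $\b=\x_k$.

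For the two rows coming from the affine constraint~\eqref{eq:constraint_affine}, the bookkeeping is heavier. Writing $\E^T\x_k' = \R^T(\x_k'\times\mathbf{t})$, its $j$-th entry equals $(\R\mathbf{e}_j)^T(\x_k'\times\mathbf{t}) = \mathbf{t}^T\bigl((\R\mathbf{e}_j)\times\x_k'\bigr)$, whose $\mathbf{t}$-coefficient is $(-\x_k')\times\R\mathbf{e}_j$; similarly $(\E\x_k)_l = \mathbf{t}^T(\R\x_k\times\mathbf{e}_l)$ contributes $(-\mathbf{e}_l)\times\R\x_k$. Collecting the two parts of Eq.~\eqref{eq:constraint_affine} and absorbing the constant entries of $\A_k$ into the leading vectors shows that each affine row is a sum of at most three terms $\a^{(i)}\times\Q\b^{(i)}$, again with $\a^{(i)},\b^{(i)}$ constant (namely $-\x_k'$, $-(\A_k)_{1j}\mathbf{e}_1$, and $-(\A_k)_{2j}\mathbf{e}_2$, paired with $\mathbf{e}_j$, $\x_k$, $\x_k$).

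The main obstacle is precisely this verification of the row structure rather than any deep argument: I must make sure that the transpose in $\E^T$, the truncation to the first two entries in~\eqref{eq:constraint_affine}, and the mixing introduced by $\A_k^T$ are handled so that \emph{all} $q$-dependence is confined to $\Q$ while $\a^{(i)},\b^{(i)}$ remain constant, which is exactly the hypothesis Theorem~\ref{theorm:general_form} requires. A minor point to record is that if some entry of $\A_k$ vanishes the corresponding term simply drops out, reducing the number of summands harmlessly. Once the structure is in place, the conclusion follows immediately from Theorem~\ref{theorm:general_form}.
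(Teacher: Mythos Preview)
Your proposal is correct and follows essentially the same route as the paper: set $\Q=(q_x^2+q_y^2+q_z^2+1)\R_{\text{cayl}}$, verify via scalar-triple-product manipulations that every row of $\overline{\M}$ has the form $\bigl(\sum_i \a^{(i)}\times\Q\b^{(i)}\bigr)^T$ with constant $\a^{(i)},\b^{(i)}$, and then invoke Theorem~\ref{theorm:general_form}. The only cosmetic difference is that the paper packages the row computation into a separate lemma and combines your two terms $-(\A_k)_{1j}\mathbf{e}_1\times\R\x_k$ and $-(\A_k)_{2j}\mathbf{e}_2\times\R\x_k$ into the single term $-[\A_{1j},\A_{2j},0]^T\times\R\x_k$, writing each affine row with exactly two summands rather than three; since Theorem~\ref{theorm:general_form} allows arbitrary $m_i$, this has no effect on the argument.
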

The proof is based on Theorem~\ref{theorm:general_form}, and it is provided in the supplementary material.
Based on Theorem~\ref{theorem:factor_mono_cam}, the equation system is 
\begin{align}
\quot(\det(\mathbf{N}), q_x^2+q_y^2+q_z^2+1) = 0, \nonumber \\
\mathbf{N} \in 3\times 3 \text{ submatrices of } \overline{\M}.
\label{eq:submatrix_3by3}
\end{align}
where $\quot(a, b)$ means quotient of $a$ divided by $b$, and $\det(\cdot)$ means the determinant operator.

There are two advantages to divide the determinants $\det(\mathbf{N})$ by $q_x^2+q_y^2+q_z^2+1$. 
First, the division eliminates the factor and avoids extraneous roots satisfying $q_x^2+q_y^2+q_z^2+1 = 0$. Though this equality cannot be satisfied in the real number field, it might be satisfied in the complex number field. Second, the order of the equation system can be reduced from $6$ to $4$.

There are $20$ equations of degree $4$ in Eq.~\eqref{eq:submatrix_3by3}. 
During solver generation, we ignore one excess constraint by removing the last row of $\overline{\M}$. Then there are $10$ equations of degree $4$. 
Once the rotation parameters $\{q_x, q_y, q_z\}$ have been obtained, the translation $[t_x, t_y, t_z]^T$ can be recovered by calculating the null space of $\overline{\M}$ up to a scale.  The excess constraint is used to select the correct solution as that in~\cite{guan2020relative}.

\subsection{Equation System Construction for Multi-Camera Systems}

For the $k$-th affine correspondence, we obtain three polynomials for six unknowns $\{q_x, q_y, q_z, t_x, t_y, t_z\}$ from Eqs.~\eqref{equ:sub1} and~\eqref{equ:sub2} by substituting the essential matrix~\eqref{eq:essential_matrix} into them. After separating $q_x$, $q_y$, $q_z$ from $t_x$, $t_y$, $t_z$, we arrive at an equation system 	
\begin{align} 
	\frac{1}{q_x^2+q_y^2+q_z^2+1}\underbrace {
	\M_k(q_x, q_y, q_z)}_{3\times 4}
	\begin{bmatrix}
	{{{t}_x}}\\
	{{{t}_y}}\\
	{{{t}_z}}\\
	1
	\end{bmatrix} = {\mathbf{0}},
	\label{eq:equ_qxqyqz1}
\end{align}
where the entries of $\M_k$ are quadratic polynomials with three unknowns $q_x,q_y,q_z$. 

Eq.~\eqref{eq:equ_qxqyqz1} imposes three independent constraints on six unknowns $\{q_x, q_y, q_z, t_x, t_y, t_z\}$. 
Given two ACs, we get an equation system of $6$ independent constraints in a similar form as Eq.~\eqref{eq:equ_qxqyqz1}. 
By ignoring the scale factor, these equations are stacked by
\begin{align} 
	\underbrace {\begin{bmatrix}
		\M_1(q_x, q_y, q_z)\\
		\M_2(q_x, q_y, q_z)
		\end{bmatrix}}_{\M_{6\times 4}} \begin{bmatrix}
	{{{t}_x}}\\
	{{{t}_y}}\\
	{{{t}_z}}\\
	1
	\end{bmatrix} = {\mathbf{0}}.
	\label{eq:scale_M_t_1}
\end{align}
It can be seen that ${{\M}}$ has a null vector. Its rank should be three for non-degenerate cases. (The rank cannot be less than three. Otherwise, the translation vector cannot be recovered.) 
Thus, the determinants of all the $4\times4$ submatrices of ${\M}$ should be zero. 
In addition, there is additional implicit constraints in this case. In the following, we prove that the rank of $(\M_k)_{(1:3,1:3)}$, $\forall k \in \{1, 2\}$ is $2$. 
\begin{theorem}
	\label{theorem:extra_constraint}
	For non-degenerate cases, $\rank(\mathbf{N}_k) = 2$, $\forall k$, where $\mathbf{N}_k = (\M_k)_{(1:3,1:3)}$.
\end{theorem}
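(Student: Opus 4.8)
The plan is to show that the entire dependence of the three constraints of the $k$-th AC on the translation $\mathbf{t}$ collapses onto a single vector, which forces the leading block $\mathbf{N}_k$ to be singular. Writing the camera indices of the $k$-th AC as $i$ and $i'$, I would introduce the auxiliary vector $\mathbf{w}_k := \mathbf{t} + \R\mathbf{s}_i - \mathbf{s}_{i'}$, which by Eq.~\eqref{eq:transformation} is exactly $\Q_{i'}\mathbf{t}_{ii'}$. The first step is to rewrite the essential matrix of Eq.~\eqref{eq:essential_matrix} purely in terms of $\mathbf{w}_k$. Using the identity $[\R\mathbf{s}_i]_\times\R=\R[\mathbf{s}_i]_\times$ already exploited in Eq.~\eqref{eq:essential_matrix}, the $\mathbf{t}$-independent part $\Q_{i'}^T(\R[\mathbf{s}_i]_\times-[\mathbf{s}_{i'}]_\times\R)\Q_i$ equals $\Q_{i'}^T[\R\mathbf{s}_i-\mathbf{s}_{i'}]_\times\R\Q_i$, so that
\begin{align}
\E_k=\Q_{i'}^T[\mathbf{w}_k]_\times\R\Q_i.
\end{align}
The key observation is that $\E_k$ is now homogeneous, i.e.\ purely linear with no constant term, in $\mathbf{w}_k$.

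Second, because both Eqs.~\eqref{equ:sub1} and~\eqref{equ:sub2} are linear in $\E_k$, the three scalar constraints of the $k$-th AC are linear and homogeneous in $\mathbf{w}_k$; collecting their coefficients gives a $3\times3$ matrix, which I would argue is precisely $\mathbf{N}_k$. Indeed, since $\mathbf{w}_k-\mathbf{t}=\R\mathbf{s}_i-\mathbf{s}_{i'}$ does not depend on $\mathbf{t}$, the coefficient matrix of $\mathbf{t}$ in these constraints coincides with the coefficient matrix of $\mathbf{w}_k$. Comparing with the block form of Eq.~\eqref{eq:equ_qxqyqz1} then identifies $(\M_k)_{(1:3,1:3)}=\mathbf{N}_k$ and yields the compact identity $\M_k[t_x,t_y,t_z,1]^T=\mathbf{N}_k\mathbf{w}_k$, so the whole AC constraint depends on $\mathbf{t}$ only through $\mathbf{N}_k\mathbf{w}_k$.

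Third, evaluating at the true relative pose makes the left-hand side vanish, giving $\mathbf{N}_k\mathbf{w}_k=\mathbf{0}$. For a non-degenerate configuration the relative translation of the camera pair satisfies $\mathbf{t}_{ii'}\neq\mathbf{0}$, hence $\mathbf{w}_k=\Q_{i'}\mathbf{t}_{ii'}\neq\mathbf{0}$; this is exactly the situation excluded by the degenerate motions of Section~\ref{sec:degenerate} (Propositions~\ref{theorem:inter_cam}--\ref{theorem:degenerate_case}), where the scale, and with it this vector, collapses. Therefore $\mathbf{N}_k$ possesses a nontrivial right null space spanned by $\mathbf{w}_k$, which immediately gives $\rank(\mathbf{N}_k)\le 2$ and the usable equation $\det(\mathbf{N}_k)=0$.

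Finally, I would establish that the rank is exactly $2$ by ruling out $\rank(\mathbf{N}_k)\le 1$. This amounts to showing that the epipolar row and the two affine rows of $\mathbf{N}_k$ are not all mutually proportional, so the null space is exactly one-dimensional. I expect this lower bound to be the main obstacle: unlike the singularity argument, it is genuinely a genericity/non-degeneracy statement and cannot follow from an algebraic identity alone. I would argue it by exhibiting, for a generic AC, two of the three rows that are linearly independent, for instance showing that the affine part contributes a direction not parallel to the epipolar coefficient $(\R\Q_i\x_k)\times(\Q_{i'}\x_k')$, and by noting that a rank drop to $1$ would leave the translation direction under-determined, which is precisely the degenerate behaviour of Section~\ref{sec:degenerate}. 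The same reasoning applies verbatim to both $k\in\{1,2\}$, completing the claim.
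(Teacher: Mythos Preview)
Your argument for the upper bound $\rank(\mathbf{N}_k)\le 2$ is essentially identical to the paper's: introduce $\bar{\mathbf{t}}=\mathbf{t}+\R\mathbf{s}_i-\mathbf{s}_{i'}$ (your $\mathbf{w}_k$), rewrite $\E_k=\Q_{i'}^T[\bar{\mathbf{t}}]_\times\R\Q_i$, observe that the three AC constraints become $\mathbf{N}_k\bar{\mathbf{t}}=\mathbf{0}$, and conclude that $\bar{\mathbf{t}}$ lies in the null space. You are actually slightly more careful than the paper in spelling out why $\bar{\mathbf{t}}\neq\mathbf{0}$ via $\mathbf{t}_{ii'}\neq\mathbf{0}$.

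Where you diverge is the lower bound $\rank(\mathbf{N}_k)\ge 2$, which you correctly flag as the obstacle but do not resolve. Your sketch---inspecting rows for non-proportionality and appealing to genericity of the affine part---is not wrong in spirit, but it is the hard way, and your fallback (``a rank drop to $1$ would leave the translation direction under-determined'') conflates the role of the $3\times 3$ block $\mathbf{N}_k$ of a single AC with the full $6\times 4$ system $\M$ that actually determines $\mathbf{t}$. The paper's argument here is a one-liner you missed: since $\M_k$ is $\mathbf{N}_k$ augmented by a single column, $\rank(\M_k)\le\rank(\mathbf{N}_k)+1$. Hence $\rank(\mathbf{N}_k)\le 1$ would force $\rank(\M_k)\le 2$, i.e.\ the $k$-th AC contributes at most two independent constraints to the pose, contradicting the standing non-degeneracy assumption that each AC furnishes three. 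This column-augmentation bound replaces your entire genericity discussion and closes the proof immediately.
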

\begin{proof}
	Suppose the $k$-th AC appears in the $i$-th camera of view~1 and the $i'$-th camera of view~2. 
	
	First we prove that $\rank(\mathbf{N}_k) \le 2$. To achieve this goal, we need to prove that the null space of $\mathbf{N}_k$ is not empty.
	According to Eq.~\eqref{eq:essential_matrix}, essential matrix $\E_k$ is
	\begin{align}
	\E_k = \Q_{i'}^T [\mathbf{t} + \R \mathbf{s}_i - \mathbf{s}_{i'}]_\times \R \Q_i.
	\end{align}
	Denote 
	\begin{align}
	\bar{\mathbf{t}} \triangleq \mathbf{t} + \R \mathbf{s}_i - \mathbf{s}_{i'},
	\end{align}
	then we have
	\begin{align}
	\E_k = \Q_{i'}^T [\bar{\mathbf{t}}]_\times \R \Q_i.
	\label{eq:new_essential_matrix}
	\end{align}
	Substituting Eq.~\eqref{eq:new_essential_matrix} into Eqs.~\eqref{equ:sub1} and~\eqref{equ:sub2}, we obtain three equations. Each monomial in these three equations is linear with one entry of vector $\bar{\mathbf{t}}$, and there is no constant term. Thus these three equations can be formulated as
	\begin{align}
	&\frac{1}{q_x^2+q_y^2+q_z^2+1} \mathbf{A}_{k} \bar{\mathbf{t}} = \mathbf{0},
	\label{eq:null_space} \\
	\Rightarrow & \mathbf{A}_k (\mathbf{t} + \R \mathbf{s}_i - \mathbf{s}_{i'}) = \mathbf{0}, \\
	\Rightarrow & 
	\begin{bmatrix}
	\mathbf{A}_k & \mathbf{A}_k(\R \mathbf{s}_i - \mathbf{s}_{i'})
	\end{bmatrix}
	\begin{bmatrix}
	\mathbf{t} \\ 1
	\end{bmatrix}
	= \mathbf{0}.
	\label{eq:equ_qxqyqz1_new_form}
	\end{align}
	By comparing the construction procedure of Eq.~\eqref{eq:equ_qxqyqz1} and Eq.~\eqref{eq:equ_qxqyqz1_new_form}, we can see that
	\begin{align}
	\mathbf{A}_k = (\mathbf{M}_k)_{(1:3,1:3)} = \mathbf{N}_k.
	\end{align}
	Substituting this equation into Eq.~\eqref{eq:null_space}, we can see that the null space of $\mathbf{N}_k$ is not empty. 
	
	Next we prove that $\rank(\mathbf{N}_k) \ge 2$. We achieve this goal using proof by contradiction. If $\rank(\mathbf{N}_k) \le 1$, then $\rank(\M_k) \le 2$ considering that $\M_k$ has one more column than $\mathbf{N}_k$. This means the $k$-th AC provides at most two independent constraints for the relative pose. This cannot be true for non-degenerate cases, so the assumption that $\rank(\mathbf{N}_k) \le 1$ is wrong.
\end{proof}

Similar to the single camera case, there is also a property for some submatrices of $\M$.
\begin{theorem}
	\label{theorem:factor_gcam}
	Suppose $\mathbf{N}$ is an arbitrary $4\times 4$ submatrix of $\M$ or an arbitrary $3\times 3$ submatrix of the first three columns of $\M$, the polynomial $\det(\mathbf{N})$ has a factor $q_x^2 + q_y^2 + q_z^2 + 1$. 
\end{theorem}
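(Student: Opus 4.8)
The plan is to reduce both assertions to Theorem~\ref{theorm:general_form} by writing each row of the first three columns of $\M$ in the cross-product form $\sum_i \a^{(i)}\times\Q\b^{(i)}$, and then to remove the extra fourth column by a cofactor expansion. Throughout I set $s = q_x^2+q_y^2+q_z^2+1$ and $\tilde{\R} = s\,\R_{\text{cayl}}$, so that $\tilde{\R}\tilde{\R}^T = s^2\mathbf{I}$; thus $\tilde{\R}$ is exactly the kind of matrix $\Q$ appearing in Theorem~\ref{theorm:general_form}.

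First I would split the essential matrix of Eq.~\eqref{eq:essential_matrix} into a part linear in $\mathbf{t}$ and a constant part, $\E_k = \Q_{i'}^T[\mathbf{t}]_\times\R\Q_i + \E_k^{(0)}$, and substitute this into the three scalar constraints~\eqref{equ:sub1} and~\eqref{equ:sub2}. Collecting the coefficients of $(t_x,t_y,t_z)$ and clearing the Cayley denominator $s$ yields precisely the first three columns of $\M_k$ as in Eq.~\eqref{eq:equ_qxqyqz1}, while the $\mathbf{t}$-free remainder yields the fourth column. The core of the computation is to show that each coefficient row is a sum of cross products of the required shape. For the epipolar constraint this is a single scalar-triple-product identity, $\x_k'^T\Q_{i'}^T[\mathbf{t}]_\times\R\Q_i\x_k = \tfrac1s\,\mathbf{t}\cdot\big((\tilde{\R}\Q_i\x_k)\times(\Q_{i'}\x_k')\big)$, so after clearing $s$ the coefficient of $\mathbf{t}$ is the single term $(\tilde{\R}\Q_i\x_k)\times(\Q_{i'}\x_k')$. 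For the two affine rows I would expand both $\E_k\x_k$ and $\E_k^T\x_k'$; the transpose converts the left action of $\R^T$ into $\tilde{\R}$ acting on the fixed columns $\Q_i\mathbf{e}_j$ (with $\mathbf{e}_j$ the standard basis), so the $j$-th affine coefficient row becomes $(\tilde{\R}\Q_i\mathbf{e}_j)\times(\Q_{i'}\x_k') + \sum_{l}(\A_k)_{lj}\,(\tilde{\R}\Q_i\x_k)\times(\Q_{i'}\mathbf{e}_l)$, once more a sum of terms $\a^{(i)}\times\tilde{\R}\b^{(i)}$. I expect this verification, and in particular the bookkeeping of the $\R^T$ term in the affine constraints, to be the main obstacle, as it is the only place where the structure is not produced by a one-line identity.

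With the cross-product structure established, the two cases are short. If $\mathbf{N}$ is a $3\times3$ submatrix of the first three columns of $\M$, then all three of its rows are cross-product sums, so Theorem~\ref{theorm:general_form} applies directly and $s = q_x^2+q_y^2+q_z^2+1$ divides $\det(\mathbf{N})$. If $\mathbf{N}$ is a $4\times4$ submatrix of $\M$, I would expand along the fourth column, $\det(\mathbf{N}) = \sum_{j=1}^{4}(-1)^{j+4}c_j\,M_j$, where the $c_j$ are the (polynomial) fourth-column entries and each minor $M_j$ is the determinant of a $3\times3$ matrix whose rows are three of the cross-product rows taken from the first three columns. By Theorem~\ref{theorm:general_form} every $M_j$ has the factor $s$, and since the $c_j$ are polynomials the full determinant inherits it. This argument never requires the four chosen rows to belong to a single affine correspondence, so it handles every admissible choice of four of the six rows of $\M$ and completes the proof.
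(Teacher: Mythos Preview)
Your proposal is correct and follows essentially the same route as the paper: you rewrite each row of the first three columns of $\M$ as a sum of cross products $\a^{(i)}\times\tilde{\R}\b^{(i)}$ (the paper packages this computation into a separate lemma, with the order of the cross-product factors swapped, which is immaterial by anticommutativity), then apply Theorem~\ref{theorm:general_form} directly for the $3\times3$ case and via Laplace expansion along the fourth column for the $4\times4$ case. Your explicit remark that the argument is insensitive to whether the chosen rows come from the same affine correspondence is a nice clarifying observation that the paper leaves implicit.
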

The proof is based on Theorem~\ref{theorm:general_form}, and it is provided in the supplementary material.
Based on Proposition~\ref{theorem:factor_gcam}, we divide the determinants by factor $q_x^2+q_y^2+q_z^2+1$, and the constraints become 
\begin{align}
\mathcal{E}_1 \triangleq \{ \quot(\det(\mathbf{N}), q_x^2+q_y^2+q_z^2+1) = 0 \ | \nonumber \\
\mathbf{N} \in 4\times 4 \text{ submatrices of } \M\}.
\label{eq:submatrix_4by4}
\end{align}
and
\begin{align}
\mathcal{E}_2 \triangleq \{\quot(\det(\mathbf{N}), q_x^2+q_y^2+q_z^2+1) = 0 \ | \nonumber \\
\mathbf{N} \in \{(\M_k)_{(1:3,1:3)}\}_{k=1,2} \}
\label{eq:submatrix_3by3_extra}
\end{align}
There are $15$ equations of degree $6$ and two equations of degree $4$ in $\mathcal{E}_1$ and $\mathcal{E}_2$, respectively.

Once the rotation
parameters $\{q_x, q_y, q_z \}$ have been obtained, the translation $[t_x, t_y, t_z]^T$ can be recovered by first calculating a vector in the null space of $\M$, and then normalizing the vector by dividing its last entry.

{\bf Remark}: We provide an explanation to Proposition~\ref{theorem:extra_constraint}. Vector $\bar{\mathbf{t}}$ is the translation vector between the $i$-th camera in view 1 and the $i'$-th camera in view 2, which is expressed in view 1.
If an AC is captured by a single camera, the equation system is homogeneous in $\mathbf{t}$. In contrast, if an AC is captured by a multi-camera system, the equation system is inhomogeneous in $\mathbf{t}$. 
When an AC is captured by a multi-camera system, it still satisfies the two-view geometry for a single camera by using proper translation.

\subsection{Instantiation in A Finite Prime Field}

Equation system~\eqref{eq:submatrix_3by3} and equation system~\eqref{eq:submatrix_4by4}\eqref{eq:submatrix_3by3_extra} can be solved by Gr{\"o}bner basis method~\cite{cox2013ideals}, which is a general method to solve polynomial equation systems.
Automatic solver generators~\cite{kukelova2008automatic,larsson2017efficient} can be used to construct solvers based on the Gr{\"o}bner basis method.
The most critical and challenging step in automatic solver generation is constructing a random instance of the original equation system in a finite prime field $\mathbb{Z}_p$~\cite{lidl1997finite}. This step aims to keep numerical stability and avoid large number arithmetic during the calculation of Gr{\"o}bner basis.
When constructing the random instance, the relations between the coefficients should be appropriately preserved. Otherwise, setting random values of the equation coefficients would destroy the latent relations and might result in a different problem without any solution. 
The technique of instantiation has been used in many minimal problems~\cite{stewenius2005solutions,bujnak2012algebraic,zhao2020minimal,pritts2020minimal}.

In our problem, the coefficients of the equation systems are not fully independent since they are determined by certain latent relations. During the instantiation of our problem, we take advantage of basic operations in previous literature~\cite{bujnak2012algebraic} and develop new operations in a finite field. 
In our problem, some new operations in $\mathbb{Z}_p$ should be defined. 
First, random oriented points (or infinitesimal patches equivalently) in 3D space should be defined appropriately. Each patch is defined by a point $\p$ and a unit normal $\n$ in 3D space. The equation of the plane is $\n^T(\y-\p) = 0$, where $\y$ is an arbitrary point in the plane. Since there is no square root for each number in finite prime fields, we need to try several times to generate a random unit normal.  
Second, the signed distance $d_0$ from a point $\y_0$ to previously defined plane is calculated by $d_0  = \n^T(\y_0 - \p)$ in $\mathbb{Z}_p$. 
Third, when a plane is captured by two views of a single camera, the homography is calculated by $\mathbf{H} = \R' + \frac{1}{d} \mathbf{t}' \mathbf{n}^T$, where $[\R', \mathbf{t}']$ is the relative pose from the first view to the second view, $\n$ is the unit normal of the plane expressed in the first view, and $d$ is the signed distance from the optical center of the first view to the plane. Finally, the affine transformation $\A$ can be calculated given the homograpy $\mathbf{H}$ and image coordinates of the point correspondence. The formula in the real number field can be found in~\cite{barath2018efficient}. Since there is only addition, subtraction, multiplication, and division in this formula, it can be directly transferred to finite prime field $\mathbb{Z}_p$ using the same form.

\subsection{A Series of Solvers}

\begin{table}[tbp]
	\centering
	\caption{Minimal solvers for 6DOF relative pose estimation of multi-camera systems. cayl: Cayley; quat: quaternion; inter: inter-camera correspondence (case~$6$); intra: intra-camera correspondence (case~$7$). }
	\label{tab:complete_solution}
	\setlength{\tabcolsep}{1.5pt}{
	\begin{tabular}{|l|c|c|c|c|c|c|c|c|c|c|} 
		\hline
		\multirow{2}{*}{\centering configuration} &  \multicolumn{3}{c|}{equations $\mathcal{E}_1$} &  \multicolumn{3}{c|}{equations $\mathcal{E}_1+\mathcal{E}_2$}  \\ 
		\cline{2-7} 
		&   \#sym &  \#sol  & template   &  \#sym &  \#sol    &   template  \\ 
		\hline
		2ac+cayl+(case $1$-$5$)  & 0 & $64$ &  $99\times 163$ & 0 & $48$ & $72\times 120$ \\ \hline
		2ac+cayl+inter(case $6$) & 0 & $56$ & $56\times 120$ & 0 & $48$ & $64\times 120$  \\ \hline
		2ac+cayl+intra(case $7$) & 0 & $1$-dim & $-$ & 0 & $48$ & $72\times 120$   \\ \hline  \hline
		2ac+quat+(case $1$-$5$)  & 1 & $128$ &  $342\times 406$ & 1 & $96$ & $152\times 200$ \\ \hline
		2ac+quat+inter(case $6$) & 1 & $112$ & $178\times 243$ & 1 & $96$ & $152\times 200$  \\ \hline
		2ac+quat+intra(case $7$) & 1 & $1$-dim & $-$ & 1 & $96$ & $152\times 200$ \\ \hline
	\end{tabular}
    }
\end{table}

The solver generator of Larsson et al.~\cite{larsson2017efficient} was used to find a series of solvers for different cases. \texttt{Macaulay~2}~\cite{grayson2002macaulay} is used to calculate Gr{\"o}bner basis. Both the Cayley and quaternion parameterizations are exploited.

For a single camera and Cayley parameterization, the solver has $20$~solutions, and the elimination template is $36\times 56$. 
For quaternion parameterization, the solver has $40$~solutions with one symmetry, and the elimination template is $60\times 80$.  

For a multi-camera system, the statistics of the resulted solvers are shown in Table~\ref{tab:complete_solution}. 
\texttt{\#sym} represents the number of symmetries, and \texttt{\#sol} represents the number of solutions. \texttt{$1$-dim} represents one dimensional extraneous roots.
We have the following observations.
(1)~Cases $1\sim 5$ have the same solver. These cases can be viewed as one category. 
(2)~If only $\mathcal{E}_1$ is used, cases $1\sim 5$ maximally have $64$ complex solutions. 
Case $6$ has $56$ complex solutions. However, case $7$ has one-dimensional families of extraneous roots.
(3)~If both $\mathcal{E}_1$ and $\mathcal{E}_2$ are used, cases $1\sim 7$ have $48$ complex solutions. (4) Using equations from $\mathcal{E}_1$ + $\mathcal{E}_2$ results in smaller eliminate templates than using $\mathcal{E}_1$ only; using Cayley parameterization results in smaller eliminate templates than using quaternion parameterization. So the Cayley parameterization is preferred. (5) For quaternion parameterization, we also tested the method without factoring out the factor $q_x^2 + q_y^2 + q_z^2 + q_w^2$. It results in larger eliminate templates, demonstrating the effectiveness of factoring out the factor.

As shown in Proposition~\ref{theorem:degenerate_case}, cases~$8$ and $9$ are degenerate. In these two cases, the rotation can be uniquely recovered, but the translation cannot be recovered. The proof of Proposition~\ref{theorem:degenerate_case} provides a method of recovering the rotation. The core component is the relative pose for a single camera. We use our minimal solver to estimate the relative pose of a single camera.

At first glance, using solvers resulted from equations $\mathcal{E}_1+\mathcal{E}_2$ is the first choice since they have a small number of solutions than using $\mathcal{E}_1$ only. However, we found that solvers from $\mathcal{E}_1$ have better numerical stability for case~$6$. 
The phenomenon that a larger number of basis than minimal requirement might have better numerical stability has been observed in previous literature~\cite{byrod2009fast,larsson2018beyond}. 
An empirical comparison of numerical stability is shown in the experiments.

\section{Relative Pose Recovery from Point Correspondences}
\label{sec:pt_pose}

In~\cite{stewenius2005solutions}, a seminal six-point method is proposed for estimating the relative pose of generalized cameras. In this method, a generalized camera is formed by abstracting landmark observations into spatial rays. In practice, the most common generalized camera is the multi-camera system. We propose a new six-point method for multi-camera systems. Our method is based on the same framework for generating AC-based solvers, demonstrating its versatility.

\subsection{Equation System Construction and Solving}
For the relative pose estimation from PCs, the equation system construction and solving is essentially the same as the case of AC in Sections~\ref{sec:minimal_config} and~\ref{sec:complete_solution}. 
One PC in a multi-camera system relates two perspective cameras across two views. Denote the $k$-th PC as $(\x_k, \x'_k, i_k, i'_k)$. It represents that a point is captured by the $i_k$-th camera in the first view, and its homogeneous coordinate in the normalized image plane is $\x_k$. It is also captured by the $i'_k$-th camera in the second view, and its homogeneous coordinate is $\x'_k$. 
To simplify the notation, we omit the subscript $k$ of camera indices $i$ and $i'$ in the following text.
The equation introduced by the $k$-th PC is same as Eq.~\eqref{equ:sub1}, and the essential matrix $\E_k$ is determined by Eq.~\eqref{eq:essential_matrix}. We re-write them as below
\begin{align}
\x_k'^T \E_k \x_k = 0,
\label{eq:essential_mat_pc}
\end{align}
where
\begin{align}
\E_k = \Q_{i'}^T (\R [\mathbf{s}_i]_\times + [\mathbf{t} - \mathbf{s}_{i'}]_\times \R) \Q_i.
\label{eq:essential_ek_pc}
\end{align}

It is well-known that 6PC is a minimal configuration for relative pose estimation of generalized cameras. Using Cayley parameterization, there are 6 equations of the form~\eqref{eq:essential_mat_pc}. By using Cayley parametrization, these equations can be reformulated as
\begin{align} 
\underbrace {
	\widehat{\M}(q_x, q_y, q_z)}_{6\times 4}
\begin{bmatrix}
{{{t}_x}}\\
{{{t}_y}}\\
{{{t}_z}}\\
1
\end{bmatrix} = {\mathbf{0}},
\label{eq:equ_qxqyqz1_pc}
\end{align}
where the entries of $\widehat{\M}$ are quadratic polynomials with three unknowns $q_x,q_y,q_z$. 
The $i$-th row corresponds to the constraint of $i$-th PC.
It can be seen that $\widehat{\M}$ has a null vector. 
Thus, the determinants of all the $4\times4$ submatrices of $\overline{\M}$ should be zero.

For a family of PCs which relates the same perspective cameras across two views, there is one property. 
\begin{theorem}
	\label{theorem:extra_constraint_pc}
	For non-degenerate cases, $\rank(\mathbf{N}) = 2$, $\forall \mathbf{N} \in \mathcal{S}$. By Matlab syntax, $\mathcal{S}$ is a set whose elements satisfying $\mathbf{N} = \widehat{\M}([k_1,k_2,k_3], 1:3)$, where $k_1$-th, $k_2$-th, and $k_3$-th PCs are captured by the same perspective cameras across two views and $k_1 < k_2 < k_3$. 
\end{theorem}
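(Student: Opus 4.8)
The plan is to follow the strategy of the proof of Theorem~\ref{theorem:extra_constraint}, adapted to the fact that three point correspondences sharing one camera pair are tied together by a single common translation. First I would rewrite the epipolar constraint of the $k$-th PC so that its dependence on $\mathbf{t}$ is explicit. Writing the essential matrix in the reparameterized form $\E_k = \Q_{i'}^T[\bar{\mathbf{t}}]_\times\R\Q_i$ of Eq.~\eqref{eq:new_essential_matrix}, with $\bar{\mathbf{t}} \triangleq \mathbf{t} + \R\mathbf{s}_i - \mathbf{s}_{i'}$, and applying the scalar triple-product identity $\mathbf{a}^T[\mathbf{b}]_\times\mathbf{c} = \mathbf{b}^T(\mathbf{c}\times\mathbf{a})$, the constraint $\x_k'^T\E_k\x_k = 0$ collapses to
\begin{align}
\mathbf{w}_k^T\bar{\mathbf{t}} = 0, \qquad \mathbf{w}_k \triangleq (\R\Q_i\x_k)\times(\Q_{i'}\x_k'),
\end{align}
where $\mathbf{w}_k$ depends on the rotation but not on $\mathbf{t}$. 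Comparing with the construction of Eq.~\eqref{eq:equ_qxqyqz1_pc}, the first three columns of the $k$-th row of $\widehat{\M}$ are exactly $\mathbf{w}_k^T$ up to the common nonzero Cayley factor $q_x^2+q_y^2+q_z^2+1$; consequently, for indices $k_1,k_2,k_3$ drawn from the same camera pair, the rows of $\mathbf{N}$ are the three vectors $\mathbf{w}_{k_j}^T$ up to that scalar.

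The key step is the observation that $\bar{\mathbf{t}}$ is identical for all three correspondences, since it depends only on the camera pair $(i,i')$ and the pose $\{\R,\mathbf{t}\}$, not on the individual 3D point. Hence $\mathbf{N}\bar{\mathbf{t}} = \mathbf{0}$. In non-degenerate cases $\bar{\mathbf{t}}\ne\mathbf{0}$ (otherwise the two camera centers would coincide), so $\bar{\mathbf{t}}$ is a nontrivial null vector and $\rank(\mathbf{N})\le 2$. This inequality is the useful half of the theorem: it is what guarantees $\det(\mathbf{N}) = 0$ and thus supplies the extra constraint analogous to $\mathcal{E}_2$.

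For the lower bound $\rank(\mathbf{N})\ge 2$ I would argue geometrically rather than reuse the counting contradiction of Theorem~\ref{theorem:extra_constraint}, because that counting argument does not transfer. Whereas a single AC genuinely furnishes three independent constraints, three co-camera PCs furnish only two independent constraints on the translation: evaluated at the true solution, $\widehat{\M}([k_1,k_2,k_3],:) = \mathbf{N}\,[\mathbf{I}_3,\ \R\mathbf{s}_i - \mathbf{s}_{i'}]$, whose rank equals $\rank(\mathbf{N})\le 2$. Instead I would note that all three normals $\mathbf{w}_{k_j}$ lie in the plane $\bar{\mathbf{t}}^\perp$; if $\rank(\mathbf{N})\le 1$ they would be mutually parallel, forcing the three epipolar planes to coincide, i.e.\ forcing the three 3D points to lie in a single plane through the baseline of cameras $i$ and $i'$. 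This is a measure-zero configuration excluded by the non-degeneracy hypothesis, whence $\rank(\mathbf{N}) = 2$.

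I expect the lower bound to be the main obstacle. The upper bound is immediate once the common-$\bar{\mathbf{t}}$ structure is exposed, but establishing $\rank\ge 2$ cannot rely on \emph{``the correspondences provide three constraints''}, since same-camera PCs are intrinsically rank-deficient in $\mathbf{t}$ and always produce a rank-$\le 2$ matrix $\mathbf{N}$. The real work is to identify exactly which point/baseline coplanarity makes the rank drop to one and to confirm that it is precisely the degeneracy ruled out by hypothesis.
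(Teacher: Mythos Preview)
Your upper bound argument ($\rank(\mathbf{N})\le 2$) is essentially identical to the paper's: both introduce $\bar{\mathbf{t}} = \mathbf{t} + \R\mathbf{s}_i - \mathbf{s}_{i'}$, observe it is common to all three rows because the camera pair is fixed, and conclude $\mathbf{N}\bar{\mathbf{t}}=\mathbf{0}$. You make the row vectors $\mathbf{w}_k$ explicit via the triple product, which the paper leaves implicit, but the substance is the same.

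For the lower bound you depart from the paper, and your departure is well-motivated. The paper reuses the counting contradiction of Theorem~\ref{theorem:extra_constraint} verbatim: it argues that if $\rank(\mathbf{N}_k)\le 1$ then the $3\times 4$ block $\widehat{\M}([k_1,k_2,k_3],1{:}4)$ has rank $\le 2$, hence the three correspondences give at most two independent constraints, ``which cannot be true for non-degenerate cases.'' You correctly point out that this reasoning does not transfer: because the fourth column equals $\mathbf{N}_k(\R\mathbf{s}_i-\mathbf{s}_{i'})$, the $3\times 4$ block \emph{always} has rank equal to $\rank(\mathbf{N}_k)\le 2$, so three same-camera PCs never supply more than two independent translation constraints, degenerate or not. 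The paper's contradiction therefore does not bite at the intended place. Your geometric replacement --- rank $\le 1$ forces the three epipolar-plane normals $\mathbf{w}_{k_j}$ to be parallel, hence the three 3D points are coplanar with the baseline of cameras $i$ and $i'$ --- is a genuine argument that isolates the actual measure-zero configuration ruled out by non-degeneracy. In short: same proof for $\le 2$; for $\ge 2$ you give a more careful argument than the paper, whose lower-bound step appears to be a copy of the AC case that does not stand on its own here.
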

The proof of is provided in the supplementary material. Again, a factor $ q_x^2+q_y^2+q_z^2+1$ can be factored out by applying Theorem~\ref{theorm:general_form}. 
In summary, the constraints are 
\begin{align}
\widehat{\mathcal{E}}_1 \triangleq \{ \quot(\det(\mathbf{N}), q_x^2+q_y^2+q_z^2+1) = 0 \ | \nonumber \\
\mathbf{N} \in 4\times 4 \text{ submatrices of } \widehat{\M}\}.
\label{eq:submatrix_4by4_pc}
\end{align}
and 
\begin{align}
\widehat{\mathcal{E}}_2 \triangleq \{ \quot(\det(\mathbf{N}), q_x^2+q_y^2+q_z^2+1) = 0 \ | 
\mathbf{N} \in \mathcal{S}\}.
\label{eq:submatrix_3by3_pc}
\end{align}
There are $15$ equations of degree $6$ in $\widehat{\mathcal{E}}_1$. 
The number of equations in $\widehat{\mathcal{E}}_2$ varies for different PC configurations. 
$\widehat{\mathcal{E}}_2$ might be empty set for certain configurations. 

\begin{figure}[tpb]
	\begin{center}
		\includegraphics[width=0.95\linewidth]{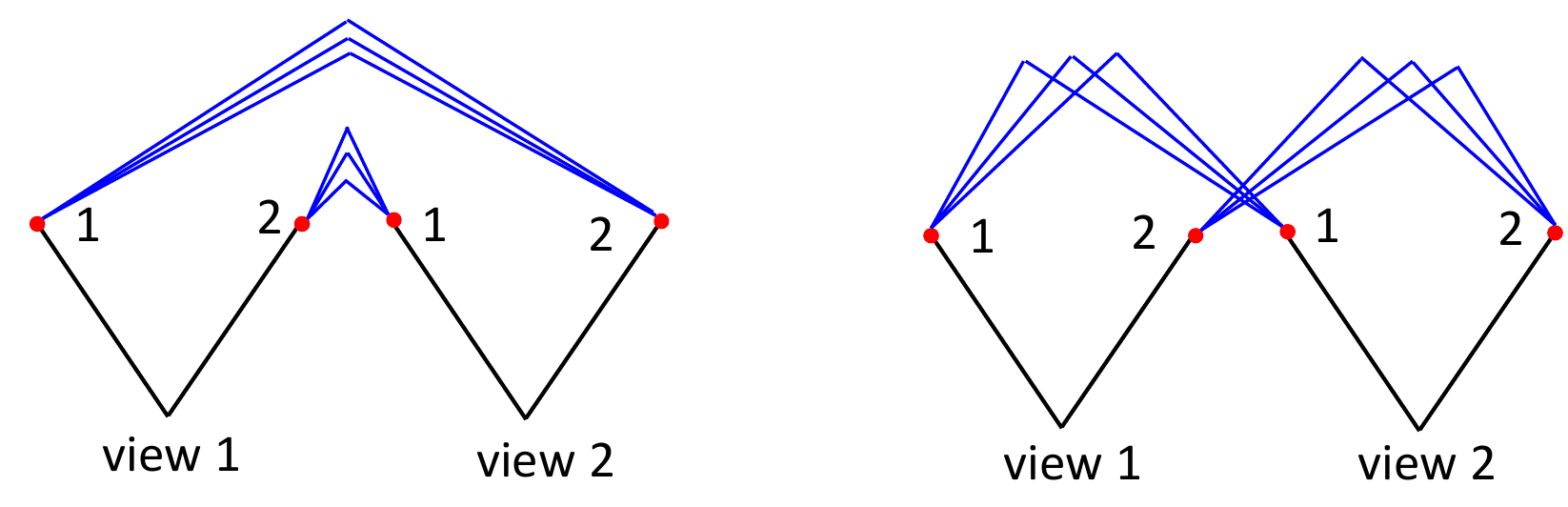}
	\end{center}
	\vspace{-0.1in}
	\caption{Relative pose estimation from PCs for a multi-camera system. Six points are captured by two views of a two-camera rig. We aim to recover the 6DOF relative pose using $6$~PCs. Left: inter-camera PCs; Right: intra-camera PCs.}
	\label{fig:teaser_pc_pose}
\end{figure}

\begin{table}[tbp]
	\centering
	\caption{Minimal solvers for 6DOF relative pose estimation of multi-camera systems. cayl: Cayley; quat: quaternion; inter: inter-camera correspondence; intra: intra-camera correspondence.}
	\label{tab:complete_solution_pc}
	\setlength{\tabcolsep}{2pt}{
		\begin{tabular}{|l|c|c|c|c|c|c|c|c|c|c|} 
			\hline
			\multirow{2}{*}{\centering configuration} &  \multicolumn{3}{c|}{equations $\hat{\mathcal{E}}_1$} &  \multicolumn{3}{c|}{equations $\hat{\mathcal{E}}_1+\hat{\mathcal{E}}_2$}  \\ 
			\cline{2-7} 
			&   \#sym &  \#sol  & template   &  \#sym &  \#sol    &   template  \\ 
			\hline
			6pt+cayl+inter & 0 & $56$ & $56\times 120$ & 0 & $48$ & $64\times 120$  \\ \hline
			6pt+cayl+intra & 0 & $1$-dim & $-$ & 0 & $48$ & $72\times 120$  \\ \hline  \hline
			6pt+quat+inter & 1 & $112$ & $174\times 243$  & 1 & $96$ &  $152\times 200$ \\ \hline
			6pt+quat+intra & 1 & $1$-dim & $-$ & 1 & $96$ & $152\times 200$  \\ \hline
		\end{tabular}
	}
\end{table}

We consider $2$ configurations for two-camera rigs. They are denoted as \texttt{6pt+inter} and \texttt{6pt+intra}, see Fig.~\ref{fig:teaser_pc_pose}. 
Both of them have two constraints in $\mathcal{E}_2$. 
The statistics of the resulted solvers are shown in Table~\ref{tab:complete_solution_pc}. For \texttt{6pt+quat+inter} configuration, an inequality $q_w \neq 0$ should be explicitly considered when using $\hat{\mathcal{E}}_1$ only. Otherwise, there is one-dimensional extraneous roots. We consider this inequality by the saturation method~\cite{larsson2017polynomial}, and obtain $112$ solutions with one symmetry.

\subsection{Relationship of AC-based and PC-based Solvers} 
When a plane is viewed in a pair of images, it is well-known that a homography relates the images of the plane~\cite{hartley2003multiple}. This is held for perspective cameras. For affine cameras, the homography can be simplified as an affine transformation. 
The key requirement for an affinity is that the imaging rays in each view are parallel, i.e., an orthogonal projection occurs. For perspective cameras, the affinity cannot be strictly held for image regions. However, local affinity is still satisfied for infinite-small neighborhoods of a point correspondence.  
An affine transformation is the first-order Taylor approximation of the related homography, i.e., it is tangent to the homography at the feature position~\cite{koser2008conjugate,bentolila2014conic}.

It was proved that one AC imposes three linear dependent constraints on relative pose~\cite{bentolila2014conic,raposo2016theory,barath2018efficient,eichhardt2018affine}. When constructing minimal configurations, 1AC can be roughly viewed as 3PCs. As a result, AC-based minimal solvers typically trisecting the number of minimum samples compared to PC-based counterparts. Take configurations \texttt{2ac+cayl+inter} and \texttt{6pt+cayl+inter} in this paper for an example, both of their minimal solvers have $48$ solutions. When using the same solver generator~\cite{larsson2017efficient}, the sizes of their elimination templates are equal. Despite their close relationship, 1AC is not identical to 3 PCs~\cite{bentolila2014conic}.

There arises a question. Is it possible to hallucinate three PCs from one AC, and use PC-based solvers for AC observations? Since the affinity holds for infinite-small regions only, we cannot exactly hallucinate points even given a noise-free affinity. In addition, there is a trade-off during the hallucination. On the one hand, the inter-distance of the hallucinated points should be small enough such that the approximation error is not large. On the other hand, the inter-distance should be large enough to avoid numerical instability of near-degeneration. In practice, there are a few methods to hallucinate 3 PCs from 1 AC~\cite{Chum2003epipolar,riggi2006fundamental}. It should be aware that hallucinated points will inevitably have approximation errors.

\section{Experiments}
\label{sec:experiment}

In this section, we conduct extensive experiments on synthetic and real-world data to evaluate the performance of the proposed solvers.  
For multi-camera systems, the proposed solvers are all based on Cayley parameterization.
For AC-based solvers, we focus on inter-camera configuration (case~6) and intra-camera configuration (case~7). The solvers are referred to as \texttt{2AC-inter} and \texttt{2AC-intra} methods for inter-camera and intra-camera ACs, respectively. 
Sometimes we need to further distinguish two solvers for \texttt{2AC-inter}. We use \texttt{2AC-inter-56} and \texttt{2AC-inter-48} to present solvers resulted from $\mathcal{E}_1$ and $\mathcal{E}_1 + \mathcal{E}_2$, respectively.
For PC-based solvers, we focus on inter-camera and intra-camera configurations. Their solvers are referred to as \texttt{6pt-inter} and \texttt{6pt-intra} methods, respectively. 
Sometimes we need to further distinguish two solvers for \texttt{6pt-inter}. We use \texttt{6pt-inter-56} and \texttt{6pt-inter-48} to present solvers resulted from $\widehat{\mathcal{E}}_1$ and $\widehat{\mathcal{E}}_1 + \widehat{\mathcal{E}}_2$, respectively.
The proposed solvers are compared with state-of-the-art solvers including  \texttt{17pt-Li}~\cite{li2008linear}, \texttt{8pt-Kneip}~\cite{kneip2014efficient}, and \texttt{6pt-Stew{\'e}nius}~\cite{stewenius2005solutions}. 
All the solvers are implemented in C++. The codes of comparison methods are provided by the OpenGV library~\cite{kneip2014opengv}.

In the real-world experiments, all the solvers are integrated into the RANSAC framework~\cite{fischler1981random} to reject outliers. The relative pose which produces the largest number of inliers is chosen. By following the default parameters in OpenGV~\cite{kneip2014opengv}, the confidence of RANSAC is $0.99$, and an inlier threshold angle is $0.1^\circ$. We demonstrate the feasibility of our methods on the \texttt{KITTI} dataset~\cite{geiger2013vision}.   

The rotation error is computed as the angular difference between the ground truth rotation and the estimated rotation: ${\varepsilon_{\mathbf{R}}} = \arccos ((\trace({\mathbf{R}_{\text{gt}}}{{\mathbf{R}^T}}) - 1)/2)$, where $\mathbf{R}_{\text{gt}}$ and ${\mathbf{R}}$ are the ground truth and estimated rotation matrices, respectively. 
By following the definition in~\cite{quan1999linear,lee2014relative}, the translation error is defined as ${\varepsilon_{\mathbf{t}}} = 2\left\| {{\mathbf{t}_{\text{gt}}}}-{\mathbf{t}}\right\|/(\left\| {\mathbf{t}_{\text{gt}}} \right\| + \left\| {{\mathbf{t}}} \right\|)$, where $\mathbf{t}_{\text{gt}}$ and ${\mathbf{t}}$ are the ground truth and estimated translations. The translation direction error is defined by $\varepsilon_{\mathbf{t},\text{dir}} = \arccos(\mathbf{t}_{\text{gt}}^T \mathbf{t} / (\|\mathbf{t}_{\text{gt}}\| \cdot \|\mathbf{t}\|) )$.

\subsection{Efficiency and Numerical Stability}

\begin{table*}[tbp]
	\caption{Runtime comparison of solvers for multi-camera systems (unit:~$\mu s$).}
	\begin{center}
	\setlength{\tabcolsep}{3pt}{
	{
		\begin{tabular}{lccccccccc}
		\toprule
		method & {17pt-Li}~\cite{li2008linear} & {8pt-Kneip}~\cite{kneip2014efficient} &  {6pt-Stew{\'e}nius}~\cite{stewenius2005solutions} & {6pt-inter-56} & {6pt-inter-48} & {6pt-intra} & {2AC-inter-56} & {2AC-inter-48} & {2AC-intra} \\
		\midrule
		mean time & 43.3 & 102.0 & 3275.4 & 1629.3& 1416.4&  1410.5 & 1669.5 & 1451.7 & 1491.8 \\
		\bottomrule
		\end{tabular}}}
	\end{center}
	\label{tab:runtime2}
\end{table*}

The runtimes of our solvers and the comparative solvers are evaluated using an Intel(R) Core(TM) i7-7800X 3.50GHz. 
Table~\ref{tab:runtime2} shows the average runtime of the solvers over $10,000$ runs for multi-camera systems. 
The proposed solvers take $1.4 \sim 1.7$ milliseconds. Among the minimal solvers, all the proposed solvers are more efficient than another minimal solver \texttt{6pt-Stew{\'e}nius}. 
Since \texttt{17pt-Li} is a linear solver, it is the most efficient.
As shown later, the proposed solvers need fewer correspondences than comparison methods and thus have better overall efficiency when integrating them into RANSAC.

\begin{figure}[tbp]
	\begin{center}
	\includegraphics[width=0.60\linewidth]{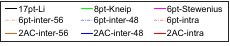}\\
	\vspace{-5pt} 
	\subfigure[rotation error $\varepsilon_{\R,\text{chordal}}$]
	{
		\includegraphics[width=0.47\linewidth]{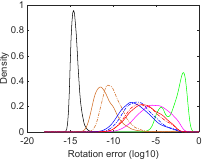}
	}
	\subfigure[translation error $\varepsilon_{\mathbf{t}}$]
	{
		\includegraphics[width=0.47\linewidth]{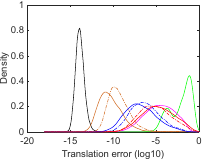}
	}
	\end{center}
	\vspace{-0.1in}
	\caption{Probability density functions over pose estimation errors on noise-free observations for multi-camera systems. The horizontal axis represents the $\log_{10}$ errors, and the vertical axis represents the density.}
	\label{fig:Numerical}
\end{figure} 

Figure~\ref{fig:Numerical} reports the numerical stability of the solvers on noise-free observations\footnote{For \texttt{2AC-intra} and \texttt{6pt-intra}, a technique is used to improve their numerical stability. More details can be found in the supplementary material.}. The procedure is repeated $10,000$ times. 
The numerical error for rotation is measured by $\varepsilon_{\R,\text{chordal}} = \min_i \| \R_i - \R_{\text{gt}} \|$, where $i$ counts all real solutions. 
The numerical error for translation is measure by $\varepsilon_{\mathbf{t}}$.
We did not use $\varepsilon_{\R}$ to evaluate numerical stability because the $\arccos$ function will introduce non-negligible numerical errors.
The empirical probability density functions are plotted as the function of the $\log_{10}$ estimated errors $\varepsilon_{\R,\text{chordal}}$ and $\varepsilon_{\mathbf{t}}$. 

Among the AC-based minimal solvers, the proposed \texttt{2AC-inter-56} solver has significantly better numerical stability than \texttt{2AC-inter-48} and \texttt{2AC-intra} solvers. 
Based on this result, we recommend \texttt{2AC-inter-56} as the default solver for inter-camera ACs. It will be used for the following experiments, and we refer to it as \texttt{2AC-inter}.

Among the PC-based minimal solvers, the proposed \texttt{6pt-inter-56} solver has significantly better numerical stability than \texttt{6pt-inter-48}, \texttt{6pt-intra} and \texttt{6pt-Stew{\'e}nius} solvers. 
Based on this result, we recommend \texttt{6pt-inter-56} as the default solver for inter-camera PCs. It will be used for the following experiments, and we refer to it as \texttt{6pt-inter}.
\texttt{17pt-Li} has the best numerical stability since it is a linear solver and needs the fewest calculations. The \texttt{8pt-Kneip} method based on iterative optimization is susceptible to falling into local minima and has the worst numerical stability.

\begin{figure}[tbp]
	\begin{center}
		\includegraphics[width=0.75\linewidth]{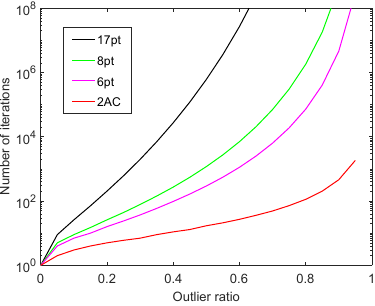}
	\end{center}
	\caption{RANSAC iteration number with respect to outlier ratio for success probability $0.99$.}
	\vspace{-0.1in}
	\label{fig:RANSACIteration}
\end{figure}  

In addition to efficiency and numerical stability, the minimal number of required geometric primitives is also an important factor for a solver. 
The iteration number $N$ of RANSAC is determined by  $N=\log(1-p)/\log(1-(1-\epsilon)^s)$, where $s$ is the minimal number of required geometric primitives, $\epsilon$ is the outlier ratio, and $p$ is the success probability. 
For a success probability $0.99$, the RANSAC iterations needed with respect to the outlier ratio are shown in Fig.\ref{fig:RANSACIteration}. 
It can be seen that the iteration number of the RANSAC estimator increases exponentially with respect to the number $s$. For example, given a percentage of outliers $\epsilon$ = $50\%$, when the solvers require $17$, $8$, $6$ and $2$ primitives, the RANSAC needs $603607$, $1177$, $292$ and $16$ iterations, respectively. 
Since the proposed AC-based solvers need only two geometric primitives, they can be used efficiently for outlier detection when integrating them into the RANSAC framework. As we will see later, the proposed AC-based solvers have better overall efficiency than PC-based solvers.

\subsection{Experiments on Synthetic Data}

We defined a simulated forward-facing  two-camera rig by following the \texttt{KITTI} autonomous driving platform~\cite{geiger2013vision}. The baseline length between the two simulated cameras is set to $1$~meter. 
The multi-camera reference frame is defined at the middle of the camera rig, and the translation between two multi-camera reference frames is $3$~meters. The resolution of the cameras is $640\times 480$ pixels, and the focal lengths are $400$~pixels. The principal points are set to the image center.

The synthetic scene is made up of a ground plane and $50$ random planes, which are randomly generated in a cube of $[-5,5] \times [-5,5] \times [10,20]$ meters, which are expressed in the respective axis of the multi-camera reference frame. We choose $50$ ACs from the ground plane and an AC from each random plane randomly. Thus, there are $100$ ACs generated randomly in the synthetic data. For each AC, a random 3D point from a plane is reprojected onto two cameras to get an image point pair. The associated affine transformation is obtained by the following procedure. 
First, four points are chosen in view~1 that are vertices of a square, where the center of the square is the projected point of an AC. 
The side length of the square is set as $30$ or $40$~pixels. A larger side length causes smaller affinity noise.
Second, the four corresponding points in view~2 are determined by the ground truth homography. Third, the sampled point pairs are contaminated by Gaussian noise. Fourth, we estimate a noisy homography using noisy point pairs. The noisy affine transformation is the first-order approximation of the noisy homography matrix.
	
A total of $1000$ trials are carried out in the synthetic experiment. In each trial, $100$ ACs are generated randomly. Two ACs for the proposed methods are selected randomly. The error is measured on the best relative pose, which produces the most inliers within the RANSAC scheme. The RANSAC scheme also allows us to select the best candidate from multiple solutions. The median of errors is used to assess the rotation and translation errors. 
In this set of experiments, the translation direction between two multi-camera references is chosen to produce either forward, sideways, or random motions. For each motion, the second view is perturbed by a random rotation. This random rotation is rotated around three axes in order, and the rotation angles range from $-10^\circ$ to $10^\circ$. 

Figure~\ref{fig:RT_sythetic} demonstrates the performance of different methods against image noise. 
Solid lines indicate using inter-camera correspondences, and dash-dotted lines indicate using intra-camera correspondences. 
We have the following observations. 
(1)~Using inter-camera correspondences has better performance than using intra-camera correspondences. 
(2)~The performance of AC-based methods is influenced by the magnitude of affinity noise, determined by the support region of sampled points. When the side length of the square is $40$~pixels, the proposed 2AC-based methods have better or comparable performance than the comparative methods. 
(3)~The proposed PC-based methods have better performance than \texttt{6pt-Stew{\'e}nius}.
(4)~The iterative optimization in \texttt{8pt-Kneip} is susceptible to falling into local minima. It performs well for forward motion. However, it does not perform well for the other two motion modes, especially when using intra-cam PCs. Sometimes the error curves are out of the display range.

\begin{figure*}[tbp]
	\begin{center}
	\includegraphics[width=0.55\linewidth]{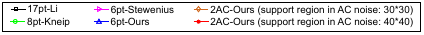}\\
	\vspace{-2pt} 
		\subfigure[${\varepsilon_{\mathbf{R}}}$]
		{
			\includegraphics[width=0.27\linewidth]{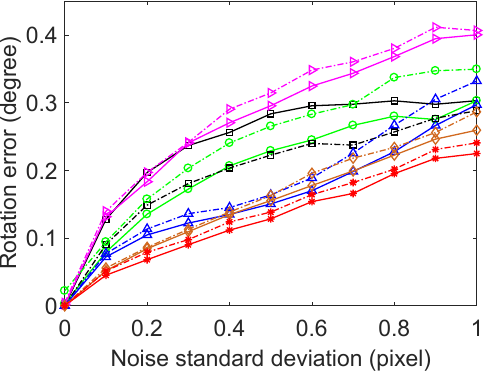}
		}
		\subfigure[${\varepsilon_{\mathbf{t}}}$]
		{
			\includegraphics[width=0.27\linewidth]{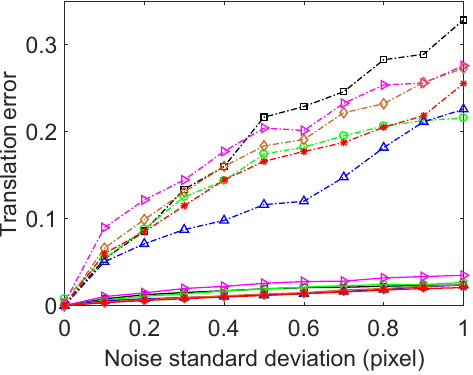}
		}
		\subfigure[$\varepsilon_{\mathbf{t},\text{dir}}$]
		{
			\includegraphics[width=0.27\linewidth]{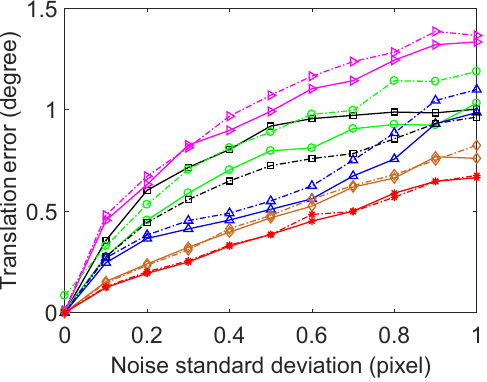}
		}
		\subfigure[${\varepsilon_{\mathbf{R}}}$]
		{
			\includegraphics[width=0.27\linewidth]{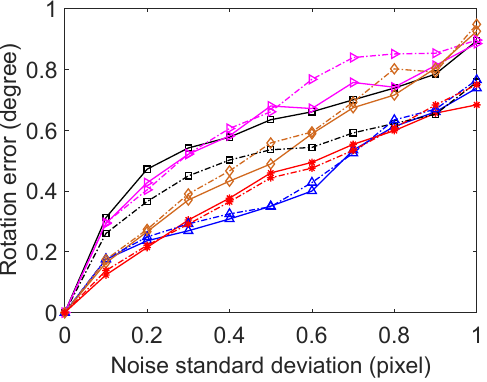}
		}
		\subfigure[${\varepsilon_{\mathbf{t}}}$]
		{
			\includegraphics[width=0.27\linewidth]{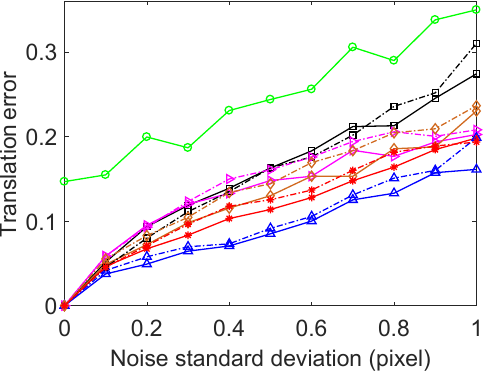}
		}
		\subfigure[$\varepsilon_{\mathbf{t},\text{dir}}$]
		{
			\includegraphics[width=0.27\linewidth]{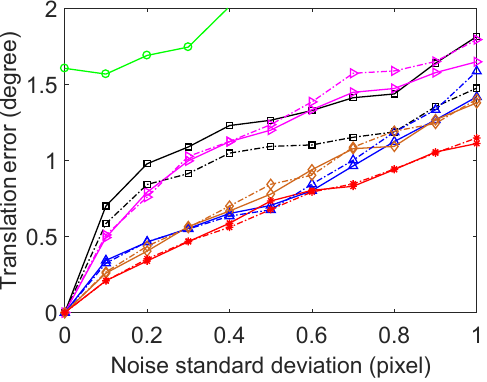}
		}
		\subfigure[${\varepsilon_{\mathbf{R}}}$]
		{
			\includegraphics[width=0.27\linewidth]{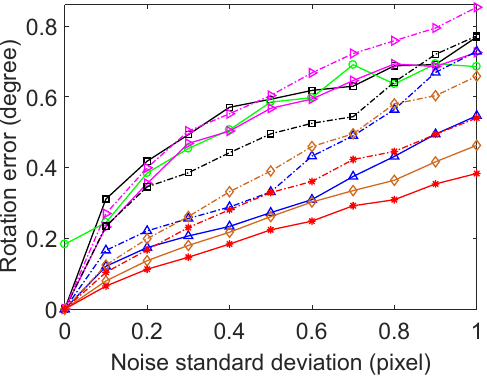}
		}
		\subfigure[${\varepsilon_{\mathbf{t}}}$]
		{
			\includegraphics[width=0.27\linewidth]{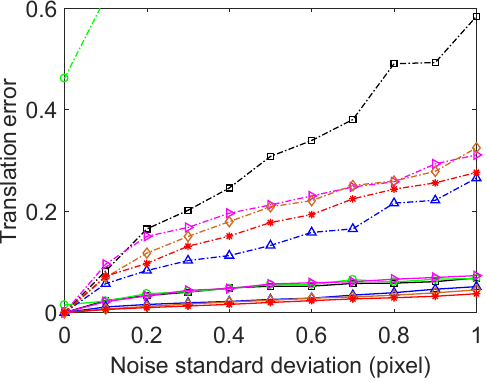}
		}
		\subfigure[$\varepsilon_{\mathbf{t},\text{dir}}$]
		{
			\includegraphics[width=0.27\linewidth]{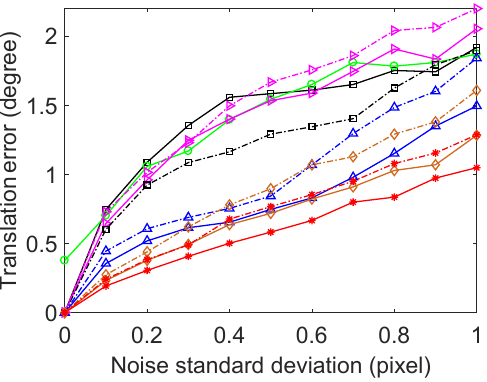}
		}
	\end{center}
	\caption{Rotation and translation error with varying image noise. The first, second, and third rows correspond to forward, sideways, and random motions, respectively. Solid lines indicate using inter-camera ACs, and dash-dotted lines indicate using intra-camera ACs.}
	\label{fig:RT_sythetic}
\end{figure*}

\subsection{Experiments on Real-World Data}

We test the performance of our methods on the \texttt{KITTI} dataset~\cite{geiger2013vision}, which consists of successive video frames from a forward-facing stereo camera. The sequences labeled from 00 to 10 that have ground truth are used for the evaluation. Therefore, the methods were tested on a total of $23000$ image pairs. The ACs between consecutive frames in each camera are established by applying the ASIFT~\cite{morel2009asift}. It can also be obtained by MSER~\cite{matas2004robust} which will be slightly less accurate but much faster to obtain~\cite{barath2016accurate}. We ignore the overlap in their fields of view and treat it as a general multi-camera system. The ACs across the two cameras are not matched, and the translation scale is not estimated as the movement between consecutive frames is small. Instead, integrating the acceleration over time from an IMU is more suitable for recovering the translation scale~\cite{nutzi2011fusion}. All the solvers have been integrated into a RANSAC scheme to deal with outliers.

\begin{table}[tbp]
	\caption{Rotation and translation error of multi-camera systems on \texttt{KITTI} sequences (unit: degree).}
	\begin{center}
	\setlength{\tabcolsep}{2.5pt}{
		\begin{tabular}{|c|cc|cc|cc|cc|cc|}
			\hline
			\multirow{2}{*}{Seq.} & \multicolumn{2}{c|}{17pt-Li} & \multicolumn{2}{c|}{8pt-Kneip} & \multicolumn{2}{c|}{6pt-Stew} & \multicolumn{2}{c|}{6pt-intra} & \multicolumn{2}{c|}{2AC-intra} \\ \cline{2-11} 
			& ${\varepsilon_{\mathbf{R}}}$ & $\varepsilon_{\mathbf{t},\text{dir}}$ & ${\varepsilon_{\mathbf{R}}}$ & $\varepsilon_{\mathbf{t},\text{dir}}$ & ${\varepsilon_{\mathbf{R}}}$ & $\varepsilon_{\mathbf{t},\text{dir}}$ & ${\varepsilon_{\mathbf{R}}}$ & $\varepsilon_{\mathbf{t},\text{dir}}$ & ${\varepsilon_{\mathbf{R}}}$ & $\varepsilon_{\mathbf{t},\text{dir}}$ \\ \hline
			00 &   0.139 & 2.412 &  0.130  &  2.400 & 0.229 & 4.007 & 0.168 & 3.311 & \textbf{0.123} & \textbf{2.291}     \\
			\rowcolor{gray!10}01 & 0.158 & 5.231 &  0.171  &  4.102& 0.762 & 41.19 & 0.335 & 16.24 & \textbf{0.139} & \textbf{2.863}     \\
			02 & 0.123 & 1.740 &  0.126 & 1.739& 0.186 & 2.508 & 0.152 & 2.294 &\textbf{0.118} & \textbf{1.658}     \\
			\rowcolor{gray!10}03 & 0.115 & 2.744 &  0.108 & 2.805& 0.265 & 6.191 & 0.158 & 4.073 &\textbf{0.104} & \textbf{2.506}     \\
			04 & 0.099 & \textbf{1.560} &  0.116 & 1.746 & 0.202 & 3.619 & 0.173 & 2.887 & \textbf{0.093} & 1.615     \\
			\rowcolor{gray!10}05 & 0.119 & 2.289 &  0.112 & 2.281 & 0.199 & 4.155 & 0.141 & 2.964 &  \textbf{0.107} &\textbf{2.216}     \\
			06 & 0.116 & 2.071 &  0.118 & 1.862 & 0.168 & 2.739 & 0.152 &  2.427 & \textbf{0.110} & \textbf{1.814}     \\
			\rowcolor{gray!10}07 & 0.119 & 3.002 &  \textbf{0.112} & 3.029 & 0.245 & 6.397 & 0.171  & 4.045 & 0.126 & \textbf{2.715}    \\
			08 & 0.116 & 2.386 &  0.111 & 2.349 & 0.196 & 3.909 & 0.151  & 3.135 & \textbf{0.091} & \textbf{2.267}           \\
			\rowcolor{gray!10}09 & 0.133 & 1.977 &  0.125 & 1.806 & 0.179 & 2.592 & 0.157 & 2.552 & \textbf{0.119} & \textbf{1.723}     \\
			10 & 0.127 & 1.889 &  \textbf{0.115} & 1.893 & 0.201 & 2.781 & 0.185   & 2.433 & 0.182 & \textbf{1.668} \\
			\hline
		\end{tabular}
	}
	\end{center}
	\label{VerticalRTErrror}
\end{table}

\begin{figure}[htbp]
	\begin{center}
		\subfigure[rotation error $\varepsilon_{\R}$]
		{
			\includegraphics[width=0.825\linewidth]{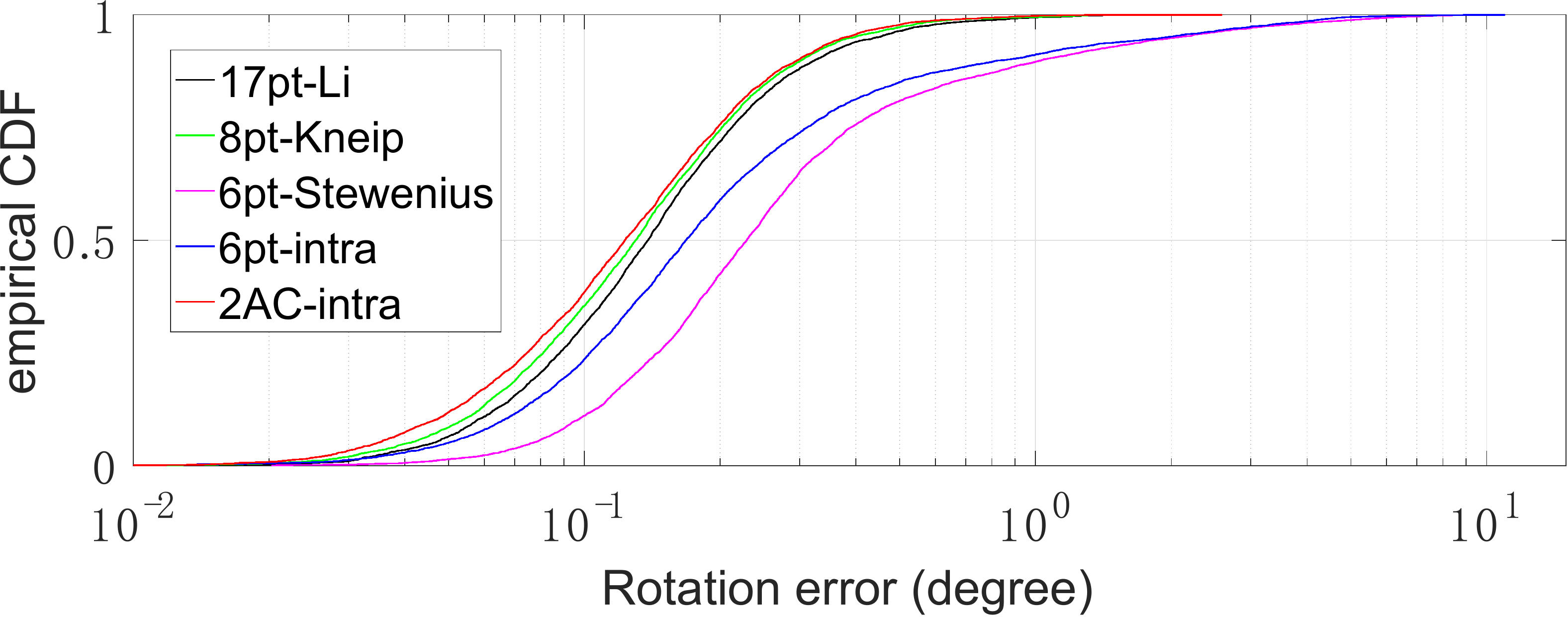}
		}
		\subfigure[translation err $\varepsilon_{\mathbf{t},\text{dir}}$]
		{
			\includegraphics[width=0.825\linewidth]{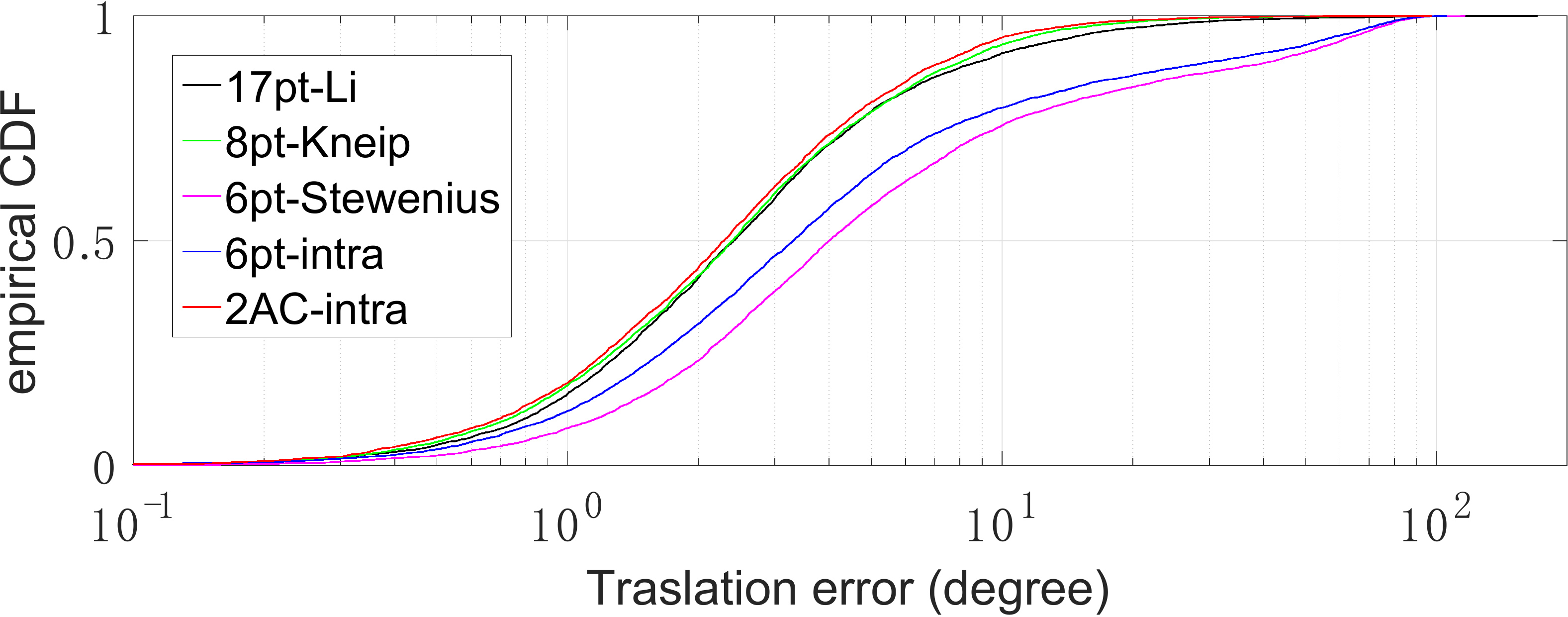}
		}
	\end{center}
	\vspace{-0.1in}
	\caption{Empirical cumulative distribution functions for KITTI sequence 00.}
	\label{fig:RTCDF}
\end{figure} 

\begin{table}[tbp]
	\caption{Runtime of RANSAC averaged over \texttt{KITTI} sequences combined with different solvers (unit: second).}
	\begin{center}
		\setlength{\tabcolsep}{3pt}{
			\begin{tabular}{lccccc}
				\toprule
				method &  17pt-Li & 8pt-Kneip &  6pt-Stew & 6pt-intra & 2AC-intra  \\
				\midrule
				mean time & 52.82 & 10.36 & 79.76 & 48.93 & {\bf 5.15} \\
				std. deviation & 2.62 & 1.59 & 4.52 & 3.17 &  {\bf 0.38} \\
				\bottomrule
			\end{tabular}
		}
	\end{center}
	\label{fig:RANSACTime}
\end{table}

The proposed \texttt{2AC-intra} and \texttt{6pt-intra} methods are compared with \texttt{17pt-Li}~\cite{li2008linear}, \texttt{8pt-Kneip}~\cite{kneip2014efficient}, and \texttt{6pt-Stew{\'e}nius}~\cite{stewenius2005solutions}. 
The results of the rotation and translation estimation are shown in Table~\ref{VerticalRTErrror}.
The \texttt{2AC-intra} offers the best overall performance among all the methods. 
The \texttt{6pt-intra} has consistently better performance than \texttt{6pt-Stew{\'e}nius}.
In Fig.~\ref{fig:RTCDF}, the empirical cumulative distribution functions for KITTI sequence~00 are shown. 
It also demonstrates the proposed \texttt{2AC-intra} offers the best overall performance in comparison to state-of-the-art methods.

The runtimes of RANSAC averaged over \texttt{KITTI} sequences combined with different solvers are shown in Table~\ref{fig:RANSACTime}.
Due to the benefits of computational efficiency, the \texttt{2AC-intra} method is suitable for finding a correct inlier set, which is then used for accurate motion estimation in visual odometry.

\section{Conclusion}
\label{sec:conclusion}

We proposed a complete solution and a series of solvers for relative pose estimation by exploiting affine correspondences. A minimum of two affine correspondences is used to estimate 6DOF relative pose of a multi-camera system. 
Two minimal solvers using point correspondences are also proposed for two-camera rigs. 
A few extensions to relative pose estimation with known rotation axis and/or unknown focal lengths are also proposed. 
All the proposed solvers are based on a unified and versatile framework.
We evaluate the proposed solvers on both synthetic data and real-world image datasets. The experimental results demonstrated that the proposed solvers for multi-camera systems provide better overall efficiency and accuracy than state-of-the-art methods.

\ifCLASSOPTIONcompsoc
 \section*{Acknowledgments}
\else
\fi

The authors thank Dr. Qi Xie at Xi'an Jiaotong University for his help regarding the proofs.

\ifCLASSOPTIONcaptionsoff
  \newpage
\fi



\bibliographystyle{IEEEtran}
%
{
	\bibliography{egbib}
}

%












\clearpage
\Large
\begin{center}
	{\bf Supplementary Material }
\end{center}
\normalsize

%


\appendices

\section{Proof}

\subsection{Proof of Theorem~1}

In this section, we use ``$\cdot$'' to represent dot product.

\begin{lemma}
Properties of cross product:

\noindent (i) $\a \cdot (\b \times \c) = 
\left|\begin{array}{ccc}
a_1 & a_2 & a_3 \\
b_1 & b_2 & b_3 \\
c_1 & c_2 & c_3
\end{array}\right|
= 
\left|\begin{array}{ccc}
a_1 & b_1 & c_1 \\
a_2 & b_2 & c_2 \\
a_3 & b_3 & c_3
\end{array}\right|$, where $\a = [a_1, a_2, a_3]^T$, $\b = [b_1, b_2, b_3]^T$, $\c = [c_1, c_2, c_3]^T$.

\noindent(ii) $\a \times (\b \times \c) = \b (\a \cdot \c) - \c (\a \cdot \b)$. This is known as triple product expansion, or Lagrange's formula.

\noindent(iii) $\a \cdot (\b \times \c) = \b \cdot (\c \times \a) = \c \cdot (\a \times \b)$. The scalar triple product is unchanged under a circular shift of its three operands $(\a, \b, \c)$.

\noindent(iv) $\a \times (\b + \c) = (\a \times \b) + (\a\times \c)$.

\noindent(v) $\a \times \b = [\a]_{\times} \b = -[\b]_{\times} \a$.
\label{lemma:crossproduct}
\end{lemma}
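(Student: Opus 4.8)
The plan is to fix, once and for all, the coordinate definition of the cross product, $\a \times \b = [\,a_2 b_3 - a_3 b_2,\ a_3 b_1 - a_1 b_3,\ a_1 b_2 - a_2 b_1\,]^T$, together with the skew-symmetric matrix $[\a]_\times$ whose above-diagonal entries in positions $(1,2),(1,3),(2,3)$ are $-a_3,\,a_2,\,-a_1$. Every item (i)--(v) then becomes an identity between explicit polynomials in the six (or nine) scalar components, so each reduces to a finite verification. The only real work is to organize these verifications so that later items reuse earlier ones rather than re-expanding everything from scratch.

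First I would prove (v), since it supplies the matrix form that streamlines the rest. Writing out $[\a]_\times \b$ componentwise recovers exactly the defining formula for $\a \times \b$, and the second equality $[\a]_\times \b = -[\b]_\times \a$ follows either by direct component comparison or from the antisymmetry $\b \times \a = -(\a \times \b)$. Next, (iv) is immediate from (v): $\a \times (\b+\c) = [\a]_\times(\b+\c) = [\a]_\times \b + [\a]_\times \c$ by linearity of matrix--vector multiplication, with no further computation.

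For (i) I would compute $\b \times \c$ from the definition and dot it with $\a$; the resulting scalar $a_1(b_2 c_3 - b_3 c_2) + a_2(b_3 c_1 - b_1 c_3) + a_3(b_1 c_2 - b_2 c_1)$ is precisely the cofactor expansion along the first row of the determinant with rows $\a,\b,\c$, and the second equality in (i) is just invariance of the determinant under transposition. Item (iii) I would then obtain as a corollary of (i): a cyclic shift $(\a,\b,\c) \mapsto (\b,\c,\a)$ of the rows is a $3$-cycle, hence an even permutation, so the determinant---and therefore the scalar triple product---is unchanged.

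The main obstacle, such as it is, is item (ii), which carries the heaviest algebra. I would verify it componentwise: expanding $\a \times (\b \times \c)$ via the definition gives, for the first component, $a_2(b_1 c_2 - b_2 c_1) - a_3(b_3 c_1 - b_1 c_3)$; adding and subtracting $a_1 b_1 c_1$ regroups this as $b_1(a_1 c_1 + a_2 c_2 + a_3 c_3) - c_1(a_1 b_1 + a_2 b_2 + a_3 b_3)$, which is exactly the first component of $\b(\a\cdot\c) - \c(\a\cdot\b)$. The remaining two components follow by the cyclic symmetry of the indices, so a single explicit component plus a symmetry remark suffices. No step presents a genuine difficulty; the care required is purely in sign bookkeeping and in choosing the order (v)$\to$(iv), (i)$\to$(iii), then (ii), so that each identity leans either on the definition or on an already-proved item.
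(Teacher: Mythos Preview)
Your proposal is correct and well organized; each of (i)--(v) is handled by a clean coordinate argument, and the ordering (v)$\to$(iv), (i)$\to$(iii), then (ii) is sensible. The sign bookkeeping and the add-and-subtract trick for (ii) are exactly what one expects.

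That said, the paper does not prove this lemma at all: it simply cites the Wikipedia pages on the cross product and the triple product and moves on. So the difference is not one of technique but of genre---you supply an elementary self-contained verification, whereas the paper treats these identities as well-known background and defers to an external reference. Your approach buys self-containment and would be appropriate in a textbook or lecture notes; the paper's approach is the standard choice in a research article, where reproving such identities would be out of place.
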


The above properties can be found in [\url{https://en.wikipedia.org/wiki/Cross_product}] and [\url{https://en.wikipedia.org/wiki/Triple_product}].

\begin{lemma}
\label{lemma:ortho_cross_prod}
Suppose $\Q$ is a matrix satisfying
\begin{align}
\Q \Q^T = s^2 \mathbf{I} = 
\begin{bmatrix}
s^2 & 0 & 0 \\
0 & s^2 & 0 \\
0 & 0 & s^2 \\
\end{bmatrix},
\ s > 0
\label{eq:ortho}
\end{align}
then
\begin{align}
\Q_i \times \Q_j = 
\begin{cases}
s I_{k} \Q_k & \text{if } i\neq j,\\
0 & \text{if } i = j.\\
\end{cases}
\label{eq:cross_product_ortho_matrix}
\end{align}
where $\mathbf{I}$ is an identity matrix, $\Q_i$ is the $i$-th row (or column) of $\Q$, $I_{k} = +1$ or $-1$, and $k = \{1,2,3\} \setminus \{i, j\}$.
\end{lemma}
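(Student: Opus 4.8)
The plan is to read the hypothesis $\Q\Q^T = s^2\mathbf{I}$ as a statement about the rows of $\Q$. The $(i,j)$ entry of $\Q\Q^T$ is precisely the dot product $\Q_i \cdot \Q_j$, so the hypothesis says the three rows are mutually orthogonal with common length $s$: $\Q_i\cdot\Q_j = s^2$ if $i=j$ and $\Q_i\cdot\Q_j = 0$ if $i \neq j$. Since $s>0$ the rows are nonzero, hence linearly independent, and therefore form an orthogonal basis of $\mathbb{R}^3$. Moreover $\Q\Q^T = s^2\mathbf{I}$ forces $\Q$ to be invertible with $\Q^{-1} = s^{-2}\Q^T$, whence $\Q^T\Q = s^2\mathbf{I}$ as well; the columns thus satisfy the same orthogonality, and the argument below applies verbatim in the column case. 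The diagonal case is immediate: by Lemma~\ref{lemma:crossproduct}(v), $\Q_i \times \Q_i = [\Q_i]_\times \Q_i = \mathbf{0}$.

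For $i \neq j$, I would first verify that $\Q_i \times \Q_j$ is orthogonal to the plane $\mathrm{span}\{\Q_i,\Q_j\}$. Combining the cyclic invariance of the scalar triple product (Lemma~\ref{lemma:crossproduct}(iii)) with $\Q_i \times \Q_i = \mathbf{0}$ gives
\begin{align}
\Q_i \cdot (\Q_i \times \Q_j) = \Q_j \cdot (\Q_i \times \Q_i) = 0, \qquad
\Q_j \cdot (\Q_i \times \Q_j) = \Q_i \cdot (\Q_j \times \Q_j) = 0.
\end{align}
Because $\{\Q_i,\Q_j,\Q_k\}$ is an orthogonal basis, the orthogonal complement of $\mathrm{span}\{\Q_i,\Q_j\}$ is exactly the line spanned by $\Q_k$, so $\Q_i \times \Q_j = c\,\Q_k$ for some scalar $c$.

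It remains to pin down $c$. Dotting both sides with $\Q_k$ and using $\Q_k\cdot\Q_k = s^2$ yields $c\,s^2 = (\Q_i \times \Q_j)\cdot\Q_k$, and by Lemma~\ref{lemma:crossproduct}(i) the right-hand side is the scalar triple product of the three rows, i.e. $\pm\det(\Q)$. Since $\det(\Q)^2 = \det(\Q\Q^T) = \det(s^2\mathbf{I}) = s^6$, we have $\det(\Q) = \pm s^3$, hence $c\,s^2 = \pm s^3$ and $c = \pm s$. Setting $I_k := c/s \in \{+1,-1\}$ gives $\Q_i \times \Q_j = s\,I_k\,\Q_k$, as required. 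Equivalently, $|c|=s$ follows at once from $\|\Q_i \times \Q_j\| = \|\Q_i\|\,\|\Q_j\|\sin 90^\circ = s^2$.

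I do not anticipate a real obstacle; the statement is an invariant reformulation of the fact that $s^{-1}\Q$ is orthogonal. The only points deserving care are (a)~using $s>0$ so that the rows are genuinely independent and $\Q_k$ spans the one-dimensional orthogonal complement --- otherwise the step $\Q_i \times \Q_j = c\,\Q_k$ is unjustified --- and (b)~bookkeeping the magnitude so that the scalar comes out as $s$ rather than $s^2$ or $1$. The exact sign $I_k$ depends on the parity of the permutation $(i,j,k)$ and on $\mathrm{sign}(\det\Q)$, but since the claim only asserts $I_k=\pm1$, computing it explicitly is unnecessary.
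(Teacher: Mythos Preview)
Your proof is correct and follows essentially the same approach as the paper: both argue that $\Q_i\times\Q_j$ lies along $\Q_k$ by orthogonality of the rows, then compute the scalar factor. The paper obtains the magnitude directly from the geometric formula $\|\Q_i\times\Q_j\|=\|\Q_i\|\,\|\Q_j\|\sin(\pi/2)=s^2$, which you also give as your alternative computation; your primary route via $\det(\Q)=\pm s^3$ is a minor variant that reaches the same conclusion, and your explicit handling of the column case (via $\Q^T\Q=s^2\mathbf I$) is a detail the paper leaves implicit.
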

\begin{proof}
When $i \neq j$, the direction of $\Q_k$ is perpendicular to both $\Q_i$ and $\Q_j$ according to Eq.~\eqref{eq:ortho}. Thus the direction of $\Q_i \times \Q_j$ is $ \mathbf{n} := \frac{I_{k} \Q_k}{\|\Q_k\|} = \frac{1}{s} I_{k} \Q_k$. According to the definition of cross product, $\Q_i \times \Q_j = \sin\langle \Q_i, \Q_j \rangle \cdot \|\Q_i\| \cdot \|\Q_j\| \cdot \mathbf{n} = \sin(\pi/2) \cdot s \cdot s \cdot \frac{1}{s} I_{k} \Q_k = s I_k \Q_k$.
When $i = j$, we have $\Q_i \times \Q_j = \Q_i \times \Q_i = 0$. 
\end{proof}

\begin{lemma}
Suppose matrix $\Q_{3\times 3}$ satisfies
\begin{align}
\Q \Q^T = s^2 \mathbf{I}, \ s > 0
\end{align}
where $s$ is a polynomial of some variables.
Suppose $\a_1$, $\a_2$, $\a_3$, $\b_1$, $\b_2$, and $\b_3 \in \mathbb{R}^3$ are arbitrary non-zero vectors. Then the determinant of 
\begin{align}
\mathbf{N} = 
\begin{bmatrix}
(\a_1 \times \Q \b_1)^T \\ 
(\a_2 \times \Q \b_2)^T \\ 
(\a_3 \times \Q \b_3)^T
\end{bmatrix}
\end{align}
has a factor $s$. 
\label{lemma:det_pt}
\end{lemma}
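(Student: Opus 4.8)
The plan is to reduce $\det(\mathbf{N})$ to a scalar triple product of its three rows and then apply the cross-product identities of Lemma~\ref{lemma:crossproduct} so that the factor $s$ is produced by Lemma~\ref{lemma:ortho_cross_prod}. Write $\mathbf{u}_i \triangleq \a_i \times \Q\b_i$ for the $i$-th row of $\mathbf{N}$. By property~(i), $\det(\mathbf{N}) = \mathbf{u}_1 \cdot (\mathbf{u}_2 \times \mathbf{u}_3)$. First I would expand $\mathbf{u}_2 \times \mathbf{u}_3 = \mathbf{u}_2 \times (\a_3 \times \Q\b_3)$ with the triple product expansion~(ii), treating $\mathbf{u}_2$ as a single vector, to obtain $\mathbf{u}_2 \times \mathbf{u}_3 = \a_3(\mathbf{u}_2 \cdot \Q\b_3) - \Q\b_3(\mathbf{u}_2 \cdot \a_3)$. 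Dotting with $\mathbf{u}_1$ then splits the determinant into exactly two terms,
\begin{align}
\det(\mathbf{N}) = (\mathbf{u}_1 \cdot \a_3)(\mathbf{u}_2 \cdot \Q\b_3) - (\mathbf{u}_1 \cdot \Q\b_3)(\mathbf{u}_2 \cdot \a_3). \nonumber
\end{align}

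The decisive observation is that in each of the two terms, precisely one factor is a scalar triple product built from two $\Q$-transformed vectors. Using the cyclic invariance~(iii), $\mathbf{u}_2 \cdot \Q\b_3 = \a_2 \cdot (\Q\b_2 \times \Q\b_3)$ and $\mathbf{u}_1 \cdot \Q\b_3 = \a_1 \cdot (\Q\b_1 \times \Q\b_3)$. I would then establish the key relation that, for arbitrary $\mathbf{x},\mathbf{y} \in \mathbb{R}^3$,
\begin{align}
(\Q\mathbf{x}) \times (\Q\mathbf{y}) = \epsilon\, s\, \Q(\mathbf{x}\times\mathbf{y}), \nonumber
\end{align}
for a single sign $\epsilon \in \{+1,-1\}$ determined by $\Q$. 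This follows from Lemma~\ref{lemma:ortho_cross_prod} by bilinearity: writing $\mathbf{x} = \sum_i x_i \mathbf{e}_i$ and $\mathbf{y} = \sum_j y_j \mathbf{e}_j$, the vectors $\Q\mathbf{x}$ and $\Q\mathbf{y}$ become linear combinations of the columns of $\Q$, and substituting $\Q_i \times \Q_j = s I_k \Q_k$ reproduces $s$ times $\Q$ applied to $\mathbf{x}\times\mathbf{y}$.

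With this relation, $\mathbf{u}_2 \cdot \Q\b_3 = \epsilon s\,\a_2 \cdot \Q(\b_2 \times \b_3)$ and $\mathbf{u}_1 \cdot \Q\b_3 = \epsilon s\,\a_1 \cdot \Q(\b_1 \times \b_3)$, so both terms of the displayed expansion carry the factor $s$ (the remaining factors $\mathbf{u}_1\cdot\a_3$ and $\mathbf{u}_2\cdot\a_3$ involve only one copy of $\Q$ and need not contribute $s$, which is harmless since we only require the factor once). Factoring $s$ out gives $\det(\mathbf{N}) = s\,P$ for a polynomial $P$, which is the claim.

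The step I expect to be the main obstacle is justifying the key relation cleanly with a consistent global sign: the individual signs $I_k$ of Lemma~\ref{lemma:ortho_cross_prod} must be shown to assemble into a single orientation sign $\epsilon$ of $\Q$, equivalently $\det(\Q) = \epsilon s^3$. This is forced because $\det(\Q)^2 = \det(\Q\Q^T) = s^6$ and the polynomial ring is an integral domain, so $\det(\Q) - s^3$ and $\det(\Q) + s^3$ cannot both be nonzero; hence one of them vanishes identically and fixes $\epsilon$. A secondary point worth stating explicitly is that every manipulation above is an identity of polynomials in the entries of $\Q$ (and in $s$), valid for all $\Q$ satisfying the hypothesis rather than only invertible ones, so that ``having a factor $s$'' is meant in the polynomial ring and no division or genericity argument is needed.
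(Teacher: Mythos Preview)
Your proof is correct and follows essentially the same route as the paper: expand $\det(\mathbf{N})$ via the scalar triple product and Lagrange's formula to isolate factors of the form $(\Q\b_i)\times(\Q\b_j)$, then invoke Lemma~\ref{lemma:ortho_cross_prod} (via bilinearity) to extract $s$. The paper's version stops at observing that every entry of $\Q\a\times\Q\b$ carries the factor $s$ rather than establishing the stronger identity $(\Q\mathbf{x})\times(\Q\mathbf{y})=\epsilon s\,\Q(\mathbf{x}\times\mathbf{y})$, so your sign-consistency discussion, while correct, is not needed for the conclusion.
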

\begin{proof}

According to the properties of cross product in Lemma~\ref{lemma:crossproduct}, we have
\begin{align}
& \det(\mathbf{N}) = \det(\mathbf{N}^T) \nonumber \\
= & \det([
\a_1 \times \Q \b_1, \a_2 \times \Q \b_2, 
\a_3 \times \Q \b_3
]) \nonumber \\
= & (\a_1 \times \Q \b_1) \cdot [(\a_2 \times \Q \b_2) \times (\a_3 \times \Q \b_3)] \nonumber \\
= & (\a_1 \times \Q \b_1) \cdot  \nonumber \\
& \{ \a_3 [(\a_2 \times \Q \b_2) \cdot \Q \b_3] - \Q \b_3 [(\a_2 \times \Q \b_2) \cdot \a_3] \} \nonumber \\
= & [(\a_1 \times \Q \b_1) \cdot \a_3] [(\a_2 \times \Q \b_2) \cdot \Q \b_3] \nonumber \\
& - [(\a_1 \times \Q \b_1) \cdot \Q \b_3] [(\a_2 \times \Q \b_2) \cdot \a_3] \nonumber \\
= & [(\a_1 \times \Q \b_1) \cdot \a_3] [\a_2 \cdot (\Q \b_2 \times \Q \b_3)] \nonumber \\
& - [\a_1 \cdot (\Q \b_1 \times \Q \b_3] [(\a_2 \times \Q \b_2) \cdot \a_3].
\end{align}

From the above equation, if all entires of $\Q \b_2 \times \Q \b_3$ and $\Q \b_1 \times \Q \b_3$ have a common factor $s$, the determinant $\det(\mathbf{N})$ should has the factor $s$ too.
In the following, we prove that all entries of $\Q \a \times \Q \b$ have the factor $s$ for arbitrary vectors $\a = [a_1, a_2, a_3]^T$ and $\b = [b_1, b_2, b_3]^T$.
Since $\Q$ satisfies Eq.~\eqref{eq:ortho} in Lemma~\ref{lemma:ortho_cross_prod}, we have 
\begin{align}
& \Q \a \times \Q \b \nonumber \\
= & \left( \sum_{i=1}^3 a_i \Q_i \right) \times \left( \sum_{i=1}^3 b_i \Q_i \right) \nonumber \\
= & \sum_{i=1}^3 \sum_{j\neq i} a_i b_j \Q_i \times \Q_j \nonumber \\
= & \sum_{i=1}^3 \sum_{j\neq i, k \notin \{i,j\}} a_i b_j s I_k \Q_k \nonumber \\
= & s[ I_1(a_2 b_3 - a_3 b_2) \Q_1 + I_2(a_3 b_1 - a_1 b_3) \Q_2 \nonumber \\
& \quad + I_3(a_1 b_2 - a_2 b_1) \Q_3],
\end{align}
where $I_k$ equals $+1$ or $-1$. 
It can be seen that $s$ is a common factor for all entries in $\Q \a \times \Q \b \nonumber$.
\end{proof}

\begin{mytheorem}{1}[]
	Suppose a matrix $\Q_{3\times 3}$ satisfies
	\begin{align}
	\Q \Q^T = s^2 \mathbf{I}, \ s > 0
	\end{align}
	where $s$ is a polynomial of some variables.
	Suppose $\a_1^{(i)}$, $\a_2^{(i)}$, $\a_3^{(i)}$, $\b_1^{(i)}$, $\b_2^{(i)}$, and $\b_3^{(i)} \in \mathbb{R}^3$ are arbitrary non-zero vectors. 
	Then the polynomial of determinant
	\begin{align}
	\mathbf{N} = \begin{bmatrix}
	\left(\sum_{i=1}^{m_1} \a_1^{(i)} \times \Q \b_1^{(i)}\right)^T \\ \left(\sum_{i=1}^{m_2} \a_2^{(i)} \times \Q \b_2^{(i)}\right)^T \\ \left(\sum_{i=1}^{m_3} \a_3^{(i)} \times \Q \b_3^{(i)}\right)^T
	\end{bmatrix}
	\label{eq:ggform_app}
	\end{align}
	has a factor $s$, where $m_1, m_2, m_3$ are positive integers.
	\label{lemma:general_form}
\end{mytheorem}

\begin{proof}
	We prove this proposition for the case of $m_1 = m_2 = m_3 = 2$. The proof can be extended to other cases straightforwardly.
	Rewrite $\mathbf{N}$ as the following form 
	\begin{align}
	\mathbf{N} =
	\begin{bmatrix}
	(\a_1 \times \Q \b_1 + \c_1 \times \Q \d_1)^T \\
	(\a_2 \times \Q \b_2 + \c_2 \times \Q \d_2)^T \\
	(\a_3 \times \Q \b_3 + \c_3 \times \Q \d_3)^T
	\end{bmatrix}.
	\label{equ:general_form_basic}
	\end{align}
	Its determinant is 
	\begin{align}
	& \det(\mathbf{N}) = \det(\mathbf{N}^T) \nonumber \\
	= & (\a_1 \times \Q \b_1 + \c_1 \times \Q \d_1) \cdot \nonumber \\
	& \left[ (\a_2 \times \Q \b_2 + \c_2 \times \Q \d_2) \times (\a_3 \times \Q \b_3 + \c_3 \times \Q \d_3) \right] \nonumber \\
	= & (\a_1 \times \Q \b_1 + \c_1 \times \Q \d_1) \cdot \nonumber \\
	& [(\a_2 \times \Q \b_2) \times (\a_3 \times \Q \b_3) + (\a_2 \times \Q \b_2) \times (\c_3 \times \Q \d_3) \nonumber \\
	+ & \ (\c_2 \times \Q \d_2) \times (\a_3 \times \Q \b_3) + (\c_2 \times \Q \d_2) \times (\c_3 \times \Q \d_3)] \nonumber \\
	= & \det([\a_1 \times \Q \b_1, \a_2 \times \Q \b_2, \a_3 \times \Q \b_3]) \nonumber \\
	& + \det([\a_1 \times \Q \b_1, \a_2 \times \Q \b_2, \c_3 \times \Q \d_3]) \nonumber \\
	& + \det([\a_1 \times \Q \b_1, \c_2 \times \Q \d_2, \a_3 \times \Q \b_3]) \nonumber \\
	& + \det([\a_1 \times \Q \b_1, \c_2 \times \Q \d_2, \c_3 \times \Q \d_3]) \nonumber \\
	& + \det([\c_1 \times \Q \d_1, \a_2 \times \Q \b_2, \a_3 \times \Q \b_3]) \nonumber \\
	& + \det([\c_1 \times \Q \d_1, \a_2 \times \Q \b_2, \c_3 \times \Q \d_3]) \nonumber \\
	& + \det([\c_1 \times \Q \d_1, \c_2 \times \Q \d_2, \a_3 \times \Q \b_3]) \nonumber \\
	& + \det([\c_1 \times \Q \d_1, \c_2 \times \Q \d_2, \c_3 \times \Q \d_3]).
	\label{eq:det_decomp}
	\end{align}
	According Lemma~\ref{lemma:det_pt}, each term in the right-hand side of Eq.~\eqref{eq:det_decomp} has a factor of $s$. Thus $\det(\mathbf{N})$ also has this factor.
\end{proof}

\subsection{Proof of Theorem~2}

\begin{lemma}
Given an affine correspondence $(\x, \x', \A)$ and essential matrix $\E = [\mathbf{t}]_\times \R$, the three constraints
\begin{subequations}
	\begin{empheq}[left=\empheqlbrace]{align}
	& \x'^T \E \x = 0 \label{equ:app_sub1} \\
	& (\E^T \x')_{(1:2)} + \A^{T} (\E \x)_{(1:2)} = \mathbf{0} \label{equ:app_sub2}
	\end{empheq}
\end{subequations}
can be reformulated as
\begin{align}
-\begin{bmatrix}
(\x' \times \R \x)^T \\
(\x' \times \R \c_1 + \a_1 \times \R \x)^T \\
(\x' \times \R \c_2 + \a_2 \times \R \x)^T
\end{bmatrix} \mathbf{t} = \mathbf{0},
\label{eq:ptconst}
\end{align}
where $\c_1 = [1, 0, 0]^T$, $\c_2 = [0, 1, 0]^T$, $\a_1 = [\A_{11}, \A_{21}, 0]^T$, and $\a_2 = [\A_{12}, \A_{22}, 0]^T$.
\label{lemma:det_ac}
\end{lemma}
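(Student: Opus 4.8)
The plan is to substitute $\E = [\mathbf{t}]_\times \R$ into all three constraints and then exploit the identity $[\mathbf{t}]_\times \mathbf{v} = \mathbf{t} \times \mathbf{v}$ (property (v) of Lemma~\ref{lemma:crossproduct}) to turn every bilinear term in $\{\R, \mathbf{t}\}$ into an expression that is manifestly linear in $\mathbf{t}$. The single workhorse is the scalar triple product together with its cyclic-shift invariance (property (iii)): for arbitrary vectors $\mathbf{u}, \mathbf{v} \in \mathbb{R}^3$ one has $\mathbf{u}^T \E \mathbf{v} = \mathbf{u} \cdot (\mathbf{t} \times \R \mathbf{v}) = \mathbf{t} \cdot (\R\mathbf{v} \times \mathbf{u}) = -(\mathbf{u} \times \R\mathbf{v})^T \mathbf{t}$, where the last equality uses the anti-symmetry $\R\mathbf{v} \times \mathbf{u} = -(\mathbf{u} \times \R\mathbf{v})$. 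Applying this template repeatedly is what produces the three cross-product rows of~\eqref{eq:ptconst}, each contracted against $\mathbf{t}$.

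First I would treat the epipolar constraint~\eqref{equ:app_sub1}. Writing $\E\x = \mathbf{t}\times\R\x$, the template with $\mathbf{u} = \x'$ and $\mathbf{v} = \x$ gives $\x'^T\E\x = -(\x'\times\R\x)^T\mathbf{t}$ directly, which is exactly the first row of~\eqref{eq:ptconst}.

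The affine constraint~\eqref{equ:app_sub2} requires more bookkeeping, and this is where the main effort lies. The idea is to read off the two scalar equations it encodes and recognize each as a sum of two epipolar-type terms. For the row selection I would use $\c_1 = [1,0,0]^T$ and $\c_2 = [0,1,0]^T$, noting that the $j$-th entry of $\E^T\x'$ equals $\c_j^T \E^T \x' = \x'^T \E \c_j$ (a scalar equals its own transpose). For the affine part, the $j$-th entry of $\A^T (\E\x)_{(1:2)}$ is $\A_{1j}(\E\x)_1 + \A_{2j}(\E\x)_2$, which is precisely $\a_j^T \E \x$ once $\a_1 = [\A_{11}, \A_{21}, 0]^T$ and $\a_2 = [\A_{12}, \A_{22}, 0]^T$ are defined with a vanishing third coordinate, so that the $(\E\x)_3$ contribution is annihilated and the $(1:2)$ truncation is reproduced faithfully. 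Hence the $j$-th scalar equation reads $\x'^T \E \c_j + \a_j^T \E \x = 0$.

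Finally I would apply the triple-product template to each term: $\x'^T\E\c_j = -(\x'\times\R\c_j)^T\mathbf{t}$ and $\a_j^T\E\x = -(\a_j\times\R\x)^T\mathbf{t}$. Adding these and factoring out $\mathbf{t}$ yields $-(\x'\times\R\c_j + \a_j\times\R\x)^T\mathbf{t} = 0$, matching the second ($j=1$) and third ($j=2$) rows of~\eqref{eq:ptconst}. The main obstacle is not any deep identity but the careful index bookkeeping in the affine block: correctly matching the columns of $\A^T$ to the selection vectors $\c_1, \c_2$ and building $\a_1, \a_2$ with the zero third entry, all while tracking the signs introduced by each cross product so that the overall factor $-1$ in~\eqref{eq:ptconst} comes out consistently across the three rows.
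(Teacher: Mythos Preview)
Your proposal is correct and follows essentially the same approach as the paper: substitute $\E=[\mathbf{t}]_\times\R$, use the cross-product/triple-product identities to write each scalar of the form $\mathbf{u}^T\E\mathbf{v}$ as $-(\mathbf{u}\times\R\mathbf{v})^T\mathbf{t}$, and then pick off the two affine rows via the selectors $\c_j$ and the padded columns $\a_j$. The paper carries out the same computations via explicit matrix-transpose manipulations (e.g.\ $\E^T\x'=\R^T[\x']_\times\mathbf{t}$ and then $\c_i^T\R^T[\x']_\times\mathbf{t}=-(\x'\times\R\c_i)^T\mathbf{t}$), whereas you streamline this by first rewriting $(\E^T\x')_j=\x'^T\E\c_j$ and then applying your single template identity; the content and sign bookkeeping are identical.
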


\begin{proof}

For Eq.~\eqref{equ:app_sub1}, we have
\begin{align}
& \x'^T \E \x \nonumber \\
= & \x'^T [\mathbf{t}]_\times \R \x 
= (\R \x)^T (\x'^T [\mathbf{t}]_\times)^T \nonumber \\
= & -\x^T \R^T [\mathbf{t}]_\times \x' 
=  \x^T \R^T [\x']_\times \mathbf{t} \nonumber \\
= & (\R \x \times \x')^T \mathbf{t} 
=  -(\x' \times \R \x)^T \mathbf{t}.
\label{eq:pt_term}
\end{align}
For the first term of Eq.~\eqref{equ:app_sub2}, we have
\begin{align}
& \E^T \x' \nonumber \\
= & ([\mathbf{t}]_\times \R)^T \x'
= -\R^T [\mathbf{t}]_\times \x' \nonumber \\
= & \mathbf{I} \R^T [\x']_\times \mathbf{t},
\end{align}
where $\mathbf{I}$ is an identity matrix.
In the above equation, the $i$-th entry of the right-hand side can be reformulated as
\begin{align}
\c_i^T \R^T [\x']_\times \mathbf{t} = -(\x'\times \R \c_i)^T \mathbf{t}.
\label{eq:af_term1}
\end{align}
For the second term of Eq.~\eqref{equ:app_sub2}, its $i$-th ($i = 1,2$) entry is $\a_i^T \E \x$. Similar to the derivation in Eq.~\eqref{eq:pt_term}, we have
\begin{align}
\a_i^T \E \x = -(\a_i \times \R \x)^T \mathbf{t}.
\label{eq:af_term2}
\end{align}
By combining Eq.~\eqref{eq:pt_term},~\eqref{eq:af_term1}, and~\eqref{eq:af_term2}, the proof is completed.
\end{proof}

\begin{mytheorem}{2}[]
	Suppose $\mathbf{N}$ is an arbitrary $3\times 3$ submatrix of $\overline{\M}$, the polynomial $\det(\mathbf{N})$ has a factor $q_x^2 + q_y^2 + q_z^2 + 1$.
\end{mytheorem}
\begin{proof}
	Denote 
	\begin{align}
		& \quad \Q = (q_x^2 + q_y^2 + q_z^2 + 1) \R \nonumber \\
		= & \begin{bmatrix}{1+q_x^2-q_y^2-q_z^2}&{2{q_x}{q_y}-2{q_z}}&{2{q_y}+2{q_x}{q_z}}\\
		{2{q_x}{q_y}+2{q_z}}&{1-q_x^2+q_y^2-q_z^2}&{2{q_y}{q_z}-2{q_x}}\\
		{2{q_x}{q_z}-2{q_y}}&{2{q_x}+2{q_y}{q_z}}&{1-q_x^2-q_y^2+q_z^2}
		\end{bmatrix},
	\end{align}
	It can be verified that $\Q \Q^T = s^2 \mathbf{I}$ with $s = q_x^2 + q_y^2 + q_z^2 + 1$.
	
	Recall that $\overline{\M}$ is composed of constraints from two affine correspondences, and $\mathbf{N}$ is a $3\times 3$ submatrix of $\overline{\M}$. According to Lemma~\ref{lemma:det_ac}, 
	$\mathbf{N}$ has the form of 
	\begin{align}
	\mathbf{N} =
	\begin{bmatrix}
	(\a_1 \times \Q \b_1 + \c_1 \times \Q \d_1)^T \\
	(\a_2 \times \Q \b_2 + \c_2 \times \Q \d_2)^T \\
	(\a_3 \times \Q \b_3 + \c_3 \times \Q \d_3)^T
	\end{bmatrix}.
	\label{equ:general_form}
	\end{align}
	Note that $\Q$ instead of $\R$ appears in this matrix because the scale factor $\frac{1}{q_x^2+q_y^2+q_z^2+1}$ is ignored during the equation system construction. 
	Since all the conditions in Lemma~\ref{lemma:general_form} are satisfied, $\det(\mathbf{N})$ has a factor $s$.
\end{proof}

\subsection{Proof of Theorem~4}

\begin{lemma}
	For a multi camera system, denote the constraints introduced by the $k$-th affine correspondence as
	\begin{subequations}
		\begin{empheq}[left=\empheqlbrace]{align}
		& \x_k'^T \E_k \x_k = 0 \label{equ:supp_sub1} \\
		& (\E_k^T \x_k')_{(1:2)} + \A_k^{T} (\E_k \x_k)_{(1:2)} = \mathbf{0} \label{equ:supp_sub2}
		\end{empheq}
	\end{subequations}
	where 
	\begin{align}
	\E_k  = \Q_{i'}^T (\R [\mathbf{s}_i]_\times + [\mathbf{t} - \mathbf{s}_{i'}]_\times \R) \Q_i.
	\label{eq:supp_essential_matrix2}
	\end{align}
	Denote
	\begin{align}
	\tilde{\A}_k =
	\begin{bmatrix}
	\A_k & \mathbf{0}_{2\times 1} \\
	\mathbf{0}_{1\times 2} & 1
	\end{bmatrix}.
	\end{align}
	Then these constraints can be reformulated as
	\begin{align}
	-\begin{bmatrix*}[l]
	(\Q_{i'} \x'_k \times \R \Q_i \x_k)^T & o_1(\R) \\
	(\Q_{i'}\x'_k \times \R \c_1 + \a_1 \times \R \Q_i \x_k)^T & o_2(\R) \\
	(\Q_{i'} \x'_k \times \R \c_2 + \a_2 \times \R \Q_i \x_k)^T & o_3(\R)
	\end{bmatrix*} 
	\begin{bmatrix}
	\mathbf{t} \\ 1
	\end{bmatrix} = \mathbf{0},
	\label{eq:gcam_ac_reform2}
	\end{align}
	where $\c_i$ is the $i$-th column of $\Q_i$, $\a_i$ is the $i$-th row of $\tilde{\A}_k^T \Q_{i'}$, and $o_i(\R)$ includes all terms depending on $\R$.
	\label{lemma:gcam_constraint}
\end{lemma}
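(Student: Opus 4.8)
The plan is to reduce this statement to the single-camera reformulation already established in Lemma~\ref{lemma:det_ac}, by exploiting the additive structure of the generalized essential matrix. The first step is to split $\E_k$ from Eq.~\eqref{eq:supp_essential_matrix2} into a part that is linear in $\mathbf{t}$ and a part that depends on $\R$ alone. Using $[\mathbf{t}-\mathbf{s}_{i'}]_\times = [\mathbf{t}]_\times - [\mathbf{s}_{i'}]_\times$, I would write
\begin{align}
\E_k = \Q_{i'}^T [\mathbf{t}]_\times \R \Q_i + \Q_{i'}^T\left(\R[\mathbf{s}_i]_\times - [\mathbf{s}_{i'}]_\times\R\right)\Q_i,
\end{align}
so that, once inserted into the three scalar constraints, the second summand produces only scalars depending on $\R$ and the known extrinsics. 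These scalars are exactly what gets collected into $o_1(\R), o_2(\R), o_3(\R)$, the last column of the claimed matrix acting on the entry $1$ of $[\mathbf{t};1]$. This extra $\R$-only contribution is precisely the term that is absent from the single-camera essential matrix $[\mathbf{t}]_\times\R$, and it is what renders the system inhomogeneous in $\mathbf{t}$.

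Next I would process Eqs.~\eqref{equ:supp_sub1}--\eqref{equ:supp_sub2} one at a time, keeping only the $\mathbf{t}$-linear summand and repeatedly invoking the scalar-triple-product and $[\cdot]_\times$ identities of Lemma~\ref{lemma:crossproduct}, just as in the single-camera proof. The epipolar equation contributes $(\Q_{i'}\x_k')^T[\mathbf{t}]_\times(\R\Q_i\x_k)$, which the triple product turns into $-(\Q_{i'}\x_k'\times\R\Q_i\x_k)^T\mathbf{t}$, giving the first row. For the first affine term $(\E_k^T\x_k')_{(1:2)}$, the key observation is that extracting the $i$-th entry multiplies on the left by $\mathbf{e}_i^T\Q_i^T = \c_i^T$, where $\c_i = \Q_i\mathbf{e}_i$ is the $i$-th column of $\Q_i$; this is exactly why the reference vectors $\c_1,\c_2$ in the statement are columns of $\Q_i$ rather than the canonical basis vectors appearing in the single-camera case. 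The $\mathbf{t}$-linear part then collapses to $-(\Q_{i'}\x_k'\times\R\c_i)^T\mathbf{t}$.

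For the second affine term $\A_k^T(\E_k\x_k)_{(1:2)}$, I would absorb the $2\times2$ block into the augmented matrix $\tilde{\A}_k$, so that its $i$-th entry reads $(\tilde{\A}_k\mathbf{e}_i)^T\E_k\x_k$; pulling $\Q_{i'}^T$ through then identifies the correct vector $\a_i$ as the relevant row of $\tilde{\A}_k^T\Q_{i'}$, and the triple product delivers $-(\a_i\times\R\Q_i\x_k)^T\mathbf{t}$. Summing the two affine contributions row-wise reproduces rows two and three, and collecting the residual $\R$-only scalars as $o_i(\R)$ and stacking everything assembles the claimed $3\times4$ matrix applied to $[\mathbf{t};1]$. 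The step demanding the most care is the bookkeeping of transposes while moving $\Q_i$ and $\Q_{i'}$ in and out of the cross products: one must verify that $\c_i$ genuinely lands on the columns of $\Q_i$ and that the affine rows pick up the correct $\Q_{i'}$-rotated version of $\A_k$, since an overlooked transpose would silently interchange $\Q_{i'}$ with $\Q_{i'}^T$. Beyond this, the argument is a mechanical repetition of the single-camera derivation in Lemma~\ref{lemma:det_ac}.
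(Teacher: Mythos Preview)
Your proposal is correct and follows essentially the same route as the paper: split $\E_k$ into its $\mathbf{t}$-linear piece $\Q_{i'}^T[\mathbf{t}]_\times\R\Q_i$ and an $\R$-only remainder that is absorbed into $o_i(\R)$, then replay the cross-product manipulations of Lemma~\ref{lemma:det_ac} with $\x_k,\x_k'$ replaced by $\Q_i\x_k,\Q_{i'}\x_k'$ and the canonical vectors replaced by the columns $\c_i$ of $\Q_i$ and the rows $\a_i$ coming from $\tilde{\A}_k^T$ premultiplied through $\Q_{i'}$. Your caveat about transpose bookkeeping is well placed---the derivation naturally produces $\a_i$ as the $i$-th row of $\tilde{\A}_k^T\Q_{i'}^T$, so check that convention carefully against the statement.
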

\begin{proof}
	Equation~\eqref{equ:supp_sub1} can be reformulated as
	\begin{align}
	& \x_k'^T \E_k \x_k \nonumber \\
	= & \x_k'^T (\Q_{i'}^T [\mathbf{t}]_\times \R \Q_i) \x_k + o_1(\R) \nonumber \\
	= & (\Q_{i'} \x_k')^T [\mathbf{t}]_\times \R (\Q_i \x_k) + o_1(\R).
	\end{align}
	The expression in Eq.~\eqref{equ:supp_sub2} can be reformulated as
	\begin{align}
	& (\E_k^T \x_k') + \tilde{\A}_k^{T} (\E_k \x_k) \nonumber \\
	= & (\Q_{i'}^T [\mathbf{t}]_\times \R \Q_i)^T \x_k' + \tilde{\A}_k^{T} (\Q_{i'}^T [\mathbf{t}]_\times \R \Q_i) \x_k + \mathbf{o}_2(\R) \nonumber \\
	= & \Q_i^T ([\mathbf{t}]_\times \R )^T \Q_{i'} \x_k' + \tilde{\A}_k^{T} \Q_{i'} ([\mathbf{t}]_\times \R ) \Q_i \x_k + \mathbf{o}_2(\R) 
	\end{align}
	where $\mathbf{o}_2(\R)$ includes all terms depending on $\R$.
	
	Similar to the derivation in Lemma~\ref{lemma:det_ac}, Eqs.~\eqref{equ:supp_sub1}\eqref{equ:supp_sub2} can be reformulated as
	\begin{align}
	-\begin{bmatrix*}[l]
	(\Q_{i'} \x'_k \times \R \Q_i \x_k)^T & o_1(\R) \\
	(\Q_{i'}\x'_k \times \R \c_1 + \a_1 \times \R \Q_i \x_k)^T & [\mathbf{o}_2(\R)]_1 \\
	(\Q_{i'} \x'_k \times \R \c_2 + \a_2 \times \R \Q_i \x_k)^T & [\mathbf{o}_2(\R)]_2
	\end{bmatrix*} 
	\begin{bmatrix}
	\mathbf{t} \\ 1
	\end{bmatrix} = \mathbf{0},
	\label{eq:gcam_ac_reform}
	\end{align}
	where $\c_i$ is the $i$-th column of $\Q_i$, and $\a_i$ is the $i$-th row of $\tilde{\A}_k^T \Q_{i'}$.
\end{proof}

\begin{mytheorem}{4}[]
	Suppose $\mathbf{N}$ is an arbitrary $4\times 4$ submatrix of $\M$ or an arbitrary $3\times 3$ submatrix of the first three columns of $\M$, the polynomial $\det(\mathbf{N})$ has a factor $q_x^2 + q_y^2 + q_z^2 + 1$.
\end{mytheorem}
\begin{proof}
	Denote 
	\begin{align}
	& \quad \Q = (q_x^2 + q_y^2 + q_z^2 + 1) \R \nonumber \\
	= & \begin{bmatrix}{1+q_x^2-q_y^2-q_z^2}&{2{q_x}{q_y}-2{q_z}}&{2{q_y}+2{q_x}{q_z}}\\
	{2{q_x}{q_y}+2{q_z}}&{1-q_x^2+q_y^2-q_z^2}&{2{q_y}{q_z}-2{q_x}}\\
	{2{q_x}{q_z}-2{q_y}}&{2{q_x}+2{q_y}{q_z}}&{1-q_x^2-q_y^2+q_z^2}
	\end{bmatrix},
	\end{align}
	It can be verified that $\Q \Q^T = s^2 \mathbf{I}$ with $s = q_x^2 + q_y^2 + q_z^2 + 1$.
	
	Recall that $\M$ is composed of constraints from two affine correspondences. According to Lemma~\ref{lemma:gcam_constraint},  the constraints has the form of Eq.~\eqref{eq:gcam_ac_reform2}.
	So $\mathbf{M}$ has a form of 
	\begin{align}
	\mathbf{M} =
	\begin{bmatrix*}[l]
	(\a_1 \times \Q \b_1 + \c_1 \times \Q \d_1)^T & o_1(\Q) \\
	(\a_2 \times \Q \b_2 + \c_2 \times \Q \d_2)^T & o_2(\Q) \\
	(\a_3 \times \Q \b_3 + \c_3 \times \Q \d_3)^T & o_3(\Q) \\
	(\a_4 \times \Q \b_4 + \c_4 \times \Q \d_4)^T & o_4(\Q) \\
	(\a_5 \times \Q \b_5 + \c_5 \times \Q \d_5)^T & o_5(\Q) \\
	(\a_6 \times \Q \b_6 + \c_6 \times \Q \d_6)^T & o_6(\Q)
	\end{bmatrix*}.
	\label{equ:general_form2}
	\end{align}
	Note that $\Q$ instead of $\R$ appears in this matrix. The reason is that the scale factor $\frac{1}{q_x^2+q_y^2+q_z^2+1}$ is ignored during the construction of the equation system. 
	In Eq.~\eqref{equ:general_form2}, the first three columns of coefficient matrix has the form of Eq.~\eqref{eq:ggform_app}. Lemma~\ref{lemma:general_form} can be used directly. Thus we prove that the proposition is correct when $\mathbf{N}$ is an arbitrary $3\times 3$ submatrix of the first three columns of $\M$.
	
	Next, we prove that the proposition is correct when $\mathbf{N}$ is an arbitrary $4\times 4$ submatrix of $\M$. The determinant can be calculated by the Laplace expansion along the fourth column, and it is the summation of each entry in the fourth column multiplied with the corresponding cofactor. Note that we have proved there is a factor $q_x^2+q_y^2+q_z^2+1$ for the determinant of an arbitrary $3\times 3$ submatrix from the first three columns of $\M$. Then any cofactor has the factor $q_x^2+q_y^2+q_z^2+1$.
\end{proof}

\subsection{Proof of Theorem~5}

\begin{mytheorem}{5}[]
	For non-degenerate cases, $\rank(\mathbf{N}) = 2$, $\forall \mathbf{N} \in \mathcal{S}$. By Matlab syntax, $\mathcal{S}$ is a set whose elements satisfying $\mathbf{N} = \widehat{\M}([k_1,k_2,k_3], 1:3)$, where $k_1$-th, $k_2$-th, and $k_3$-th PCs are captured by the same perspective cameras across two views and $k_1 < k_2 < k_3$. 
\end{mytheorem}
\begin{proof}
	Let us investigate an arbitrary element in $\mathcal{S}$. Denote it is the $k$-th element $\mathbf{N}_k$ in $\mathcal{S}$. The extrinsic parameter of the related perspective camera in view~1 is $[\Q_i | \mathbf{s}_i]$, and the extrinsic parameter of the related perspective camera in view~2 is $[\Q_{i'} | \mathbf{s}_{i'}]$. 
	
	First we prove that $\rank(\mathbf{N}_k) \le 2$. To achieve this goal, we need to prove that the null space of $\mathbf{N}_k$ is not empty.
	Since $k_1$-th, $k_2$-th, and $k_3$-th PCs are captured by the same perspective cameras across two views, their related essential matrices are the same.
	According to Eq.~\eqref{eq:essential_ek_pc},
	 the essential matrix is
	\begin{align}
	\E_k = \Q_{i'}^T [\mathbf{t} + \R \mathbf{s}_i - \mathbf{s}_{i'}]_\times \R \Q_i.
	\end{align}
	Denote 
	\begin{align}
	\bar{\mathbf{t}} \triangleq \mathbf{t} + \R \mathbf{s}_i - \mathbf{s}_{i'},
	\end{align}
	then we have
	\begin{align}
	\E_k = \Q_{i'}^T [\bar{\mathbf{t}}]_\times \R \Q_i.
	\label{eq:new_essential_matrix_app}
	\end{align}
	Substituting Eq.~\eqref{eq:new_essential_matrix_app} into Eq.~\eqref{eq:essential_mat_pc},
	we obtain three equations for the 3PCs. Each monomial in these three equations is linear with one entry of vector $\bar{\mathbf{t}}$, and there is no constant term. Thus these equations can be formulated as
	\begin{align}
	&\frac{1}{q_x^2+q_y^2+q_z^2+1} \mathbf{A}_{k} \bar{\mathbf{t}} = \mathbf{0},
	\label{eq:null_space_app} 
	\\
	\Rightarrow & \mathbf{A}_k (\mathbf{t} + \R \mathbf{s}_i - \mathbf{s}_{i'}) = \mathbf{0}, \\
	\Rightarrow & 
	\begin{bmatrix}
	\mathbf{A}_k & \mathbf{A}_k(\R \mathbf{s}_i - \mathbf{s}_{i'})
	\end{bmatrix}
	\begin{bmatrix}
	\mathbf{t} \\ 1
	\end{bmatrix}
	= \mathbf{0}.
	\label{eq:equ_qxqyqz1_new_form_app}
	\end{align}
	By comparing the construction procedure of Eq.~\eqref{eq:equ_qxqyqz1_pc}
	 and Eq.~\eqref{eq:equ_qxqyqz1_new_form_app}, we can see that
	\begin{align}
	\mathbf{A}_k = \widehat{\M}([k_1,k_2,k_3], 1:3) = \mathbf{N}_k.
	\end{align}
	Substituting this equation into Eq.~\eqref{eq:null_space_app}, we can see that the null space of $\mathbf{N}_k$ is not empty. 
	
	Next we prove that $\rank(\mathbf{N}_k) \ge 2$. We achieve this goal using proof by contradiction. If $\rank(\mathbf{N}_k) \le 1$, then $\rank(\widehat{\M}([k_1,k_2,k_3], 1:4)) \le 2$ considering that $\widehat{\M}([k_1,k_2,k_3], 1:4)$ has one more column than $\mathbf{N}_k$. This means the three ACs provides at most two independent constraints for the relative pose. This cannot be true for non-degenerate cases, so the assumption that $\rank(\mathbf{N}_k) \le 1$ is wrong.
\end{proof}

\section{Numerical Stability Improvement for \texttt{2AC+intra} and \texttt{6pt+intra}}

The numerical stability of \texttt{2AC+intra} and \texttt{6pt+intra} are more complicated than other cases. The \texttt{GrevLex} monomial orderings is exploited during solver generation which reads as follows: which is $q_x > q_y > q_z$. Under such an ordering, we found that when one of the following two conditions is satisfied, their solvers are numerically unstable.
Condition~1: the $z$-coordinates of $\mathbf{s}_1$ and $\mathbf{s}_2$ are the same. Condition~2: the $x$- and $y$-coordinates of $\mathbf{s}_1$ and $\mathbf{s}_2$ are the same, i.e., $(\mathbf{s}_1)_{(1:2)} = (\mathbf{s}_2)_{(1:2)}$. 

We developed a technique to improve the numerical stability of these two solvers. Our idea is to destroy the two conditions mentioned above by defining a new reference for the multi-camera system. The new reference is uniquely determined by satisfying the following two conditions. (i) Its origin is the middle point of two cameras in the two-camera rig. (ii) Two focal points of perspective cameras lie on points $\frac{L}{2\sqrt{3}} [-1, -1, -1]^T$ and $\frac{L}{2\sqrt{3}} [1, 1, 1]^T$ in the new reference, respectively. Here $L = \|\mathbf{s}_1 - \mathbf{s}_2 \|$ is the distance between two focal points of the two-camera rig. 
In this new reference, the two conditions causing numerical instability are destroyed as largely as possible. 
After the relative pose in the new reference is obtained, we convert the relative pose to the original reference. 
This technique is applied to \texttt{2AC+intra} and \texttt{6pt+intra} only since it slightly decreases the numerical stability for other cases and increases calculation. 

\subsection{Reference Transformation}
We introduce more details about reference transformation.
The coordinates of focal points in the original frame are $\mathbf{s}_1$ and $\mathbf{s}_2$. We aim to define a new reference, in which these two points has coordinates $\frac{L}{2\sqrt{3}} [-1, -1, -1]^T$ and $\frac{L}{2\sqrt{3}} [1, 1, 1]^T$, respectively. Here $L = \|\mathbf{s}_1 - \mathbf{s}_2 \|$ is the distance between two focal points of the two-camera rig.

Denote the unit direction vector connecting two focal points in the original frame and new frame as $\a = \frac{1}{L} (\mathbf{s_2} - \mathbf{s}_1)$, $\b = \frac{1}{\sqrt{3}}[1, 1, 1]^T$, respectively. The rotation can be found by axis-angle parameterization. Denote
\begin{align}
& \mathbf{v} = \frac{\a \times \b}{\| \a \times \b \|}, \\
& s = \| \a \times \b \|, \\
& c = \a \cdot \b.
\end{align}
Here $\mathbf{v}$ is the rotation axis. $s$ and $c$ are sine and cosine of the rotation angle, respectively.
According to Rodrigues' rotation formula,
the transformation rotation is
\begin{align}
\R_0 = \mathbf{I} + s [\mathbf{v}]_\times + (1-c) [\mathbf{v}]_\times^2.
\end{align}
For the middle point of the line segment between two focal points, its coordinates are $\frac{1}{2}(\mathbf{s}_1 + \mathbf{s}_2)$ and $[0,0,0]^T$ in the original frame and new frame, respectively.
Thus the translation is 
\begin{align}
\mathbf{t}_0 = -\frac{1}{2} \R_0 (\mathbf{s}_1 + \mathbf{s}_2).
\end{align}
The Matlab code is listed below.

{\scriptsize
\begin{lstlisting}
function [R, t] = find_optimal_transformation3(s1, s2)
% Input
%  s1 and s2: focal points of the perspective cameras
%  in the original frame
% Output
%  R and t satisfy: the direction of R*s1+t and R*s2+t
%  are [-1;-1;-1] and [1;1;1], respectively.
m = (s1+s2)/2;
a = s2-s1;
a = a/norm(a);
b = [1;1;1]/sqrt(3);
v = cross(a,b);
s = norm(v);
if (s>1e-10)
    v = v/s;
    SV = [0 -v(3) v(2); v(3) 0 -v(1); -v(2) v(1) 0];
    R = eye(3) + SV*s + SV^2*(1-dot(a,b));
else
    R = eye(3);
end
t = -R*m;
\end{lstlisting}
}

\subsection{Experiments}
\begin{figure}[htbp]
	\begin{center}
		\subfigure[rotation error $\varepsilon_{\mathbf{R}, \text{chordal}}$]
		{
			\includegraphics[width=0.47\linewidth]{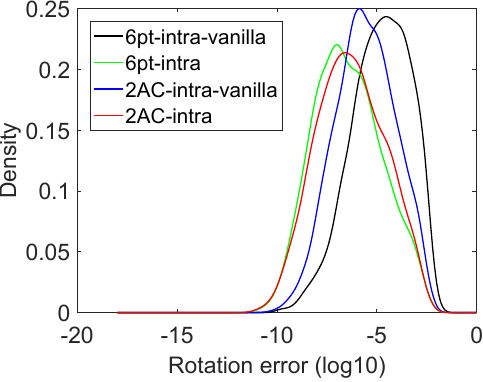}
		}
		\subfigure[translation error $\varepsilon_{\mathbf{t}}$]
		{
			\includegraphics[width=0.47\linewidth]{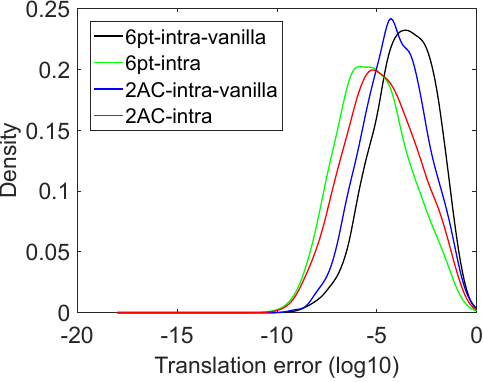}
		}
	\end{center}
	\vspace{-0.1in}
	\caption{Probability density functions over pose estimation errors on noise-free observations using reference transformation. The horizontal axis represents the $\log_{10}$ errors and the vertical axis represents the density.}
	\label{fig:Numerical3}
\end{figure}

Fig.~\ref{fig:Numerical3} reports the numerical stability of the solvers on noise-free observations. Vanilla version means using the solvers without reference transformation. 
The proposed reference transformation can reduce the numerical errors significantly. 
For \texttt{6pt+intra}, the mode of $\varepsilon_{\mathbf{R}, \text{chordal}}$ is decreased from $10^{-4.5}$ to $10^{-7}$, and the mode of $\varepsilon_{\mathbf{t}}$ is decreased from $10^{-3.5}$ to $10^{-5.9}$.
For \texttt{2AC+intra}, the mode of $\varepsilon_{\mathbf{R}, \text{chordal}}$ is decreased from $10^{-5.9}$ to $10^{-6.6}$, and the mode of $\varepsilon_{\mathbf{t}}$ is decreased from $10^{-4.3}$ to $10^{-5.2}$.

\section{Extensions of Relative Pose from Affine Correspondences}
\label{sec:extension}

The minimal solver generation framework in this paper has great potential to solve many other relative pose estimation tasks. 
In this section, we extend this idea to many other configurations.

\begin{table}[bp]
\caption{Minimal solvers for 5DOF relative pose estimation of single cameras. ``+ka'' represents rotation angle is known. ``+f'' represents that the focal length is unknown. }
\begin{center}
{
	\setlength{\tabcolsep}{2.5pt}{
	{
		\begin{tabular}{|l|c|c|c|c|}
			\hline
			configuration & \#sol & template & \#sym. & action matrix \\ \hline
			2ac+cayl+mono & $20$ & $36\times 56$ & 0 & $20\times 20$ \\ \hline
			2ac+cayl+mono+ka & $20$ & $36\times 56$ & 0 & $20\times 20$ \\ \hline
			2ac+cayl+mono+f & $60$ & $213\times 243$ & 1 & $30\times 30$ \\ \hline
			2ac+cayl+mono+ka+f & $80$ & $300\times 340$ & 1 & $40\times 40$ \\ \hline
		\end{tabular}}
	}
}
\end{center}
\label{tab:complete_solution_mono}
\end{table}

\subsection{Known Rotation Angle}

When the rotation angle is known, the DOF of rotation can be reduced by one. There are minimal solvers for single cameras~\cite{li2013point,li2020relative,martyushev2020efficient} or multi-camera systems~\cite{martyushev2020efficient} using PCs. 
Our method can be easily extended to deal with known rotation angle cases. 

The known rotation angle prior imposes one constraint for rotation. 
Suppose the known rotation angle is $\theta$. By exploiting the relation between quaternion and axis-angle representation, a constraint can be constructed. For Cayley and quaternion parameterizations, the constraints are
\begin{align}
q_x^2+q_y^2+q_z^2 - \tan^2(\theta/2) = 0,
\end{align}
and
\begin{align}
q_x^2+q_y^2+q_z^2 - \tan^2(\theta/2) q_w^2 = 0,
\end{align}
respectively.
Note that 2~ACs provide $6$~independent constraints, and the known rotation angle case has $4$ and $5$~unknowns for a single camera and a multi-camera system, respectively. To construct a minimal configuration, we remove the excess constraints. 
For a single camera, the minimal configuration becomes 1AC+1PC. 

For single camera systems, the solver has $20$ solutions and the elimination template is $36\times 56$, see Table~\ref{tab:complete_solution_mono}.
For multi-camera systems, the statistics of the resulted solvers are shown in Table~\ref{tab:complete_solution_known_angle}. The solvers have $36$ solutions.

\begin{table} [tbp]
	\centering
	\caption{Minimal solvers for relative pose estimation with a known rotation angle for multi-camera systems. ``+ka'' represents rotation angle is known. ``+f'' represents that all focal lengths are unknown. 
	}
	\label{tab:complete_solution_known_angle}
	\setlength{\tabcolsep}{2pt}
	\begin{tabular}{|l|c|c|c|c|c|c|c|c|c|} 
		\hline
		\multirow{2}{*}{\centering configuration}  & \multicolumn{2}{c|}{equations $\mathcal{E}_1$} &  \multicolumn{2}{c|}{equations $\mathcal{E}_1+\mathcal{E}_2$}  \\ 
		\cline{2-5} 
		&   \#sol  & template   &   \#sol    &   template  \\ 
		\hline
		2ac+cayl+(case $1\sim 5$)+ka  & $44$ &  $120\times 164$ & $36$ & $84\times 120$ \\ \hline
		2ac+cayl+(case $6$)+ka & $42$ & $98\times 164$ & $36$ &  $104\times 164$ \\ \hline
		2ac+cayl+(case $7$)+ka & $44$ & $120\times 164$ & $36$ & $84\times 120$ \\ \hline 
		\hline
		2ac+cayl+(case $1\sim 2$)+ka+f & $352$ & $2307\times 2659$ & $288$ & $1657\times 1945$ \\ \hline
		2ac+cayl+(case $3$)+ka+f & $352$ & $1837\times 2197$ & $288$ & $1648\times 1945$ \\ \hline
		2ac+cayl+(case $4$)+ka+f & $324$ & $2632\times 2998$ & $276$ & $2113\times 2407$ \\ \hline
		2ac+cayl+(case $5$)+ka+f & $322$ & $3165\times 3544$ & $274$ & $2648\times 2953$ \\ \hline
		2ac+cayl+(case $6$)+ka+f & $328$ & $2796\times 3282$ & $280$& $2503\times 3064$ \\ \hline
		2ac+cayl+(case $7$)+ka+f & $1$-dim & $-$ & $288$ & $2430\times 2819$  \\ \hline 
	\end{tabular}
\end{table}

\subsection{Relative Pose and Focal Length Problem}

Our method can also be extended to semi-calibrated cameras. 
To simplify the problem, we assume only focal length is unknown. For multi-camera systems, we further assume all perspective cameras have the same focal length.
Let $f$ be the unknown focal length of the perspective cameras in a multiple camera system. The intrinsic matrix becomes
\begin{align}
\K^{-1} = 
\begin{bmatrix}
w & 0 & 0 \\
0 & w & 0 \\
0 & 0 & 1
\end{bmatrix},
\end{align}
where $w = 1/f$.

Denote the $k$-th AC as $(\hat{\x}_k, \hat{\x}'_k, \A_k)$, where $\hat{\x}_k$ and $\hat{\x}'_k$ are the homogeneous coordinates in image plane for the first and the second views, respectively. 
\begin{align}
\x_k = \K^{-1} \hat{\x}_k, \quad \x'_k = \K^{-1} \hat{\x}'_k.
\end{align}

For a single camera, the three equations induced by one AC becomes
\begin{subequations}
	\begin{empheq}[left=\empheqlbrace]{align}
	& \hat{\x}_k'^T \K^{-T} \E \K^{-1} \hat{\x}_k = 0 \label{equ:image_sub1} \\
	& (\E^T \K^{-1} \hat{\x}'_k)_{(1:2)} = -\A_k^{T} (\E \K^{-1} \hat{\x}_k)_{(1:2)} \label{equ:image_sub2}
	\end{empheq}
\end{subequations}
There are 5DOF unknown pose and one unknown inverse focal length in this equation system. Thus the setting of two ACs is a minimal configuration.
There is a symmetry in this equations system: if $\{q_x, q_y, q_z, w\}$ is a solution, $\{-q_x, -q_y, q_z, -w\}$ is also a solution. In addition, an inequality $w \neq 0$ should be explicitly considered. Otherwise, there is one-dimensional extraneous roots. We consider this inequality by saturation method~\cite{larsson2017polynomial}. By the hidden variable technique, we obtain a solver with $60$ solutions. The elimination template is $213\times 243$ and the action matrix is $30\times 30$, see Table~\ref{tab:complete_solution_mono}.

For multi-camera systems, the solver can be constructed similarly. The minimal configuration is 2~ACs plus 1~PC, which has too many variants. To simplify the problem, we assume that the rotation angle is known, which makes 2AC being the minimal configuration. 
The statistics of the resulted solvers are shown in Table~\ref{tab:complete_solution_known_angle}.
It can be seen that both the solution number and elimination template are large. As a result, the solvers are neither efficient nor numerical stable. These results suggest that we should further reduce problem complexity to obtain a practical solver. For example, we may further reduce DOF of the relative pose by exploiting the known rotation axis prior or planar motion prior, or we can reduce the order of the polynomial equation system by assuming that only part of focal lengths is unknown. It is straightforward to obtain solvers for these cases using our framework.

\section{Experiments}
\subsection{Experiments for Single Cameras}

\begin{table}[tbp]
	\caption{Runtime comparison of solvers for single cameras (unit:~$\mu s$).}
	\begin{center}
	{
		{
		\begin{tabular}{lccc}
			\toprule
			method &  5pt-Nister~\cite{nister2004efficient} & 2AC-Barath~\cite{barath2018efficient} &  2AC-mono  \\
			\midrule
			mean time & 5.5 & 11.0 & 140.2 \\
			\bottomrule
		\end{tabular}}
		}
	\end{center}
	\label{tab:runtime1}
\end{table}

\begin{figure}[tbp]
	\begin{center}
		\subfigure[rotation error $\varepsilon_{\mathbf{R}, \text{chordal}}$]
		{
			\includegraphics[width=0.47\linewidth]{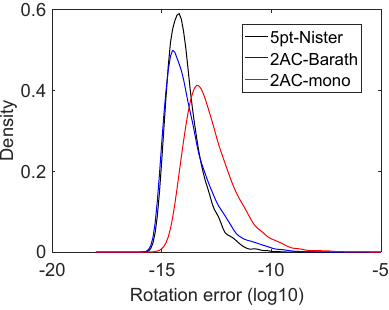}
		}
		\subfigure[translation error $\varepsilon_{\mathbf{t}}$]
		{
			\includegraphics[width=0.47\linewidth]{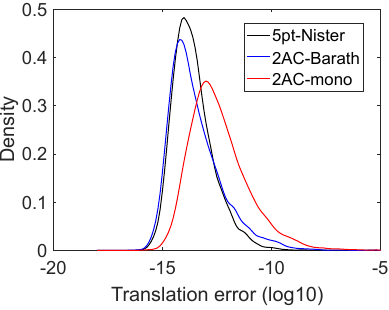}
		}
	\end{center}
	\vspace{-0.1in}
	\caption{Probability density functions over pose estimation errors on noise-free observations for single cameras. The horizontal axis represents the $\log_{10}$ errors and the vertical axis represents the density.}
	\label{fig:Numerical2}
\end{figure} 

In this paper, we obtain a minimal solver for single-cameras as a side product. 
The purpose of the following experiments is not to outperform the state-of-the-art method~\cite{barath2018efficient}. Instead, we illustrate that the obtained solver is practical.

For single cameras, the proposed solver is referred to as \texttt{2AC-mono}.
The proposed solvers are compared with state-of-the-art solvers including  \texttt{5pt-Nister}~\cite{nister2004efficient} and \texttt{2AC-Barath}~\cite{barath2018efficient}.
All the solvers are implemented in C++. The code of \texttt{5pt-Nister} is provided by the \texttt{PoseLib}\footnote{\url{https://github.com/vlarsson/PoseLib}}, and \texttt{2AC-Barath} is publicly available from the code of~\cite{barath2020making}.

Table~\ref{tab:runtime1} shows the average runtime of the solvers over $10,000$ runs for single cameras. 
Two methods based on 2AC are more efficient than \texttt{5pt-Nister} method. 
\texttt{2AC-Barath} takes $11$$\mu$s on average. 
The proposed \texttt{2AC-mono} takes about $0.14$ milliseconds, which is one order magnitude slower than \texttt{2AC-Barath}. 
Still, it is applicable for common scenarios and provides another option for this task.

Figure~\ref{fig:Numerical2} reports the numerical stability of the solvers on noise-free observations. The procedure is repeated $10,000$ times. The empirical probability density functions are plotted as the function of the $\log_{10}$ estimated errors $\varepsilon_{\mathbf{R}}$ and $\varepsilon_{\mathbf{t}}$. 
It can be seen that \texttt{2AC-Barath} and \texttt{5pt-Nister} have nearly the same numerical stability. The proposed \texttt{2AC-mono} has worse numerical stability than them. Still, it is applicable for practical applications since both the rotation error and translation error are below $1\times 10^{-8}$, and the modes of the rotation error and translation error are about $1\times 10^{-13}$.


%


\end{document}